\newtheorem{lemma}{Lemma}
\DeclarePairedDelimiter{\abs}{\lvert}{\rvert}
\newcolumntype{L}[1]{>{\hsize=#1\hsize\raggedright\arraybackslash}X}%
\newcolumntype{R}[1]{>{\hsize=#1\hsize\raggedleft\arraybackslash}X}%
\newcolumntype{C}[1]{>{\hsize=#1\hsize\centering\arraybackslash}X}%
\newcommand\RotText[1]{\rotatebox{90}{\parbox{2cm}{\centering#1}}}
\begin{document}
	\newcommand{\bc}{\mathbf{c}}
	\newcommand{\bu}{\mathbf{u}}
	\newcommand{\br}{\mathbf{r}}
	\newcommand{\bx}{\mathbf{x}}
	\newcommand{\by}{\mathbf{y}}
	\newcommand{\bz}{\mathbf{z}}
	\newcommand{\bR}{\mathbf{R}}
	\newcommand{\bK}{\mathbf{K}}
	\newcommand{\bZ}{\mathbf{Z}}
	\newcommand{\bH}{\mathbf{H}}
	\newcommand{\bQ}{\mathbf{Q}}
	\newcommand{\rbox}{\mathbb{B}}
	\newcommand{\vel}{\bm{\omega}}
	\newcommand{\upbnd}{\overline{C}}
	\newcommand{\event}{\bm{e}}
	\newcommand{\cE}{\mathcal{E}} 
	\newcommand{\cF}{\mathcal{F}}
	\newcommand{\cD}{\mathcal{D}}
	\newcommand{\cL}{\mathcal{L}}
	\newcommand{\cG}{\mathcal{G}}
	\newcommand{\cM}{\mathcal{M}}
	\newcommand{\bM}{\mathbf{M}}
	\newcommand{\cT}{\mathcal{T}}
	\newcommand{\bT}{\mathbf{T}}
	\newcommand{\cV}{\mathcal{V}}
	\newcommand{\au}{\vec{\mathbf{u}}}
	\newcommand{\bA}{\mathbf{A}}
	\newcommand{\algorithmicbreak}{\textbf{break}}
	\newcommand{\BREAK}{\STATE \algorithmicbreak}

	\newtheorem{thm}{Theorem}[section]
	\newtheorem{lem}[thm]{Lemma}
	\newtheorem{prop}[thm]{Proposition}
	\newtheorem{cor}{Corollary}
	\newtheorem{conj}{Conjecture}[section]
	\newtheorem{defn}{Definition}[section]
	\newtheorem{exmp}{Example}[section]
	\newtheorem{rem}{Remark}
	\newtheorem{prob}{Problem}
	\newtheorem{theorem}{Theorem}
	
	\newcommand{\norm}[1]{\left\lVert#1\right\rVert}

	\graphicspath{{figure/}}
	
	\title{Globally Optimal Contrast Maximisation for Event-based Motion Estimation}

	\author{Daqi Liu~~~~~~~Álvaro Parra~~~~~~~Tat-Jun Chin\\
		School of Computer Science, The University of Adelaide\\
		{\tt\small \{daqi.liu, alvaro.parrabustos, tat-jun.chin\}@adelaide.edu.au}
	}

	\maketitle

	\begin{abstract}
		Contrast maximisation estimates the motion captured in an event stream by maximising the sharpness of the motion-compensated event image. To carry out contrast maximisation, many previous works employ iterative optimisation algorithms, such as conjugate gradient, which require good initialisation to avoid converging to bad local minima. To alleviate this weakness, we propose a new globally optimal event-based motion estimation algorithm. Based on branch-and-bound (BnB), our method solves rotational (3DoF) motion estimation on event streams, which supports practical applications such as video stabilisation and attitude estimation. Underpinning our method are novel bounding functions for contrast maximisation, whose theoretical validity is rigorously established. We show concrete examples from public datasets where globally optimal solutions are vital to the success of contrast maximisation. Despite its exact nature, our algorithm is currently able to process a $50,000$-event input in $\approx 300$ seconds (a locally optimal solver takes $\approx 30$ seconds on the same input). The potential for GPU acceleration will also be discussed.
	\end{abstract}
	
	\section{Introduction}
	
	By asynchronously detecting brightness changes, event cameras offer a fundamentally different way to detect and characterise physical motion. Currently, active research is being conducted to employ event cameras in many areas, such as robotics/UAVs~\cite{delbruck2013robotic,mueggler2014event}, autonomous driving\cite{weikersdorfer2014event,mueggler2017event,vidal2018ultimate}, and spacecraft navigation~\cite{chin2019star,chin2019event}. While the utility of event cameras extends beyond motion perception, e.g., object recognition and tracking~\cite{ramesh2019dart,stoffregen2019event}, the focus of our work is on estimating visual motion using event cameras.
	
	Due to the different nature of the data, new approaches are required to extract motion from event streams. A recent successful framework is contrast maximisation (CM)~\cite{gallego2018unifying}. Given an event stream, CM aims to find the motion parameters that yield the sharpest motion-compensated event image; see Fig.~\ref{fig:event_image}. Intuitively, the correct motion parameters will align corresponding events, thereby producing an image with high contrast. We formally define CM below.
	
	\begin{figure}[t]\centering
		\begin{center}
			\subfloat[Event stream (w/o polarity).]{\label{fig:stream}\includegraphics[width=0.48\linewidth]{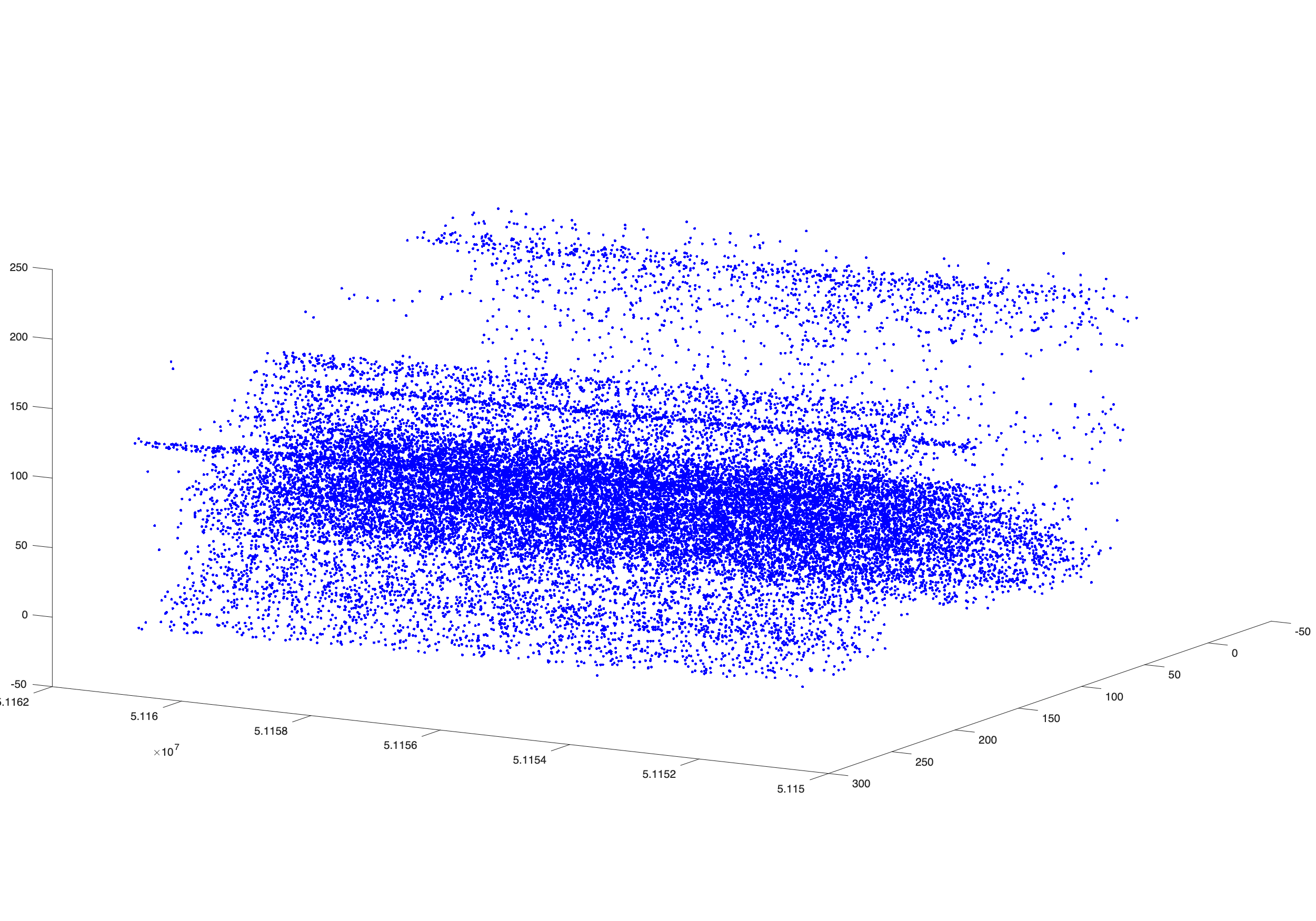}}
			\subfloat[Contrast = $0.9993$ (identity).]{\label{fig:unwarp}\includegraphics[width=0.48\linewidth]{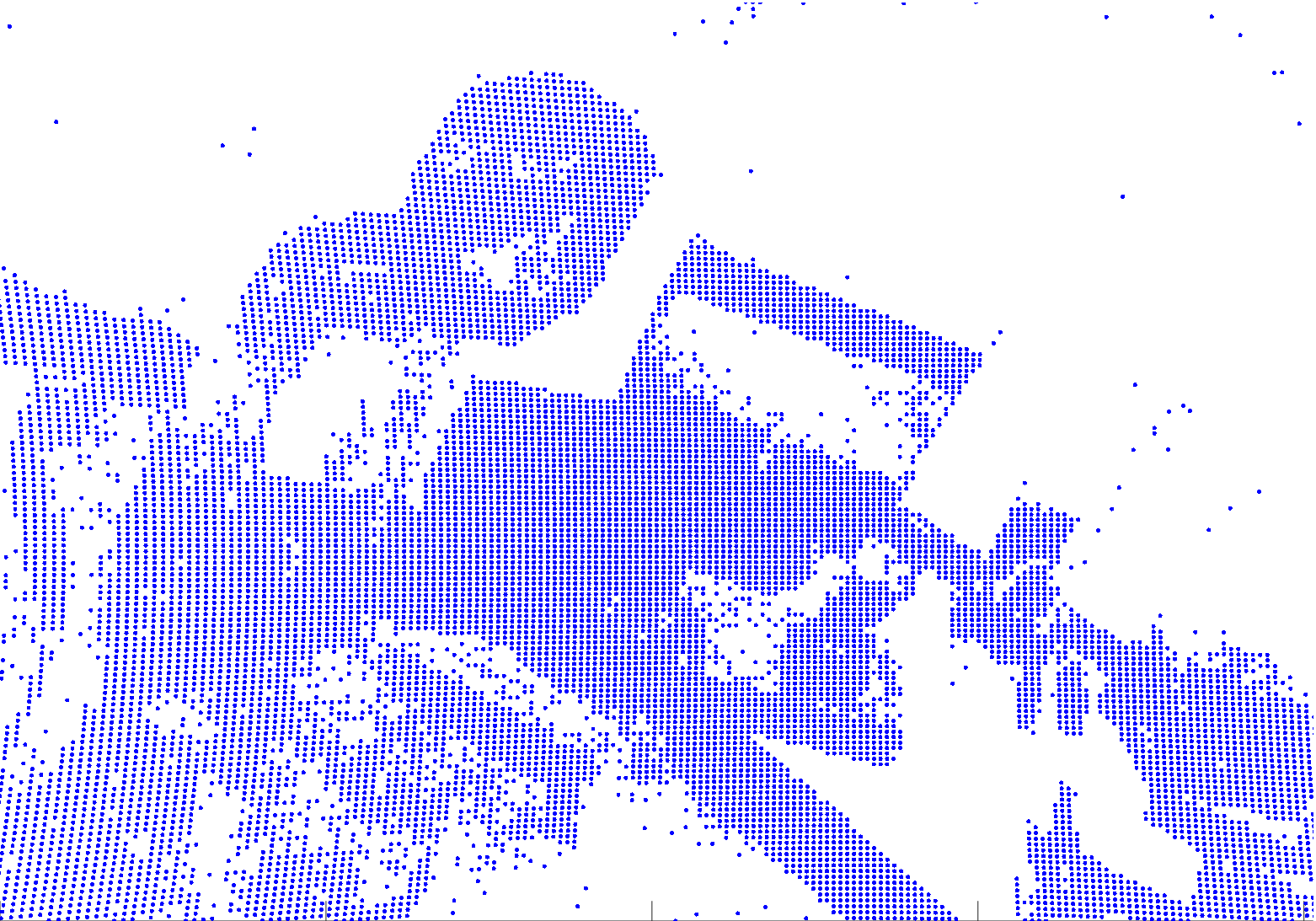}}\\
			\vspace{-1em}
			\subfloat[Contrast = $1.0103$ (local).]{\label{fig:warp_local}\includegraphics[width=0.48\linewidth]{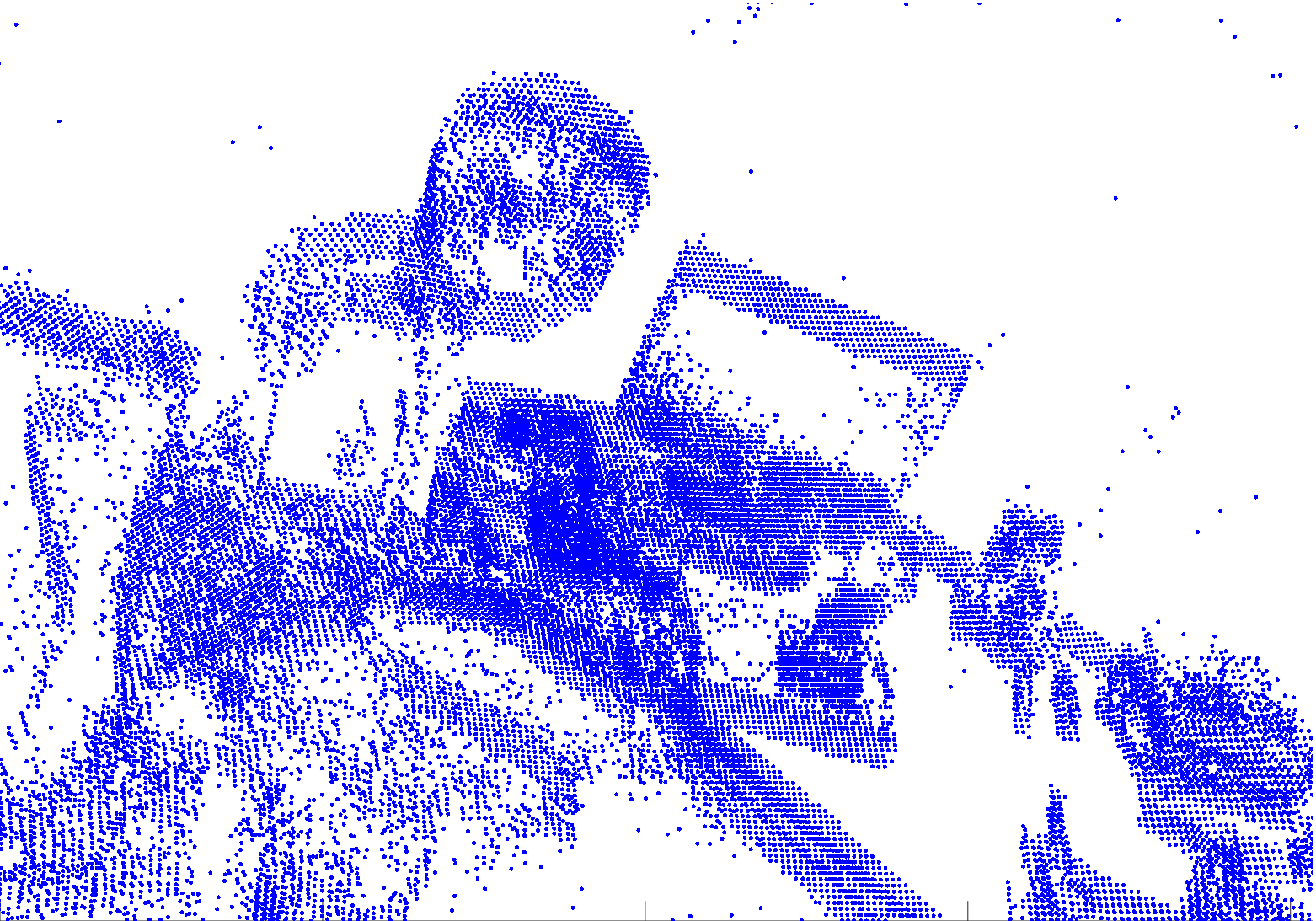}}
			\subfloat[Contrast = $1.9748$ (global).]{\label{fig:warp_global}\includegraphics[width=0.48\linewidth]{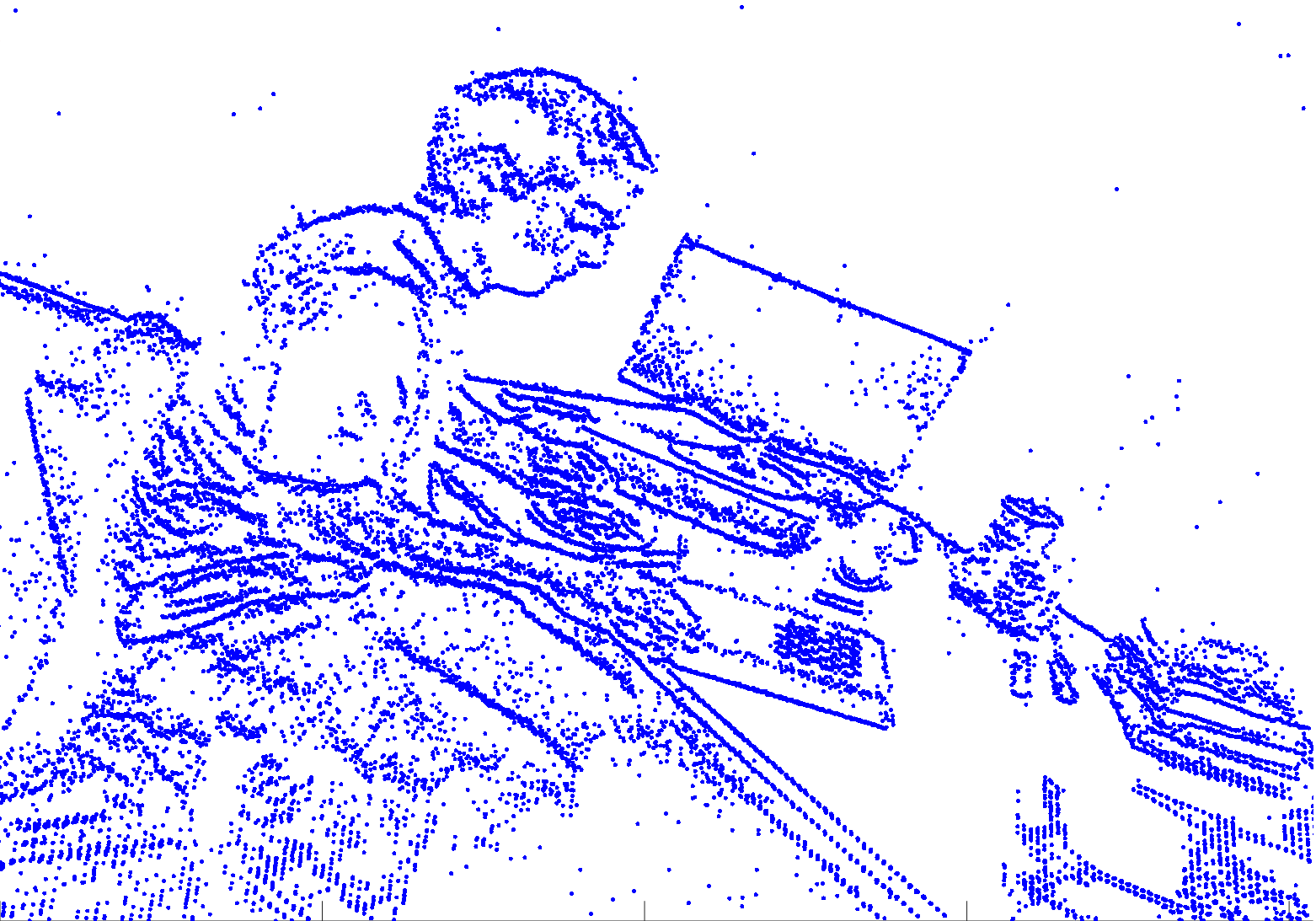}}\\
			\vspace{0.5em}
			\caption{(a) $10$ ms event stream under rotational motion~\cite{gallego2017accurate}. Since event polarity is not used in our work, the events are plotted in the same colour. (b) Event image without motion compensation (identity transformation). (c)(d) Event images produced with locally and globally optimal contrast maximisation.}
			\label{fig:event_image}
		\end{center}
	\end{figure}
	
	\vspace{-1em}
	\paragraph{Event image}
	
	Let $\cE = \{\event_i\}_{i}^N$ be an event stream recorded over time duration $\cT = [0,t_{\max}]$. Each event $\event_i = (\bu_i,t_i,p_i)$ contains an image position $\bu_i$, time stamp $t_i \in \cT$, and polarity $p_i\in\{-1,+1\}$. We assume $\cE$ was produced under camera motion $\cM$ over a 3D scene, thus each $\event_i$ is associated with a scene point that triggered the event.

	We parameterise $\cM$ by a vector $\vel \in \Omega$, and let $X = \{\bx_j\}_{j=1}^{P}$ be the centre coordinates of the pixels in the image plane of the event sensor. Under CM, the event image $H_c$ is defined as a function of $\vel$, and the intensity at pixel $\bx_j$ is
	\begin{equation}\label{eq:rt}
	H_c(\bx_j;\vel) = \sum_{i=1}^N\bm{\delta}(\bx_j-f(\bu_i, t_i ;\, \vel)),
	\end{equation}
	
	where $\bm{\delta}$ is a kernel function (e.g., Gaussian).  Following~\cite[Sec.~2.1]{gallego2018unifying}, we do not use event polarities in~\eqref{eq:rt}. $H_c$ is regarded to be captured at time $0$, and the function
	\begin{equation}\label{eq:warp}
	\bu'_i = f(\bu_i, t_i;\, \vel)
	\end{equation}
	warps $\bu_i$ to $\bu'_i$ in $H_c$ by ``undoing" the motion $\cM$ between time $0$ and $t_i$. Intuitively, $\bu'_i$ is the image position of the 3D scene point that triggered $\event_i$, if it was observed at time $0$.
	
	In practice, the region of support of the kernel $\bm{\delta}$ in~\eqref{eq:rt} is small w.r.t.~image dimensions, e.g., Gaussian kernels with bandwidth $\sigma = 1$ pixel were used in~\cite[Sec.~2]{gallego2018unifying}. This motivates the usage of ``discrete" event images
	\begin{equation}\label{eq:rt_discrete}
	H_d(\bx_j;\vel) = \sum_{i=1}^N \mathbb{I}(f(\bu_i, t_i ;\, \vel)~\text{lies in pixel}~\bx_j),
	\end{equation}
	where $\mathbb{I}$ returns $1$ if the input predicate is true, and $0$ otherwise. As we will show later, conducting CM using $H_c$ (with small bandwidth) and $H_d$ yields almost identical results.
	
	\vspace{-1em}
	\paragraph{Contrast maximisation}
	
	The \emph{contrast} of an event image $H$ (continuous or discrete) is the variance of its pixel values. Since $H$ depends on $\vel$, the contrast is also a function of $\vel$
	\begin{equation}\label{eq:c}
	C(\vel) =\dfrac{1}{P}\sum_{j=1}^P (H(\bx_j;\vel)-\mu(\vel))^2,
	\end{equation}
	where $\mu(\vel)$ is the mean intensity
	\begin{equation}
	\mu(\vel)=\dfrac{1}{P}\sum_{j=1}^P H(\bx_j;\vel).
	\end{equation}
	CM~\cite{gallego2018unifying} estimates $\cM$ by maximising the contrast of $H$, i.e.,
	\begin{align}\label{eq:cm}
	\max_{\vel \in \Omega}~~C(\vel).
	\end{align}
	The intuition is that the correct $\vel$ will allow $\cM$ to align events that correspond to the same scene points in $H$, thus leading to a sharp or high-contrast event image; see Fig.~\ref{fig:event_image}.
	
	\vspace{-1em}
	\paragraph{Global versus local solutions} By \emph{globally optimal} (or ``global") solution to CM~\eqref{eq:cm}, we mean $\vel^* \in \Omega$ such that
	\begin{align}
	C(\vel^*) \ge C(\vel) \;\; \forall \vel \in \Omega.
	\end{align}
	A solution $\hat{\vel}$ is \emph{approximate} if $C(\hat{\vel}) < C(\vel^*)$. Also, a solution is \emph{locally optimal} (or ``local") if it is the maximiser of its neighbourhood~\cite[Chap.~2]{nocedal_and_wright}. All global solutions are locally optimal, but the converse is not true.
	
	\subsection{Previous works}
	
	Previous works on CM (e.g.,~\cite{gallego2018unifying,gallego2019focus,stoffregen2019event,Stoffregen_2019_CVPR}) apply nonlinear optimisation (e.g., conjugate gradient) to solve~\eqref{eq:cm}. Given an initial solution $\vel^{(0)}$, the solution is successively updated until convergence to a locally optimal solution. In practice, if the local solution is a bad approximate solution, there can be significant differences in its quality compared to the global solution; see Fig.~\ref{fig:event_image}. This can occur when $\vel^{(0)}$ is too distant from good solutions, or $C(\vel)$ is too nonconcave (e.g., when $\bm{\delta}$ has a very small bandwidth). Thus, algorithms that can find $\vel^*$ are desirable.
	
	Recent improvements to CM include modifying the objective function to better suit the targeted settings~\cite{gallego2019focus,Stoffregen_2019_CVPR}. However, the optimisation work horse remains locally optimal methods. Other frameworks for event processing~\cite{censi2014low,kim2016real,kueng2016low,gallego2017event,kim2008simultaneous,chin2019event} conduct filtering, Hough transform, or specialised optimisation schemes; these are generally less flexible than CM~\cite{gallego2018unifying}. There is also active research in applying deep learning to event data~\cite{zhu2018ev,zhu2019unsupervised,ye2019learning,sekikawa2019eventnet}, which require a separate training phase on large datasets.
	
	\vspace{-1em}
	\paragraph{Contributions}
	We focus on estimating rotational motion from events, which is useful for several applications, e.g., video stabilisation~\cite{gallego2017accurate} and attitude estimation~\cite{chin2019star}.
	
	Specifically, we propose a BnB method for \emph{globally optimal} CM for rotation estimation. Unlike previous CM techniques, our algorithm does not require external initialisations $\vel^{(0)}$, and can guarantee finding the global solution $\vel^*$ to~\eqref{eq:cm}. Our core contributions are novel bounding functions for CM, whose theoretical validity are established. As we will show in Sec.~\ref{sec:results}, while local methods generally produce acceptable results~\cite{gallego2017accurate,gallego2018unifying}, they often fail during periods with fast rotational motions. On the other hand, our global method always returns accurate results.

	
	
	
	\section{Rotation estimation from events}
	If duration $\cT$ is small (e.g., $t_{\max} = 10ms$), a fixed axis of rotation and constant angular velocity can be assumed for $\cM$~\cite{gallego2018unifying}. Following~\cite[Sec.~3]{gallego2018unifying}, $\cM$ can be parametrised as a $3$-vector $\vel$, where the direction of $\vel$ is the axis of rotation, and the length $\| \vel \|_2$ of $\vel$ is the angular rate of change. Between time $0$ and $t$, the rotation undergone is
	\begin{equation}
	\bR(t; \vel) = \text{exp}([\vel \, t]_\times),
	\end{equation}
	where $\vel t$ is the axis-angle representation of the rotation, $[\vel t]_\times$ is the skew symmetric form of $\vel t$, and $\text{exp}$ is the exponential map (see~\cite{wiki:exp} for details).
	
	Let $\bK$ be the $3 \times 3$ intrinsic matrix of the event camera ($\bK$ is known after calibration~\cite{zhang2000flexible,EventResource}). The warp~\eqref{eq:warp} is thus
	\begin{equation}\label{eq:motionmodel}
	f(\bu_i,t_i;\, \vel) = \frac{\bK^{(1:2)} \bR(t_i; \vel) \tilde{\bu}_i}{\bK^{(3)} \bR(t_i; \vel) \tilde{\bu}_i},
	\end{equation}
	where $\tilde{\bu}_i = [\bu_i^T~1]^T$ is the homogeneous version of $\bu_i$, and $\bK^{(1:2)}$ and $\bK^{(3)}$ are respectively the first-two rows and third row of $\bK$. Intuitively,~\eqref{eq:motionmodel} rotates the ray that passes through $\bu_i$ using $\bR(t_i; \vel)$, then projects the rotated ray onto $H$.
	
	Following~\cite[Sec.~3]{gallego2018unifying}, we also assume a known maximum angular rate $r_{\max}$. The domain is thus an  $r_{\max}$-ball
	\begin{align}
	\Omega = \{ \vel \in \mathbb{R}^3 \mid \| \vel \|_2 \le r_{\max} \},
	\end{align}
	and our problem reduces to maximising $C(\vel)$ over this ball, based on the rotational motion model~\eqref{eq:motionmodel}.

	
	\subsection{Main algorithm}
	
	Algorithm~\ref{al:bnb} summarises our BnB algorithm to achieve globally optimal CM for rotation estimation. Starting from the tightest bounding cube $\rbox$ on the $r_{\max}$-ball $\Omega$ (the initial $\rbox$ is thus of size $(2r_{\max})^3$), the algorithm recursively subdivides $\rbox$ and prunes the subcubes until the global solution is found. A lower bound $\underline{C}$ and upper bound $\overline{C}(\rbox)$ are used to prune each $\rbox$. When the difference between the bounds is smaller than $\tau$, the algorithm terminates with $\hat{\vel}$ being the global solution $\vel^*$ (up to error $\tau$, which can be chosen to be arbitrarily small). See~\cite{horst90,hartley2009global} for details of BnB.
	
	As alluded to above, our core contributions are novel and effective bounding functions for CM using BnB. We describe our bounding functions in the next section.
	
	\begin{algorithm}
		\caption{BnB for rotation estimation from events.}\label{al:bnb}
		\begin{algorithmic}[1]
			\REQUIRE Event stream $\cE = \{ \event_i \}^{N}_{i=1}$, maximum angular rate of change $r_{\max}$, convergence threshold $\tau$.
			\STATE $q \leftarrow$ Initialise priority queue.
			\STATE $\rbox \leftarrow$ Cube in $\mathbb{R}^3$ of size $(2r_{\max})^3$ centred at origin.
			\STATE $\vel_c \leftarrow$ Centre of $\rbox$.
			\STATE $\hat{\vel} \leftarrow \vel_c$.
			\STATE Insert $\rbox$ into $q$ with priority $\overline{C}(\rbox)$.
			\WHILE{$q$ is not empty}
			\STATE $\rbox \leftarrow$ Dequeue top item from $q$.
			\STATE If $\overline{C}(\rbox) - C(\hat{\vel}) \leq \tau$, then terminate.
			\STATE $\vel_c \leftarrow$ Centre of $\rbox$.
			\STATE If $C(\vel_c) \geq C(\hat{\vel})$, then $\hat{\vel} \leftarrow \vel_c$.
			\STATE Uniformly subdivide $\rbox$ into $8$ subcubes $\rbox_1,\ldots, \rbox_8$.
			\FOR{$i = 1,\cdots,8$}
			\IF{$\overline{C}(\rbox_i) \geq C(\hat{\vel})$}
			\STATE Insert $\rbox_i$ into $q$ with priority $\overline{C}(\rbox_i)$.
			\ENDIF
			\ENDFOR
			\ENDWHILE
			\RETURN $\hat{\vel}$ as $\vel^*$.
		\end{algorithmic}
	\end{algorithm}
	
	\begin{figure*}[t]\centering
		\centering
		\subfloat[]{\includegraphics[height=8.5em]{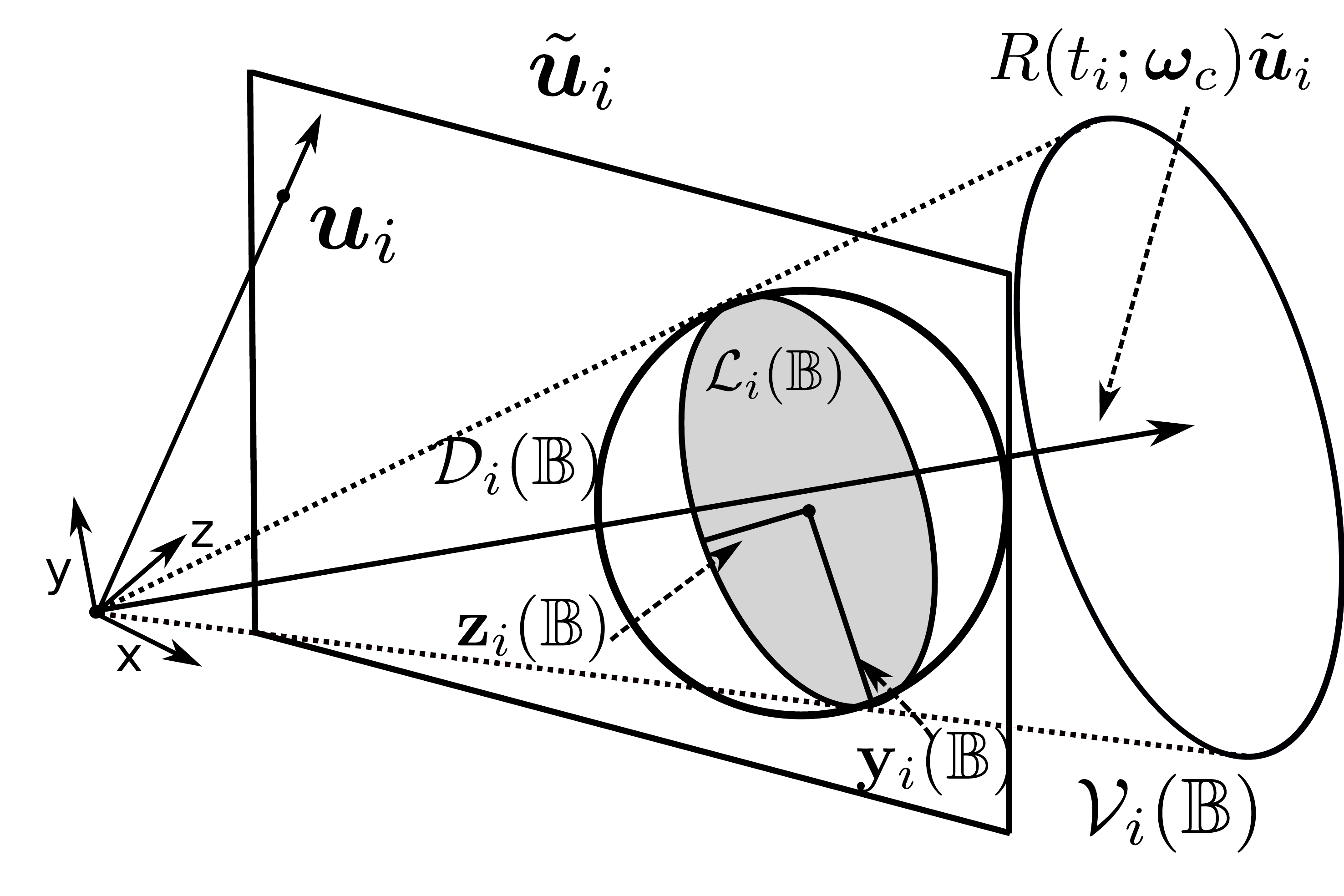}\label{fig:cone}} \hfill
		\subfloat[]{\includegraphics[height=8.5em]{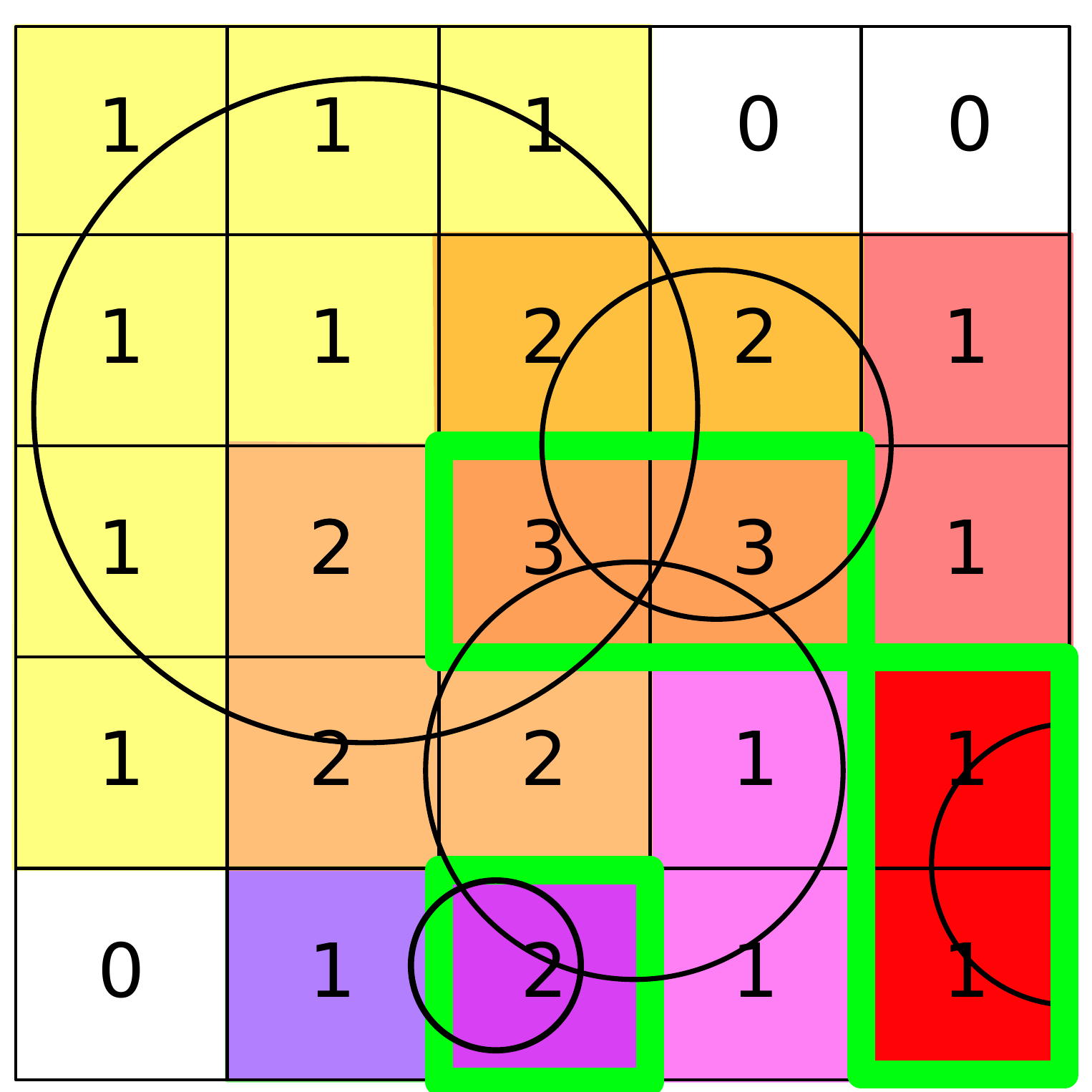}\label{fig:CC}} \hfill
		\subfloat[]{\includegraphics[height=8em]{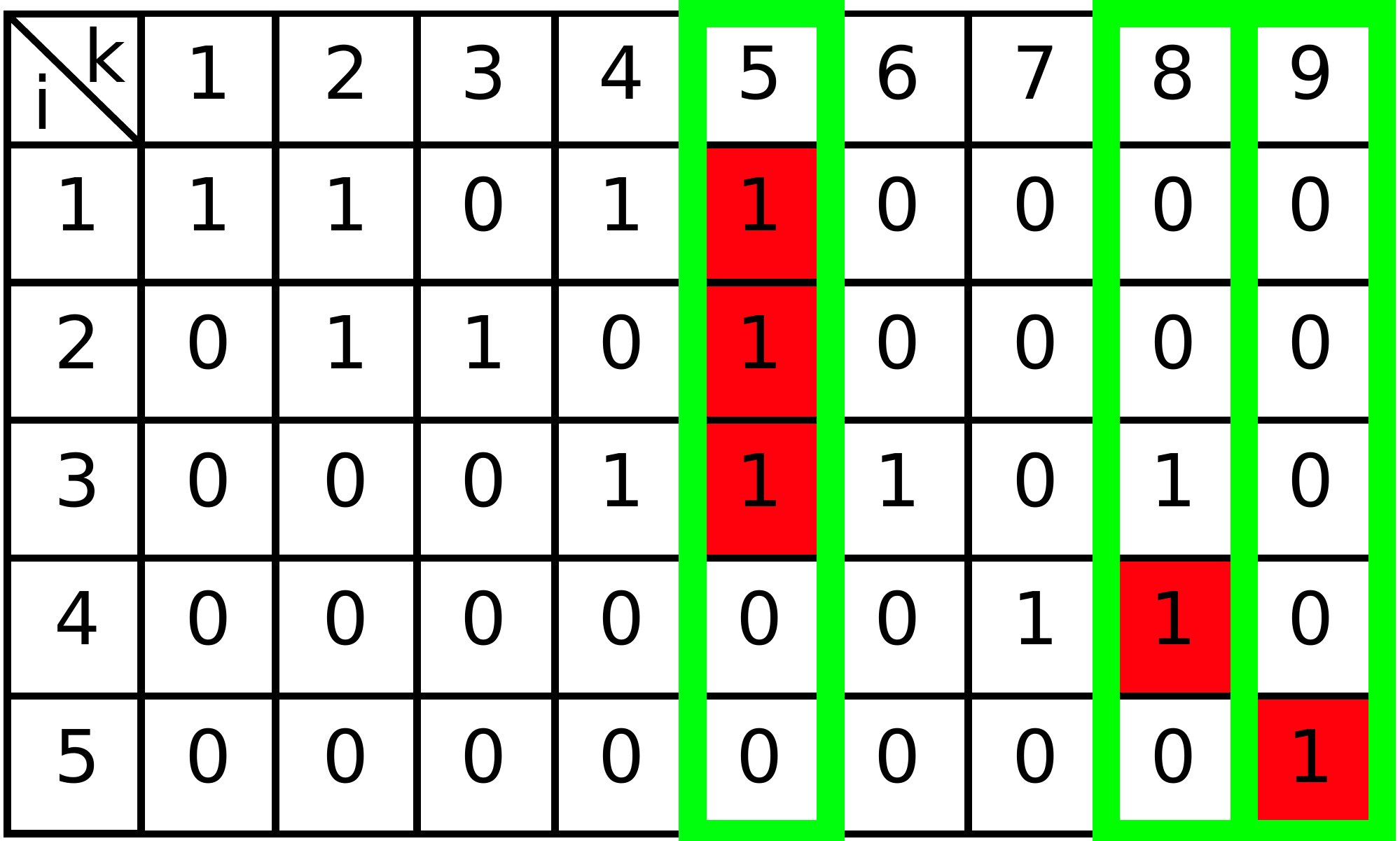}\label{fig:qip}} \hfill
		\subfloat[]{\includegraphics[height=8em]{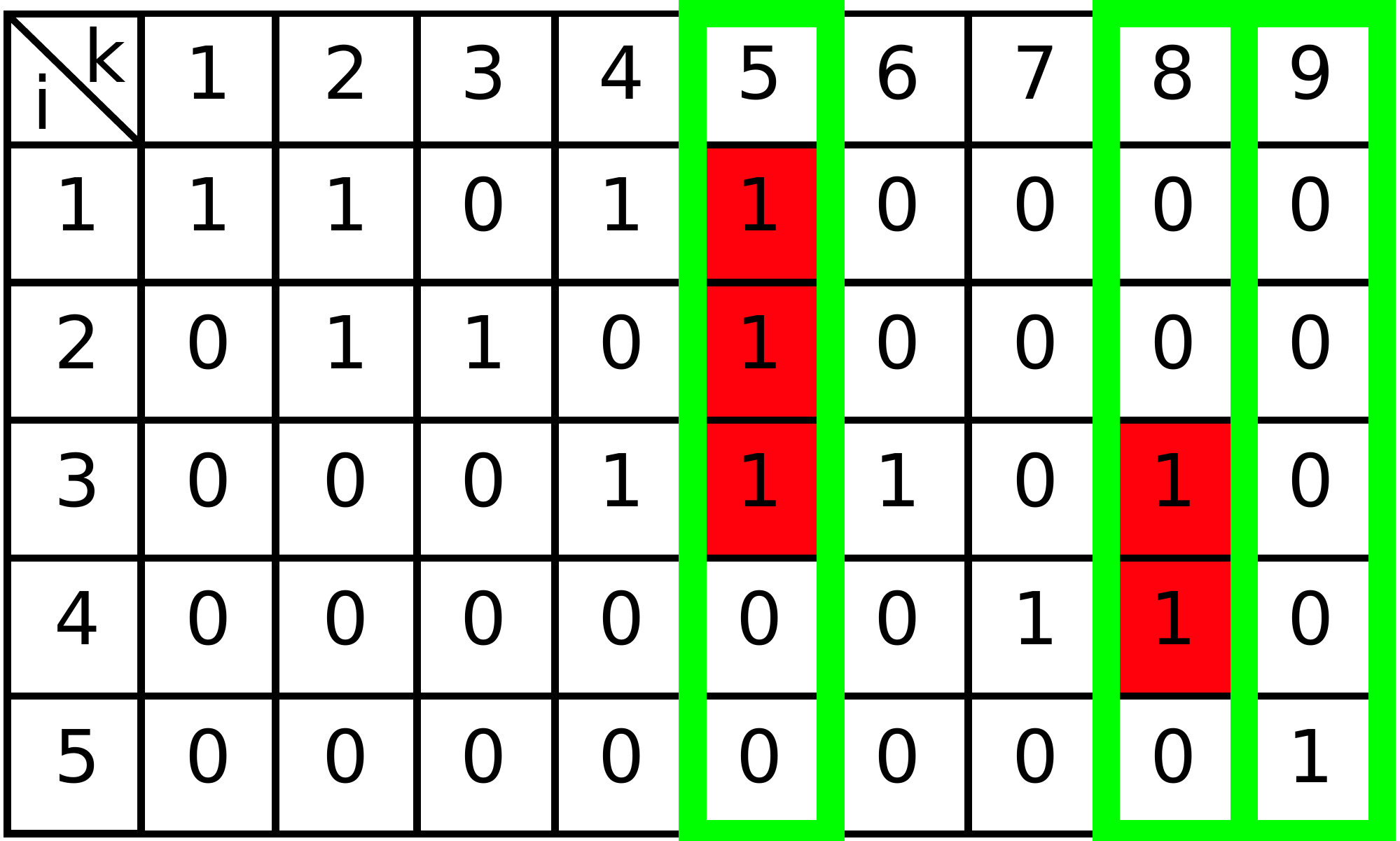}\label{fig:rqip}}    
		\caption{(a) The vector $R(t_i,\vel)\tilde{\bu}_i$ for all $\vel \in \rbox$ lies in the cone $\cV_i(\rbox)$, and the projection of $\cV_i(\rbox)$ onto the image plane is an elliptical region $\cL_i(\rbox)$. (b) Example with $5 \times 5$ pixel image and $N = 5$ events; there are thus $N = 5$ discs $\{ \cD_i(\rbox) \}_{i=1}^5$ on the image. The value in a pixel is the number of discs that intersect the pixel. Pixels with the same color are from the same connected component (CC); there are $9$ CCs in this example. (c)(d) Incidence matrix $\bM$ corresponding to the example in (b). The solution of~\ref{eq:qip} and~\ref{eq:rqip} are indicated respectively in (c) and (d), where entries $(i,k)$ of $\bZ$ that are set to $1$ are marked in red. The dominant columns of $\bM$ are outlined in green, and their corresponding CCs are also outlined in green in (b).}
	\end{figure*}
	
	\section{Bounds for contrast maximisation}\label{sec:bounding}
	
	To search for the maximum of $C(\vel)$ using BnB, a lower and upper bound on the objective are required.
	
	The lower bound $\underline{C}$ must satisfy the condition
	\begin{align}
	\underline{C} \le \max_{\vel \in \Omega}~C(\vel),
	\end{align}
	which is trivially achieved by any (suboptimal) solution. In Algorithm~\ref{al:bnb}, the current best solution $\hat{\vel}$ is used to provide $\underline{C}$, which is iteratively raised as the search progresses.
	
	The upper bound $\upbnd(\rbox)$ is defined over a region (a subcube) $\rbox$ of $\Omega$, and must satisfy the condition
	\begin{equation}\tag{A1}\label{eq:cond1}
	\upbnd(\rbox) \ge \max_{\vel \in \rbox}~C(\vel).
	\end{equation}
	Also, as $\rbox$ collapses to a single point $\vel$, $\upbnd(\rbox)$ should equate to $C(\vel)$; more formally, 
	\begin{equation}\tag{A2}\label{eq:cond2}
	\upbnd(\rbox) \to C(\vel)~~\text{when}~~\rbox \to \vel.
	\end{equation}
	See~\cite{horst90} for the rationale of the above conditions for BnB.
	
	
	Deriving the upper bound is a more involved process. Our starting point is to rewrite~\eqref{eq:c} as
	\begin{equation}\label{eq:vec_contrast}
	C(\vel) = \dfrac{1}{P}\sum_{j=1}^P H(\bx_j;\vel)^2 - \mu(\vel)^2,
	\end{equation}
	which motivates a bound based on two components
	\begin{equation}\label{eq:upbnd}
	\upbnd(\rbox) := \dfrac{1}{P}\overline{S}(\rbox) - \underline{\mu}(\rbox)^2,
	\end{equation}
	where $\overline{S}(\rbox)$ is an upper bound
	\begin{equation}
	\label{eq:SOSB}
	\overline{S}(\rbox) \geq \max_{\vel\in \rbox}~\sum_{j=1}^P H(\bx_j;\vel)^2
	\end{equation}
	on the ``sum of squares (SoS)" component, and
	\begin{equation}\label{eq:muunder}
	\underline{\mu}(\rbox) \le  \min_{\vel\in \rbox}~\mu(\vel)
	\end{equation}
	is a lower bound of the mean pixel value. Given~\eqref{eq:SOSB} and~\eqref{eq:muunder}, then~\eqref{eq:upbnd} satisfies~\ref{eq:cond1}. If equality holds in~\eqref{eq:SOSB} and~\eqref{eq:muunder} when $\rbox$ is singleton, then~\eqref{eq:upbnd} also satisfies~\ref{eq:cond2}.
	
	In Secs.~\ref{sec:method1} and~\ref{sec:method2}, we develop $\overline{S}(\rbox)$ for continuous and discrete event images, before deriving $\underline{\mu}(\rbox)$ in Sec.~\ref{sec:meanbound}.
	
	\subsection{SoS bound for continuous event image}\label{sec:method1}
	
	For the continuous event image $H_c$~\eqref{eq:rt}, our SoS upper bound (denoted $\overline{S}_c$) is defined as
	\begin{equation}
	\overline{S}_c(\rbox) := \sum_{j=1}^P \overline{H}_c(\bx_j;\mathbb{B})^2,
	\end{equation}
	where $\overline{H}_c(\bx_j;\rbox)$ is an upper bound on the value of $H_c$ at $\bx_j$. To obtain $\overline{H}_c(\bx_j;\rbox)$, we bound the position
	\begin{align}\label{eq:uncertainty}
	\{ \bu^\prime_i = f(\bu_i,t_i;\, \vel) \mid \vel~\in \rbox \}    
	\end{align}
	of each warped event, under all possible $\vel \in \rbox$ for the warping function~\eqref{eq:motionmodel}. To this end, let $\vel_\bc$ be the centre of a cube $\rbox$, and $\vel_\mathbf{p}$ and $\vel_\mathbf{q}$ be opposite corners of $\rbox$. Define
	\begin{align}
	\alpha_i(\rbox) := 0.5\|\vel_\mathbf{p}t_i-\vel_\mathbf{q}t_i\|_2.
	\end{align}
	Then, the following inequality can be established
	\begin{equation}\label{eq:axisanglebound}
	\angle (\bR(t_i ; \vel_\bc) \tilde{\bu}_i, \bR(t_i ; \vel)\tilde{\bu}_i) \leq \alpha_i(\rbox),
	\end{equation}
	which is an extension of ~\cite[Lemma 3.2]{hartley2009global}. Intuitively,~\eqref{eq:axisanglebound} states that the rotated vector $\bR(t_i ; \vel)\tilde{\bu}_i$ under all $\vel \in \rbox$ must lie within the cone
	\begin{align}\label{eq:cones}
	\cV_i(\rbox) := \left\{ \tilde{\bu} \in \mathbb{R}^3 \mid \angle( \bR(t_i ; \vel_\bc) \tilde{\bu}_i, \tilde{\bu}   ) \le \alpha_i(\rbox) \right\}.
	\end{align}
	Fig.~\ref{fig:cone} illustrates the cone $\cV_i(\rbox)$. Now, the pinhole projection of all the rays in $\cV_i(\rbox)$ yields the 2D region
	\begin{align}
	\cL_i(\rbox) = \left\{ \bx = \frac{\bK^{(1:2)}\tilde{\bu}}{\bK^{(3)}\tilde{\bu}} \,\middle\vert\,  \tilde{\bu} \in \cV_i(\rbox) \right\},
	\end{align}
	which is an elliptical region~\cite[Chap.~2]{mumford1995algebraic}; see Fig.~\ref{fig:cone}. Further, the centre $\bc_i(\rbox)$, semi-major axis $\by_i(\rbox)$ and semi-minor axis $\bz_i(\rbox)$ of
	$\cL_i(\rbox)$ can be analytically determined (see the supplementary material). We further define
	\begin{align}\label{eq:discs}
	\cD_i(\rbox) = \left\{ \bx \in \mathbb{R}^2 \,\middle\vert\, \| \bx - \bc_i(\rbox) \| \le \|\by_i(\rbox)\| \right\},
	\end{align}
	i.e., the smallest disc that contains $\cL_i(\rbox)$.
	
	By construction, $\cD_i(\rbox)$ fully contains the set of positions that $\bu_i^\prime$ can take for all $\vel \in \rbox$, i.e., the set~\eqref{eq:uncertainty}. We thus define the upper bound on the pixel values of $H_c$ as
	\begin{equation}\label{eq:pixelvalupbnd}
	\overline{H}_c(\bx_j;\rbox) = \sum_{i=1}^N\bm{\delta}\left(\max\left(\|\bx_j-\bc_i(\rbox)\|-\|\by_i(\rbox)\|,0\right)\right).
	\end{equation}
	Intuitively, we take the distance of $\bx_j$ to the boundary of $\cD_i(\rbox)$ to calculate the intensity, and if $\bx_j$ is within the disc then the distance is zero.
	
	\begin{lemma}\label{lem:pix_upbnd}
		\begin{align}
		\overline{H}_c(\bx_j;\rbox) \ge \max_{\vel\in \mathbb{B}}~H_c(\bx_j;\vel)
		\end{align}
		with equality achieved if $\rbox$ is singleton, i.e., $\rbox = \{ \vel \}$.
	\end{lemma}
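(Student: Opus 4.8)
The plan is to exploit the containment established in the preceding construction: for every $\vel \in \rbox$ the warped position $\bu'_i = f(\bu_i, t_i;\, \vel)$ lies inside the disc $\cD_i(\rbox)$ of centre $\bc_i(\rbox)$ and radius $\|\by_i(\rbox)\|$. The argument would proceed in three steps. First I would decouple the $N$ events using the elementary inequality that the maximum of a sum is at most the sum of the maxima,
\begin{equation}
\max_{\vel \in \rbox} \sum_{i=1}^N \bm{\delta}(\bx_j - f(\bu_i, t_i;\, \vel)) \le \sum_{i=1}^N \max_{\vel \in \rbox} \bm{\delta}(\bx_j - f(\bu_i, t_i;\, \vel)),
\end{equation}
so that it suffices to bound each summand separately.

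For the per-event bound, I would use that the kernel $\bm{\delta}$ (e.g.\ Gaussian) is radially symmetric and non-increasing in the norm of its argument, i.e.\ $\bm{\delta}(\bx) = \phi(\|\bx\|)$ with $\phi$ non-increasing; this property is precisely what lets us write $\bm{\delta}$ of a scalar in~\eqref{eq:pixelvalupbnd}. Under it, maximising $\bm{\delta}(\bx_j - \bu'_i)$ over $\vel \in \rbox$ amounts to minimising the distance $\|\bx_j - \bu'_i\|$, so that
\begin{equation}
\max_{\vel \in \rbox} \bm{\delta}(\bx_j - \bu'_i) = \phi\!\left( \min_{\vel \in \rbox} \|\bx_j - \bu'_i\| \right).
\end{equation}
Since the reachable set of $\bu'_i$ is contained in $\cD_i(\rbox)$, its minimum distance to $\bx_j$ is at least the minimum distance from $\bx_j$ to the disc, which by elementary geometry equals $\max(\|\bx_j - \bc_i(\rbox)\| - \|\by_i(\rbox)\|, 0)$. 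Applying the non-increasing $\phi$ reverses the inequality and recovers exactly the $i$-th summand of~\eqref{eq:pixelvalupbnd}; summing over $i$ gives the claimed bound.

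For the equality claim, when $\rbox = \{ \vel \}$ is a singleton we have $\alpha_i(\rbox) = 0$, so the cone $\cV_i(\rbox)$ collapses to a single ray and the disc degenerates to the point $\bc_i(\rbox) = f(\bu_i, t_i;\, \vel)$ with $\|\by_i(\rbox)\| = 0$. Each summand of $\overline{H}_c$ then reduces to $\bm{\delta}(\|\bx_j - f(\bu_i, t_i;\, \vel)\|) = \bm{\delta}(\bx_j - f(\bu_i, t_i;\, \vel))$, and the decoupling step is tight because the outer maximum is over a single point; hence $\overline{H}_c(\bx_j;\rbox) = H_c(\bx_j;\vel)$.

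The geometric heavy lifting --- establishing the containment $\bu'_i \in \cD_i(\rbox)$ via the cone bound~\eqref{eq:axisanglebound} and the elliptical projection --- has already been done, so the main point requiring care is tracking the direction of the inequalities through the monotone kernel: the containment makes the achievable minimum distance \emph{larger}, which after applying the non-increasing $\phi$ makes the kernel value \emph{smaller}, yielding an upper bound as required. A secondary subtlety worth stating explicitly is the mild assumption that $\bm{\delta}$ is radially symmetric and non-increasing, without which the reduction from $\bm{\delta}$ evaluated on a vector to $\phi$ evaluated on a scalar distance would not be justified.
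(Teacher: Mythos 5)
Your proof is correct and rests on exactly the same two facts as the paper's: the containment $f(\bu_i,t_i;\vel)\in\cD_i(\rbox)$ for all $\vel\in\rbox$, and the monotone decrease of the kernel $\bm{\delta}$ with distance. The only difference is presentational --- you argue directly (sum of per-event maxima, each bounded via the point-to-disc distance) whereas the paper argues by contradiction --- and your version is if anything slightly more careful, since it makes explicit the radial-symmetry/non-increasing assumption on $\bm{\delta}$ that the paper's contradiction step uses implicitly.
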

	\begin{proof}
		See supplementary material.
	\end{proof}
	
	Given Lemma~\ref{lem:pix_upbnd}, it is clear that $\overline{S}_c(\rbox)$ satisfies the conditions (see Sec.~\ref{sec:bounding}) to be a valid component in the upper bound~\eqref{eq:upbnd} for the continuous event image $H_c$.
	
	\subsection{SoS bound for discrete event image}\label{sec:method2}
	
	Given the $N$ discs $\{ \cD_i(\rbox) \}^{N}_{i=1}$ associated with the $N$ events, define the intersection matrix $\bT \in \{0,1\}^{N \times P}$:
	\begin{align}\label{eq:intersections}
	\bT_{i,j} = \begin{cases} 1 & \cD_i(\rbox)~\text{intersects pixel}~\bx_j;  \\ 0 & \text{otherwise}. \end{cases}
	\end{align}
	The disc-pixel intersections can be computed efficiently using established techniques~\cite{van1984efficient,foley1996computer}. We assume $\sum_{j=1}^P \bT_{i,j} > 0$ for all $i$, i.e., each disc intersects at least one pixel. If there are discs that lie beyond the image plane, we ignore these discs without loss of generality.
	
	A direct extension of $\overline{H}_c$~\eqref{eq:pixelvalupbnd} to the discrete case would be to calculate the pixel upper bound value as
	\begin{equation}\label{eq:pix_upbnd_disc}
	\overline{H}_d(\bx_j;\rbox) = \sum_{i=1}^N \bT_{i,j},
	\end{equation}
	i.e., number of discs that intersect the pixel; see Fig.~\ref{fig:CC}. This can however be overly pessimistic, since the pixel value for the discrete event image~\eqref{eq:rt_discrete} satisfies
	\begin{equation}
	\sum_{j=1}^P H_d(\bx_j;\vel) \leq N  \implies \sum_{j=1}^P H_d(\bx_j;\vel)^2 \le N^2,
	\end{equation}
	whereas by using~\eqref{eq:pix_upbnd_disc},
	\begin{equation}\label{eq:badbound}
	\sum_{j=1}^P \overline{H}_d(\bx_j;\rbox) \leq PN  \implies \sum_{j=1}^P \overline{H}_d(\bx_j;\vel)^2 \le (PN)^2.
	\end{equation}
	Note that $P$ is the number of pixels (e.g., $P = 240 \times 320 = 76k$ for IniVation Davis 240C~\cite{bazin2012branch}), thus $(PN)^2 \gg N^2$.
	
	To get a tighter bound, we note that the discs $\{ \cD_i(\rbox) \}^{N}_{i=1}$ partition $X$ into a set of connected components (CC)
	\begin{align}\label{eq:cc}
	\{ \cG_k \}^{K}_{k=1},
	\end{align}
	where each $\cG_k$ is a connected set of pixels that are intersected by the same discs; see Fig.~\ref{fig:CC}. Then, define the incidence matrix $\bM \in \{0,1\}^{N \times K}$, where
	\begin{align}\label{eq:incidence}
	\bM_{i,k} = \begin{cases} 1 & \exists \bx_j \in \cG_k \; \text{such that}~\bT_{i,j} = 1; \\ 0 & \text{otherwise.} \end{cases}
	\end{align}
	In words, $\bM_{i,k} = 1$ if $\cD_i(\rbox)$ is a disc that intersect to form $\cG_k$. We then formulate the integer quadratic program
	\begin{align}\tag{IQP}\label{eq:qip}
	\begin{aligned}
	\overline{S}^*_d(\rbox) = \max_{\bZ \in \{0,1\}^{N \times K}} \quad & \sum^K_{k=1} \left( \sum_{i=1}^N \bZ_{i,k} \bM_{i,k} \right)^2 \\
	\text{s.t.} \quad & \bZ_{i,k} \le \bM_{i,k}, \;\; \forall i,k, \\
	& \sum_{k=1}^K \bZ_{i,k} = 1, \;\; \forall i.
	\end{aligned}
	\end{align}
	In words, choose a set of CCs that are intersected by as many discs as possible, while ensuring that each disc is selected exactly once. Intuitively, \ref{eq:qip} warps the events (under uncertainty $\vel \in \rbox$) into ``clusters" that are populated by as many events as possible, to encourage fewer clusters and higher contrast. See Fig.~\ref{fig:qip} for a sample solution of~\ref{eq:qip}.

	

	
	\begin{lemma}
		\label{lem:2}
		\begin{align}
		\overline{S}^*_d(\rbox) \ge \max_{\vel\in \rbox}~\sum_{j=1}^P H_d(\bx_j;\vel)^2,
		\end{align}
		with equality achieved if $\rbox$ is singleton, i.e., $\rbox = \{ \vel \}$.
	\end{lemma}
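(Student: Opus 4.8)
The plan is to prove the inequality \emph{pointwise} in $\vel$ and then maximise over the cube. Fix any $\vel \in \rbox$; I would exhibit a feasible $\bZ$ for~\ref{eq:qip} whose objective is at least $\sum_{j=1}^P H_d(\bx_j;\vel)^2$. The key structural fact is that, by construction, the disc $\cD_i(\rbox)$ in~\eqref{eq:discs} contains the warped position $f(\bu_i,t_i;\vel)$ for \emph{every} $\vel \in \rbox$, since it contains the whole set~\eqref{eq:uncertainty}. Hence, at the fixed $\vel$, the unique pixel $\bx_{j(i)}$ occupied by event $i$ satisfies $\bT_{i,j(i)} = 1$, and if $\cG_{k(i)}$ denotes the connected component containing $\bx_{j(i)}$, then $\bM_{i,k(i)} = 1$ by~\eqref{eq:incidence}. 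Setting $\bZ_{i,k} = 1$ iff $k = k(i)$ then yields a point with unit row sums and $\bZ_{i,k} \le \bM_{i,k}$, so $\bZ$ is feasible for~\ref{eq:qip}.

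Next I would bound the objective from below. Write $m_j = H_d(\bx_j;\vel)$ for the number of events landing in pixel $\bx_j$ at $\vel$, and $n_k = \sum_{i=1}^N \bZ_{i,k}\bM_{i,k}$ for the number of events assigned to $\cG_k$. Since every event falling in a pixel of $\cG_k$ is assigned to $\cG_k$ by the construction above, $n_k = \sum_{\bx_j \in \cG_k} m_j$. Because the $m_j$ are nonnegative integers, $\sum_{\bx_j \in \cG_k} m_j^2 \le ( \sum_{\bx_j \in \cG_k} m_j )^2 = n_k^2$. The components $\{\cG_k\}_{k=1}^K$ partition $X$, so summing gives $\sum_{j=1}^P H_d(\bx_j;\vel)^2 = \sum_{k=1}^K \sum_{\bx_j \in \cG_k} m_j^2 \le \sum_{k=1}^K n_k^2$, and the right-hand side is exactly the~\ref{eq:qip} objective evaluated at the feasible $\bZ$, hence at most $\overline{S}^*_d(\rbox)$. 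As $\vel \in \rbox$ was arbitrary, taking the maximum over $\vel$ establishes the claimed inequality.

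For equality when $\rbox = \{\vel\}$ is a singleton, I would use that $\alpha_i(\rbox) = 0$, so each cone collapses to a single ray and each disc $\cD_i(\rbox)$ to the single point $f(\bu_i,t_i;\vel)$, which lies in one pixel. Two distinct occupied pixels are then intersected by disjoint sets of discs, so every occupied pixel is its own connected component; simultaneously each event is incident to exactly one component, forcing $\bZ = \bM$ as the only feasible point. With one pixel per component the inequality $\sum_{\bx_j \in \cG_k} m_j^2 \le n_k^2$ holds with equality, and empty components contribute zero, so $\overline{S}^*_d(\rbox) = \sum_{j=1}^P H_d(\bx_j;\vel)^2$. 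The step I expect to require the most care is precisely this correspondence between the continuous parameter $\vel$ and a feasible combinatorial $\bZ$: one must verify that the pixel occupied at $\vel$ genuinely lies in a component incident to the corresponding disc (so the forced assignment respects $\bZ_{i,k} \le \bM_{i,k}$), and that the super-additivity bound $\sum_{\bx_j \in \cG_k} m_j^2 \le n_k^2$ becomes tight exactly when each component reduces to a single pixel, which is what drives the singleton equality.
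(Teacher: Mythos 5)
Your proof is correct, but it takes a more direct route than the paper. The paper's proof works at the pixel level: it rewrites \ref{eq:qip} as a pixel-wise program (assigning each event to one pixel $\bx_j$ with $\bQ_{i,j}\le\bT_{i,j}$ and unit row sums), encodes $\max_{\vel\in\rbox}\sum_j H_d(\bx_j;\vel)^2$ as a mixed-integer program whose assignment is dictated by a single $\vel$, and observes that the former is a relaxation of the latter; the singleton case follows because the two feasible sets then coincide. You instead stay at the connected-component level: for each fixed $\vel$ you exhibit an explicit feasible $\bZ$ (send event $i$ to the CC containing its occupied pixel, which is legal because $\cD_i(\rbox)$ contains the set~\eqref{eq:uncertainty} and hence $\bM_{i,k(i)}=1$ by~\eqref{eq:incidence}), and you make explicit the superadditivity step $\sum_{\bx_j\in\cG_k}m_j^2\le\bigl(\sum_{\bx_j\in\cG_k}m_j\bigr)^2$ that bridges the pixel-level sum of squares and the CC-level objective. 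This buys you something: the paper's claim that the pixel-wise program has \emph{the same} optimal value $\overline{S}^*_d(\rbox)$ as \ref{eq:qip} is itself nontrivial (the $\ge$ direction is exactly your superadditivity argument; the $\le$ direction is not obviously true, since discs assigned to one CC need not share a common pixel), whereas your argument only ever needs one-sided inequalities and so avoids relying on that unproved equivalence. One small caveat applies to both proofs: if for some $\vel\in\rbox$ a warped event $f(\bu_i,t_i;\vel)$ falls outside the image plane while its disc still intersects it, there is no "occupied pixel" $\bx_{j(i)}$; you must then assign event $i$ to any CC with $\bM_{i,k}=1$ (which exists by the standing assumption $\sum_j\bT_{i,j}>0$), and your inequality chain still goes through since $n_k$ only increases. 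Your singleton argument (point discs, one occupied pixel per CC, tightness of the square bound) is also sound.
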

	\begin{proof}
		See supplementary material.
	\end{proof}
	
	Solving~\ref{eq:qip} is challenging, not least because $\{ \cG_k \}^{K}_{k=1}$ and $\bM$ are costly to compute and store (the number of CCs is exponential in $N$). To simplify the problem, first define the \emph{density} of a CC $\cG_k$ (corresponding to column $\bM_{:,k}$) as
	\begin{align}
	\Delta_k = \sum_{i=1}^N \bM_{i,k}.
	\end{align}
	We say that a column $\bM_{:,\eta}$ of $\bM$ is \emph{dominant} if there exists a subset $\Lambda \subset \{1,\dots,K\}$ (including $\Lambda = \emptyset$) such that
	\begin{align}
	\bM_{i,k} \le \bM_{i,\eta} \;\; \forall i \in \{1,\dots,N\}, \forall k \in \Lambda,
	\end{align}
	whereas for all $k \notin \Lambda$, the above does not hold. In words, the $1$ elements of columns in $\Lambda$ is a subset of the $1$ elements of $\bM_{:,\eta}$. Geometrically, a dominant column $\bM_{:,\eta}$ corresponds to a CC $\cG_\eta$ such that for all discs that intersect to form the CC, $\cG_\eta$ is the densest CC that they intersect with; mathematically, there exists $\cD_i(\rbox) \supseteq \cG_\eta$ such that
	\begin{align}\label{eq:M'}
	\max_{\bx_j\in\cD_i(\rbox)}\sum_{i=1}^N\bT_{i,j} = \sum_{i=1}^N\bM_{i,\eta}.
	\end{align}
	Figs.~\ref{fig:CC} and~\ref{fig:qip} illustrate dominant columns.
	
	\begin{algorithm}[t]\centering
		\caption{Computing dominant columns $\bM'$.}\label{al:M'}
		\begin{algorithmic}[1]
			\REQUIRE Pixels $\{ \bx_j \}^{P}_{j=1}$, set of discs $\{ \cD_i(\rbox) \}^{N}_{i=1}$~\eqref{eq:discs}.
			\STATE $\bT \leftarrow$ $N\times P$ intersection matrix~\eqref{eq:intersections} from discs.
			\STATE $\left\{ \overline{H}_d(\bx_j;\rbox) \right\}^{P}_{j=1} \leftarrow $ Pixel upper bound image~\eqref{eq:pix_upbnd_disc}.
			\STATE $\{ a_j \}^{P}_{j=1} \leftarrow$ Array of $P$ elements initialised to $0$.
			\STATE $\bM' \leftarrow [~]$ (empty matrix).
			\FOR{$i = 1,\dots,N$}
			\STATE $c_{max} \leftarrow \max_{\bx_j \in 	\cD_i(\rbox)}\overline{H}_d(\bx_j;\rbox)$.
			\STATE $\mathcal{R} \leftarrow \left\{\bx_j\in\cD_i(\rbox) \mid \overline{H}_d(\bx_j;\rbox) = c_{max},a_j =0 \right\}$.
			\WHILE {$\mathcal{R}$ is not empty}
			\STATE Pick a pixel $\bx_j \in \mathcal{R}$ and $a_j \leftarrow 1$.
			\STATE $\bM' \leftarrow  \left[ \begin{matrix} \bM' & \bT_{:,j} \end{matrix} \right]$ and $\mathcal{R} \leftarrow \mathcal{R}\setminus \{ \bx_j \}$.
			\FOR {$\bx_\ell \in \mathcal{R}$}
			\IF {$ \bT_{:,\ell} = \bT_{:,j}$}
			\STATE $a_\ell \leftarrow 1$ and $\mathcal{R} \leftarrow \mathcal{R}\setminus \{\bx_\ell\}$.
			\ENDIF
			\ENDFOR
			\ENDWHILE
			\ENDFOR
			\RETURN $\bM'$.
		\end{algorithmic}
	\end{algorithm}
	
	Let $\bM^\prime \in \{0,1\}^{N \times K^\prime}$ contain only the dominant columsn of $\bM$. Typically, $K^\prime \ll K$, and $\bM^\prime$ can be computed directly without first building $\bM$, as shown in Algorithm~\ref{al:M'}. Intuitively, the method loops through the discs and incrementally keeps track of the densest CCs to form $\bM^\prime$.
	
	\begin{lemma}
		Problem \ref{eq:qip} has the same solution if $\bM$ is replaced with $\bM^\prime$.
	\end{lemma}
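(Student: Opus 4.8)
The plan is to prove the equality by establishing the two inequalities between the optimum of \ref{eq:qip} and the optimum of the same program with $\bM$ replaced by $\bM'$. One direction is immediate: the columns of $\bM'$ are a subset of those of $\bM$, so any feasible $\bZ$ for the reduced program extends (pad the dropped columns with zeros) to a feasible $\bZ$ for the full program with the same objective value; hence the reduced optimum cannot exceed the full one. All the work is in the reverse inequality, for which I would take any optimal $\bZ^\star$ of the full \ref{eq:qip} and transform it, without lowering the objective, into a feasible solution that routes every disc to a dominant column.

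First I would record the partial order behind dominance: say column $k$ is \emph{dominated by} column $\eta$ when $\bM_{i,k}\le\bM_{i,\eta}$ for all $i$, i.e.\ the support of $\bM_{:,k}$ is contained in that of $\bM_{:,\eta}$. The dominant columns retained in $\bM'$ are exactly the maximal elements of this order (those whose support sits inside no other column's support), so, as there are only finitely many columns, \emph{every} column is dominated by at least one dominant column. This is what keeps the reduced problem feasible: by the standing assumption $\sum_j\bT_{i,j}>0$, each disc $i$ has $\bM_{i,k}=1$ for some $k$, and any dominant $\eta$ dominating that $k$ then has $\bM_{i,\eta}=1$, so disc $i$ can be assigned within $\bM'$.

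The heart of the argument is a merge step. Writing $n_\ell=\sum_i\bZ_{i,\ell}\bM_{i,\ell}$ for the number of discs currently routed to column $\ell$, the objective of \ref{eq:qip} is $\sum_\ell n_\ell^2$ subject to $\sum_\ell n_\ell=N$. Suppose $\bZ^\star$ uses a non-dominant column $k$, and fix a dominant $\eta$ dominating $k$. I would reassign \emph{every} disc currently in column $k$ to column $\eta$. Feasibility is preserved because each such disc $i$ satisfies $\bM_{i,k}=1\le\bM_{i,\eta}$, so $\bM_{i,\eta}=1$, while the one-hot constraint $\sum_\ell\bZ_{i,\ell}=1$ is untouched. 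The objective is affected only through the pair $(n_k,n_\eta)\mapsto(0,\,n_\eta+n_k)$, which changes it by $(n_\eta+n_k)^2-n_\eta^2-n_k^2=2\,n_\eta n_k\ge 0$; thus it does not decrease. Since the surviving column $\eta$ is dominant, each merge strictly reduces the number of non-dominant columns in use, so after finitely many merges I obtain a feasible solution supported entirely on dominant columns with objective at least that of $\bZ^\star$. This yields the reverse inequality, hence equality of the two optima (and shows an optimiser is preserved up to the reassignment).

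The step I expect to be the main obstacle is the monotonicity of the merge. Its validity rests on two points that must be verified carefully: that the \emph{whole} group occupying column $k$ can be moved simultaneously, which is precisely why support containment (rather than a mere comparison of densities $\Delta_k$) is the correct notion of dominance—moving individual discs into a denser but non-containing component can in fact \emph{lower} $\sum_\ell n_\ell^2$—and that the column $\eta$ targeted by each merge is genuinely dominant, so that the reassignment terminates. Once these are in place, the quadratic bookkeeping $2\,n_\eta n_k\ge0$ is immediate and nothing else in the argument is delicate.
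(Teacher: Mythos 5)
Your proof is correct and follows essentially the same route as the paper's: both hinge on the fact that every non-dominant column's support is contained in that of some dominant column, so all discs assigned to it can be feasibly rerouted there without decreasing the quadratic objective. The only cosmetic difference is that the paper removes one non-dominant column at a time and argues by contradiction with optimality that the target column must have been empty (so the objective is exactly preserved), whereas you settle for the monotone non-decrease $2\,n_\eta n_k \ge 0$ combined with the trivial reverse inequality, which suffices.
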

	\begin{proof}
		See supplementary material.
	\end{proof}
	
	It is thus sufficient to formulate~\ref{eq:qip} based on the dominant columns $\bM^\prime$. Further, we relax~\ref{eq:qip} into
	\begin{align}\tag{R-IQP}\label{eq:rqip}
	\begin{aligned}
	\overline{S}_d(\rbox) = \max_{\bZ \in \{0,1\}^{N \times K^\prime}} \quad & \sum^{K^\prime}_{k=1} \left( \sum_{i=1}^N \bZ_{i,k}\bM^\prime_{i,k} \right)^2 \\
	\text{s.t.} \quad & \bZ_{i,k} \le \bM^\prime_{i,k}, \;\; \forall i,k, \\
	& \sum_{k=1}^{K^\prime} \sum^{N}_{i=1} \bZ_{i,k} = N.
	\end{aligned}
	\end{align}
	where we now allow discs to be selected more than once. Since enforcing $\sum_{k=1}^{K^\prime} \bZ_{i,k} = 1 $ for all $i$ implies $\sum_{k=1}^{K^\prime}\sum^{N}_{i=1} \bZ_{i,k} = N$, \ref{eq:rqip} is a valid relaxation. See Fig.~\ref{fig:rqip} for a sample result of~\ref{eq:rqip}, and cf.~Fig.~\ref{fig:qip}.
	
	\begin{lemma}
		\begin{align}
		\overline{S}_d(\rbox) \ge \overline{S}^*_d(\rbox)
		\end{align}
		with equality achieved if $\rbox$ is singleton, i.e., $\rbox = \{ \vel \}$.
	\end{lemma}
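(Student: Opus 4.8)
The plan is to separate the claim into its two assertions: the inequality $\overline{S}_d(\rbox) \ge \overline{S}^*_d(\rbox)$, which should follow purely from the fact that \ref{eq:rqip} relaxes \ref{eq:qip}, and the tightness $\overline{S}_d(\rbox) = \overline{S}^*_d(\rbox)$ at a singleton, which requires a short combinatorial argument about the structure of the discs when they collapse to points.

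For the inequality, I would first invoke the preceding lemma, which guarantees that \ref{eq:qip} has the same optimal value whether it is formulated on $\bM$ or on the dominant columns $\bM^\prime$; hence $\overline{S}^*_d(\rbox)$ equals the optimum of the $\bM^\prime$-version of \ref{eq:qip}. I would then exhibit an explicit inclusion of feasible sets. Any $\bZ$ feasible for the $\bM^\prime$-version of \ref{eq:qip} satisfies $\bZ_{i,k} \le \bM^\prime_{i,k}$ and $\sum_{k=1}^{K^\prime} \bZ_{i,k} = 1$ for every $i$; summing the latter over $i$ gives $\sum_{k=1}^{K^\prime}\sum_{i=1}^N \bZ_{i,k} = N$, which is precisely the constraint of \ref{eq:rqip} (and the box constraint is identical). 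Thus every point feasible for \ref{eq:qip} is feasible for \ref{eq:rqip}, the two share the same objective, and maximising the same objective over a larger feasible set can only increase the optimum. This yields $\overline{S}_d(\rbox) \ge \overline{S}^*_d(\rbox)$.

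For the equality at a singleton $\rbox = \{\vel\}$, I would argue that the relaxation's extra freedom, namely selecting a disc more than once, is structurally unavailable. When $\rbox$ collapses to $\{\vel\}$, the width $\alpha_i(\rbox) = 0$, so each cone $\cV_i(\rbox)$ is a single ray, each projected ellipse degenerates to a point, and hence each disc $\cD_i(\rbox)$ has radius $\|\by_i(\rbox)\| = 0$ and lies in exactly one pixel. Consequently every event maps to a unique pixel, distinct occupied pixels are intersected by disjoint sets of discs, and each such pixel forms its own connected component; the dominant columns of $\bM^\prime$ therefore have pairwise disjoint supports, with densities $\Delta_k$ equal to the pixel occupancies. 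In this configuration each row of $\bM^\prime$ has a single nonzero entry, so the box constraint $\bZ_{i,k}\le \bM^\prime_{i,k}$ forces each row of $\bZ$ to be supported on one column; the aggregate constraint $\sum_{k}\sum_i \bZ_{i,k}=N$ then forces every row to activate that column, i.e. $\bZ = \bM^\prime$. Hence the feasible point is unique and $\overline{S}_d(\rbox) = \sum_k \Delta_k^2 = \sum_{j=1}^P H_d(\bx_j;\vel)^2 = \overline{S}^*_d(\rbox)$, the last equality by Lemma~\ref{lem:2}.

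I expect the main obstacle to be the equality direction rather than the inequality: specifically, making rigorous the claim that at a singleton the dominant columns have disjoint support and that this disjointness neutralises the per-disc repetition permitted by \ref{eq:rqip}. The inequality is essentially immediate once the feasible-set inclusion is spelled out. The delicate point is confirming that no two distinct occupied pixels can merge into a single connected component sharing the same disc set when the discs are points, which holds because a zero-radius disc intersects exactly one pixel, and noting that empty (zero-density) columns contribute nothing to either objective and can be ignored throughout.
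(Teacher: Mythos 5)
Your proposal is correct and follows essentially the same route as the paper: the inequality comes from observing that the aggregate constraint $\sum_{k}\sum_{i}\bZ_{i,k}=N$ is implied by (but does not imply) the per-row constraints $\sum_{k}\bZ_{i,k}=1$, so \ref{eq:rqip} is a relaxation of \ref{eq:qip}, and equality at a singleton follows because each collapsed disc intersects exactly one pixel and hence one connected component, which restores the per-row constraint. Your write-up is merely more explicit than the paper's (spelling out the zero-radius disc structure and the forced form of $\bZ$), but no new idea is involved.
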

	\begin{proof}
		See supplementary material.
	\end{proof}
	
	%
	
	\paragraph{Bound computation and tightness}
	
	\ref{eq:rqip} admits a simple solution. First, compute the densities $\{ \Delta_k \}^{K^\prime}_{k=1}$ of the columns of $\bM^\prime$. Let $\Delta_{(k)}$ be the $k$-th highest density, i.e.,
	\begin{align}
	\Delta_{(k_1)} \ge \Delta_{(k_2)} \;\; \text{if} \;\; k_1 < k_2.
	\end{align}
	Obtain $\gamma$ as the largest integer such that
	\begin{align}
	\sum_{k=1}^{\gamma} \Delta_{(k)} < N.
	\end{align}
	Then, the SoS upper bound for the discrete event image is
	\begin{align}
	\overline{S}_d(\rbox) = \sum_{k=1}^\gamma \Delta^2_{(k)} + \left(N - \sum_{k=1}^{\gamma} \Delta_{(k)}\right)^2.
	\end{align}
	Intuitively, the procedure greedily takes the densest CCs while ensuring that the quota of $N$ discs is not exceeded. Then, any shortfall in the number of discs is met using the next largest CC partially. Given $\bM^\prime$, the costliest routine is just the sorting of the column sums of $\bM^\prime$.
	
	
	Given the final constraint in~\ref{eq:rqip}, it is clear that $\overline{S}_d(\rbox) \le N^2$. This represents a much tighter SoS upper bound than $\sum_{j=1}^P \overline{H}_d(\bx_j;\vel)^2$; see~\eqref{eq:badbound}.

	\subsection{Lower bound of mean pixel value}\label{sec:meanbound}
	
	For the continuous event image~\eqref{eq:rt}, the lower bound of the pixel value is the ``reverse" of the upper bound~\eqref{eq:pixelvalupbnd}, i.e.,
	\begin{align}
	\label{eq:pixelvaluelwbnd}
	\underline{H}_c(\bx_j ; \rbox) = \sum_{i=1}^N \bm{\delta}\left(\|\bx_j-\bc_i(\rbox)\|+\|\by_i(\rbox)\|\right),
	\end{align}
	whereby for each $\cD_i(\rbox)$, we take the maximum distance between $\bx_j$ and a point on the disc. Then, the lower bound of the mean pixel value is simply 
	\begin{equation}\label{eq:pixelvallwbnd}
	\underline{\mu}_c(\rbox) = \frac{1}{P}\sum^P_{j=1} \underline{H}_c(\bx_j ; \rbox).
	\end{equation}
	In the discrete event image~\eqref{eq:rt_discrete}, 
	if all the $N$ discs lie fully in the image plane, the lower bound can be simply calculated as $N/P$. However, this ideal case rarely happens,  hence the the lower bound on the mean pixel vale is
	\begin{equation}
	\label{eq:lowerbound}
	\underline{\mu}_d(\rbox) = \dfrac{1}{P}  \sum_{i=1}^N\mathbb{I}(\cD_i~\text{fully lie in the image plane}).
	\end{equation}
	See the supplementary material for proofs of the correctness of the above lower bounds.
	
	\subsection{Computational cost and further acceleration}\label{sec:gpu}
	
	Our BnB method is able to process $N \approx 50,000$ events in $\approx 300$ seconds. While this does not allow online low latency event processing, it is nonetheless useful for event sensing applications that permit offline computations, e.g., video stabilisation with post-hoc correction. Note that a local method can take up to $30$ seconds to perform CM on the same input, which also does not enable online processing\footnote{Since the implementation of~\cite{gallego2018unifying} was not available, we used the conjugate gradient solver in \texttt{fmincon} (Matlab) to solve CM locally optimally. Conjugate gradient solvers specialised for CM could be faster, though the previous works~\cite{gallego2018unifying,gallego2019focus,stoffregen2019event,khoei2019asynchronous} did not report online performance.} (Sec.~\ref{sec:results} will present more runtime results). 
	
	There is potential to speed-up our algorithm using GPUs. For example, in the bound computations for the discrete event image case, the disc-pixel intersection matrix $\bT$~\eqref{eq:intersections} could be computed using GPU-accelerated ray tracing~\cite{popov2007stackless,carr2006fast}, essentially by backprojecting each pixel and intersecting the ray with the cones~\eqref{eq:cones} in parallel. We leave GPU acceleration as future work.
	

	
	\vspace{-1mm}
	\section{Results}\label{sec:results}
	
	We first examine the runtime and solution quality of our algorithms, before comparing against state-of-the-art methods in the literature. The results were obtained on a standard desktop with a 3.0GHz Intel i5 CPU and 16GB RAM.
	
	\subsection{Comparison of bounding functions}
	
	The aim here is to empirically compare the performance of BnB (Algorithm~\ref{al:bnb}) with continuous and discrete event images. We call these variants \textbf{CMBnB1} and \textbf{CMBnB2}.
	
	For this experiment, a $10$ ms subsequence (which contains about $N = 50,000$ events) of the \emph{boxes} data~\cite{gallego2017accurate} was used. The underlying camera motion was a pure rotation.
	
	For CMBnB1, a Gaussian kernel with bandwidth $1$ pixel was used (following~\cite[Sec.~2]{gallego2018unifying}). Fig.~\ref{fig:iteration} plots the upper and lower bound values over time in a typical run of Algorithm~\ref{al:bnb}. It is clear that the discrete case converged much faster than the continuous case; while CMBnB2 terminated at about $12k$ iterations, CMBnB1 requried no fewer than $30k$ iterations. It is evident from Fig.~\ref{fig:iteration} that this difference in performance is due to the much tighter bounding in the discrete case. The next experiment will include a comparison of the solution quality of CMBnB1 and CMBnB2.
	\begin{figure}[ht]\centering
		\centering
		\includegraphics[width=0.85\linewidth]{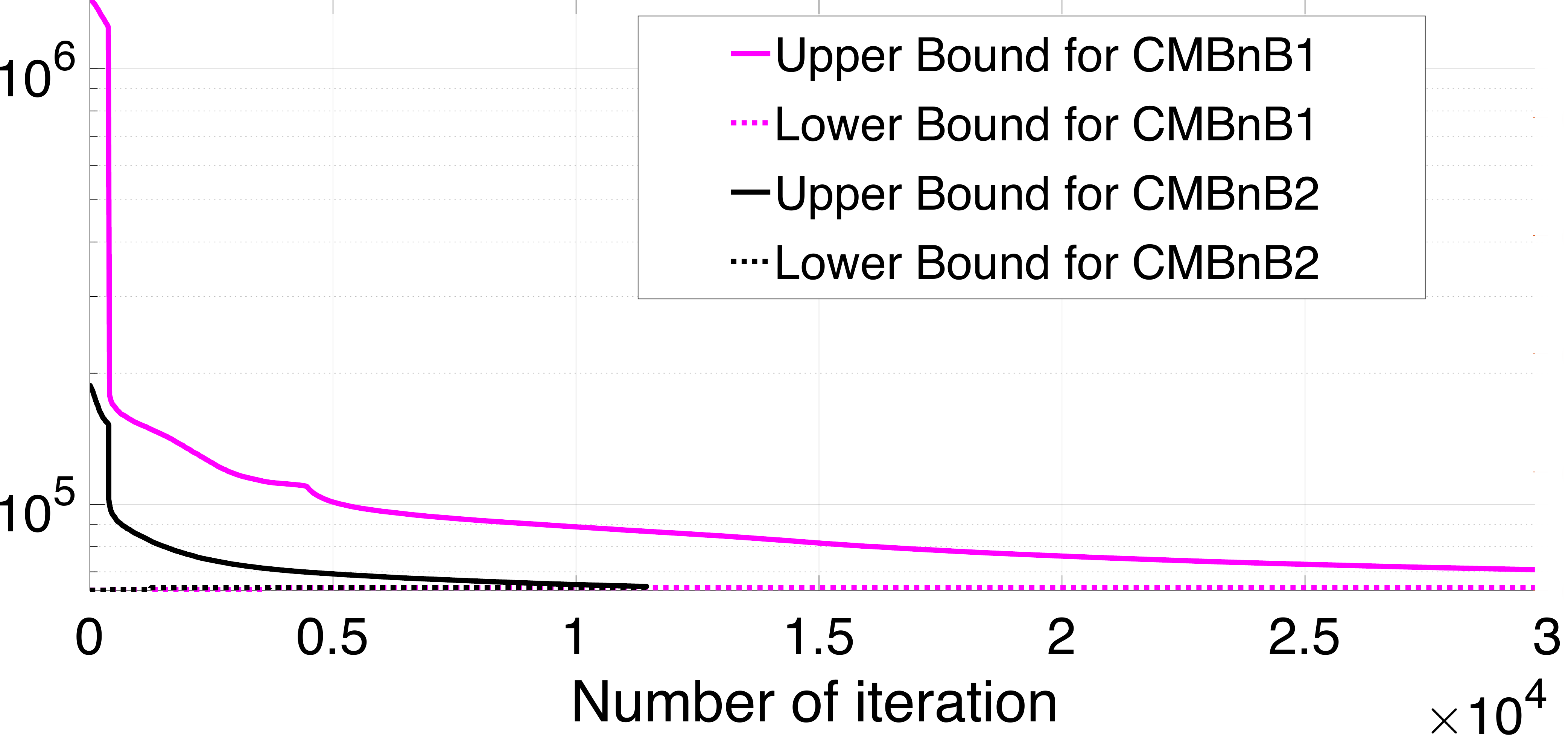}
		\caption{Upper and lower bound evolution in branch-and-bound.}
		\label{fig:iteration}
	\end{figure}

	\subsection{Qualitative comparisons}\label{sec:global_local_exp}
	
	To highlight the importance of globally optimal CM, we tested on select $10$ ms subsequences (about $N=50k$ events each) from the \emph{boxes} data~\cite{gallego2017accurate}---in the next experiment, a more comprehensive experiment and quantitative benchmarking will be described. Here, on the subsequences chosen, we compared BnB against the following methods:
	\begin{itemize}[leftmargin=1em,topsep=0.2em,parsep=0.2em,itemsep=0em]
		\item \textbf{CMGD1}: locally optimal solver (\texttt{fmincon} from Matlab) was used to perform CM with initialisation $\vel = 0$ (equivalent to identity rotation).
		\item \textbf{CMGD2}: same as above, but initialised with the optimised $\vel$ from the previous $10$ ms time window.
	\end{itemize}
	Both local methods were executed on the continuous event image with Gaussian kernel of bandwidth $1$ pixel.
	
	Fig.~\ref{fig:global_local} depicts motion compensated event images from two subsequences (Subseq 1 and Subseq 2); see supplementary material for more results. The examined cases show that the local methods (both CMGD1 and CMGD2) can indeed often converge to bad local solutions. Contrast this to BnB which always produced sharp event images.
	
	These results also show that CM based on continuous and discrete event images yield practically identical solutions. Since CMBnB2 usually converges much faster than CMBnB1, we use CMBnB2 in the remaining experiments.
	
	
	\begin{figure*}
		\renewcommand{\arraystretch}{0.7}
		\setlength{\tabcolsep}{1mm}
		\footnotesize
		\begin{tabularx}{\textwidth} { 
				>{\raggedright\arraybackslash}C{.05} 
				| >{\centering\arraybackslash}X 
				| >{\centering\arraybackslash}X 
				| >{\centering\arraybackslash}X 
				| >{\centering\arraybackslash}X  }
			& CMBnB1
			& CMBnB2
			& CMGD1
			& CMGD2\\
			\RotText{~~~~~~~~Subseq 1}
			& \includegraphics[width=\linewidth]{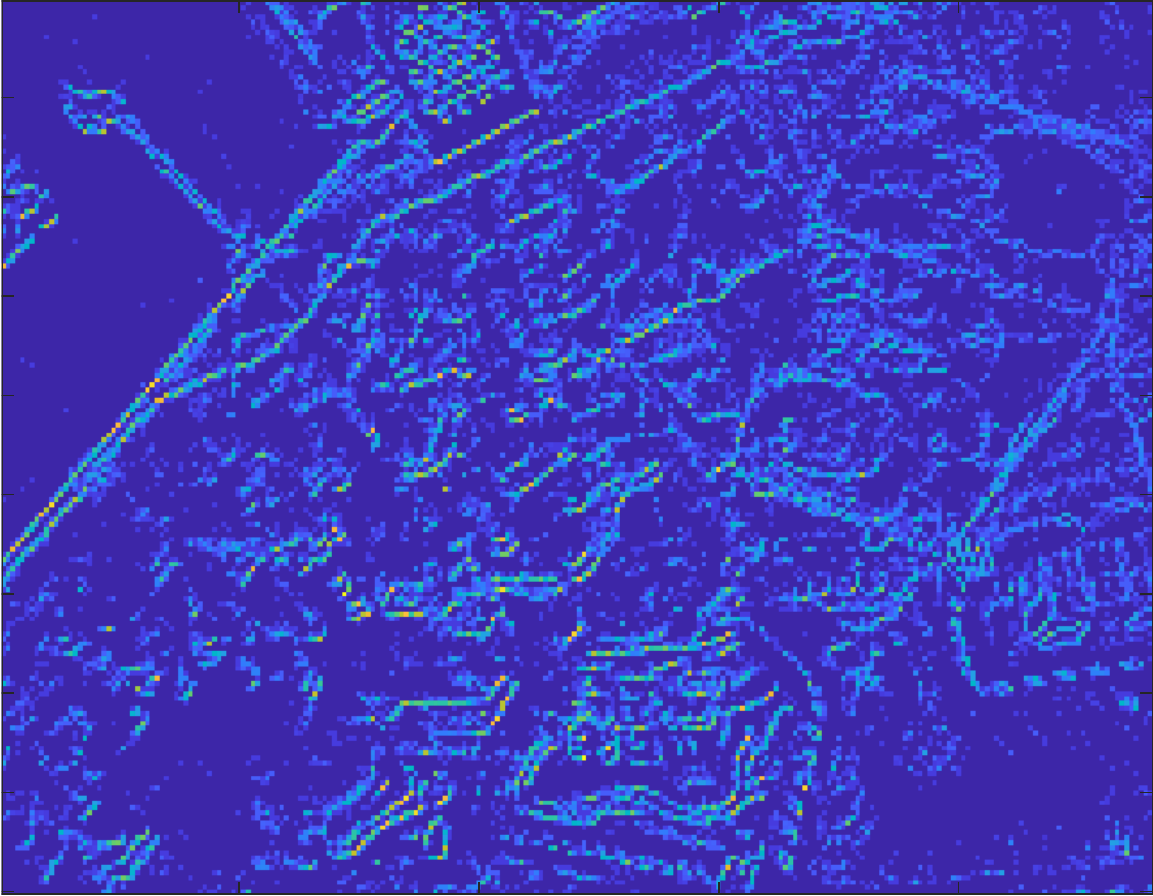}  
			& \includegraphics[width=\linewidth]{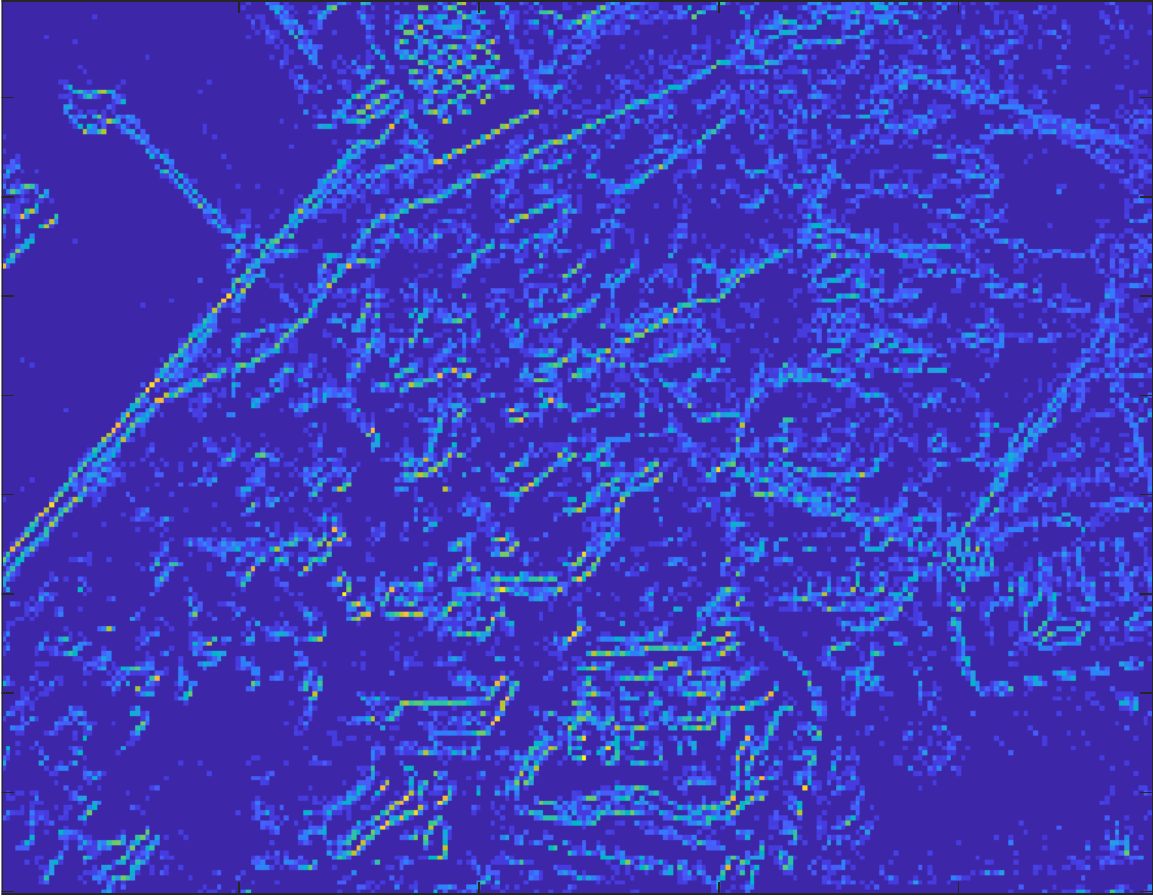}
			& \includegraphics[width=\linewidth]{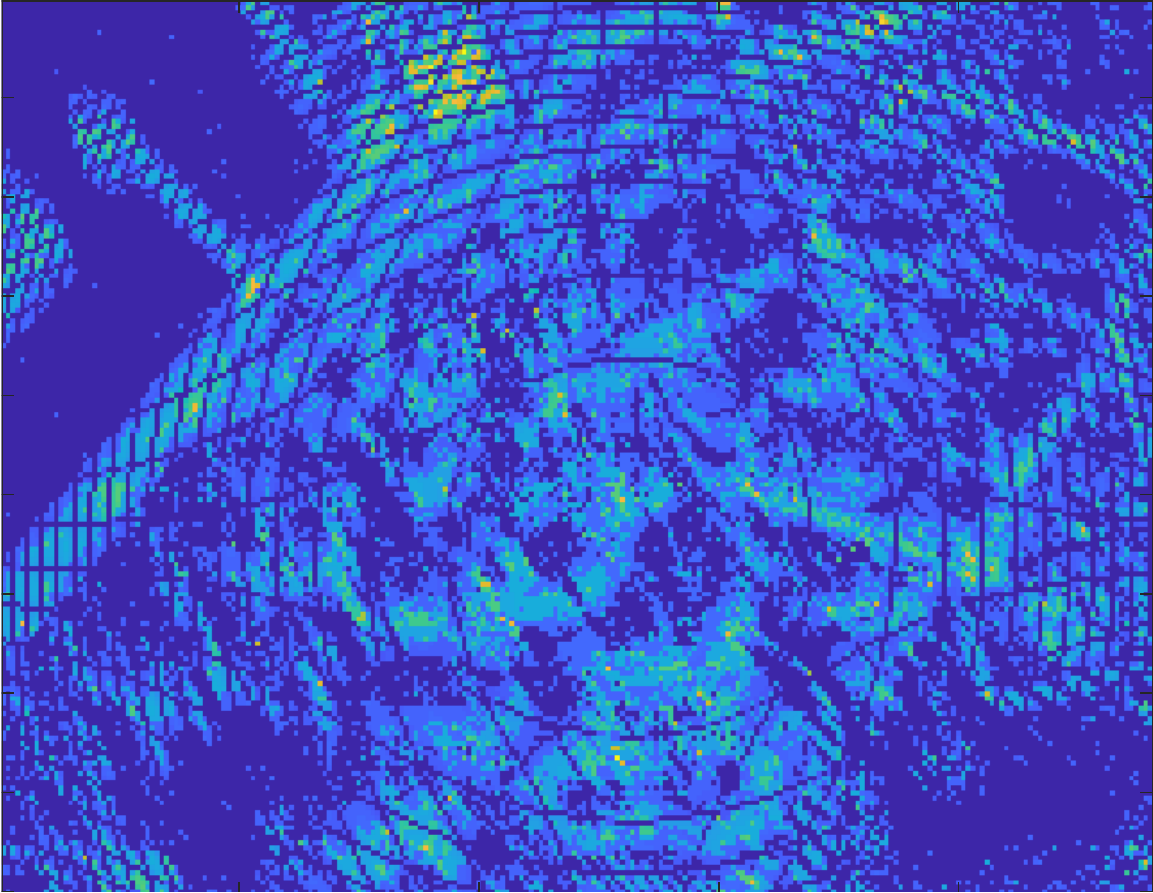}
			& \includegraphics[width=\linewidth]{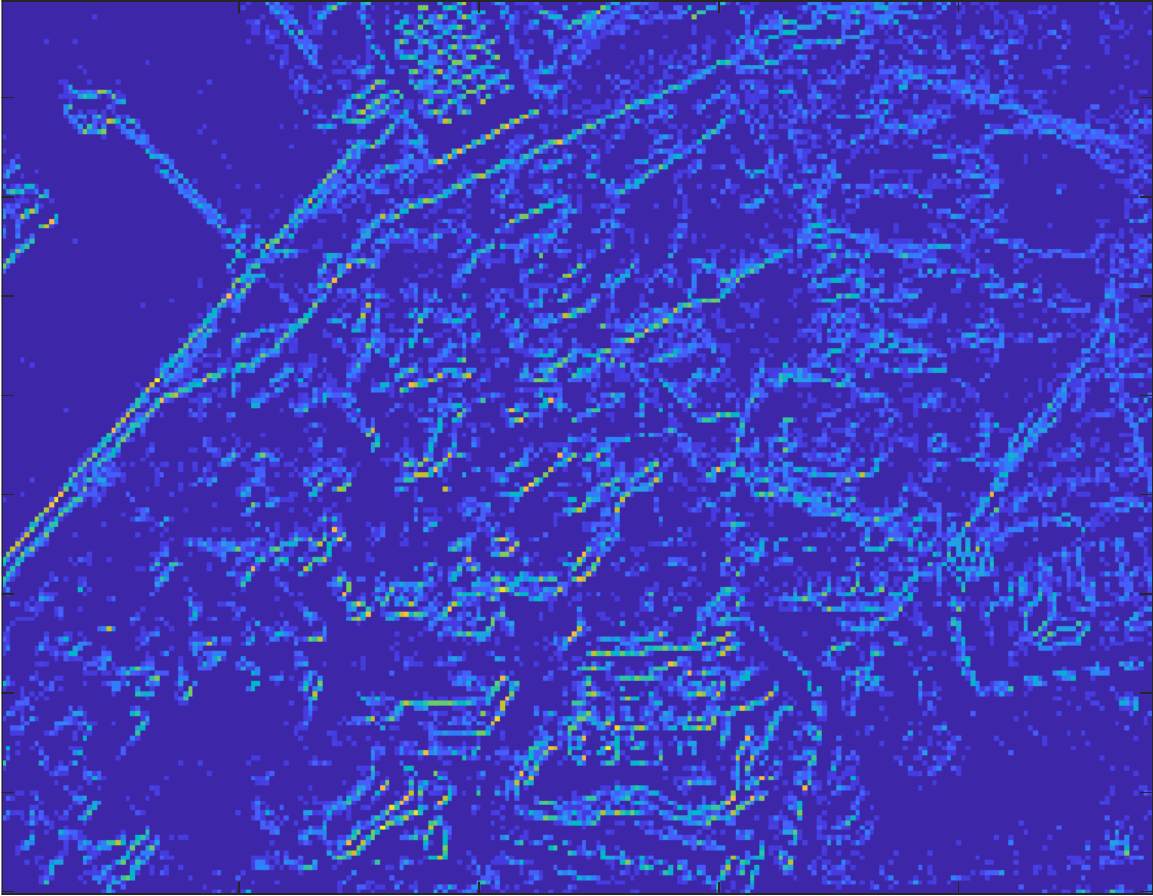}\\
			\hline
			&&&&\\[-.15cm]
			\RotText{~~~~~~~~~Subseq 2}
			& \includegraphics[width=\linewidth]{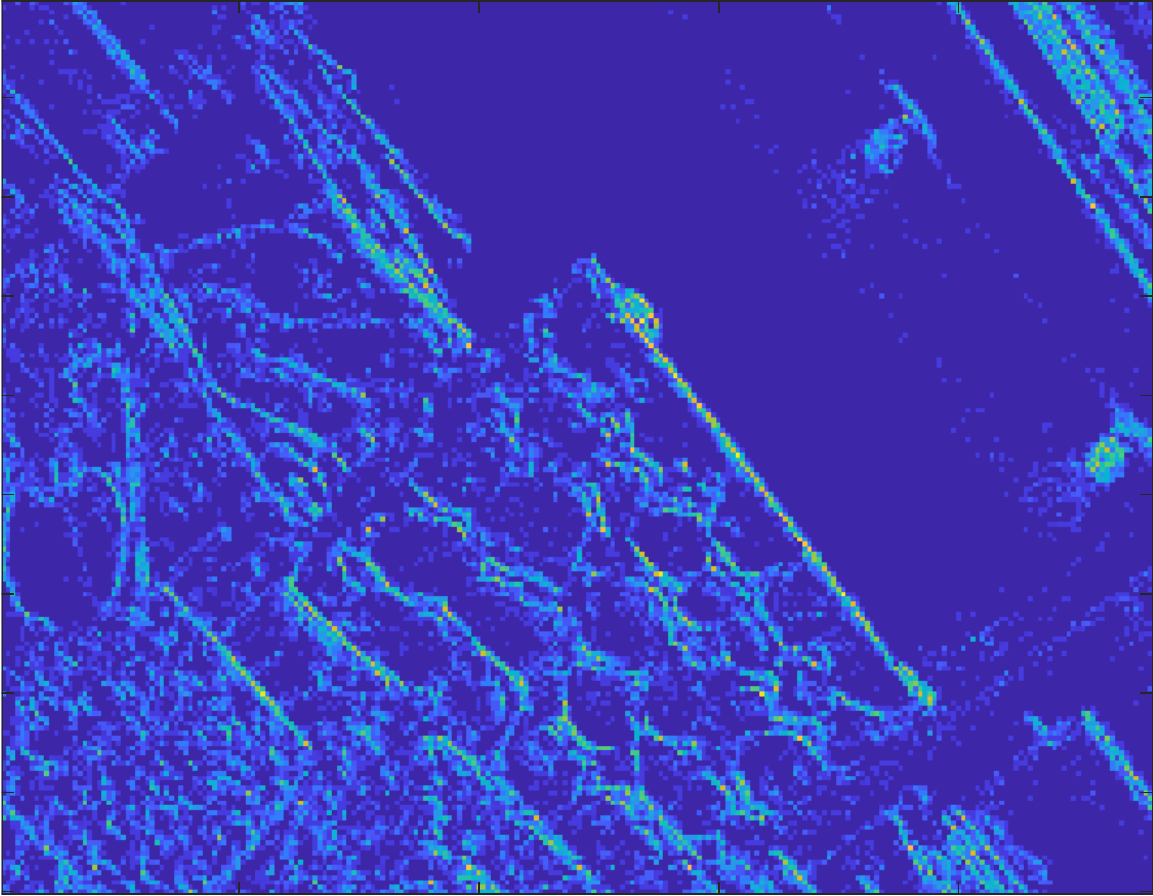}  
			& \includegraphics[width=\linewidth]{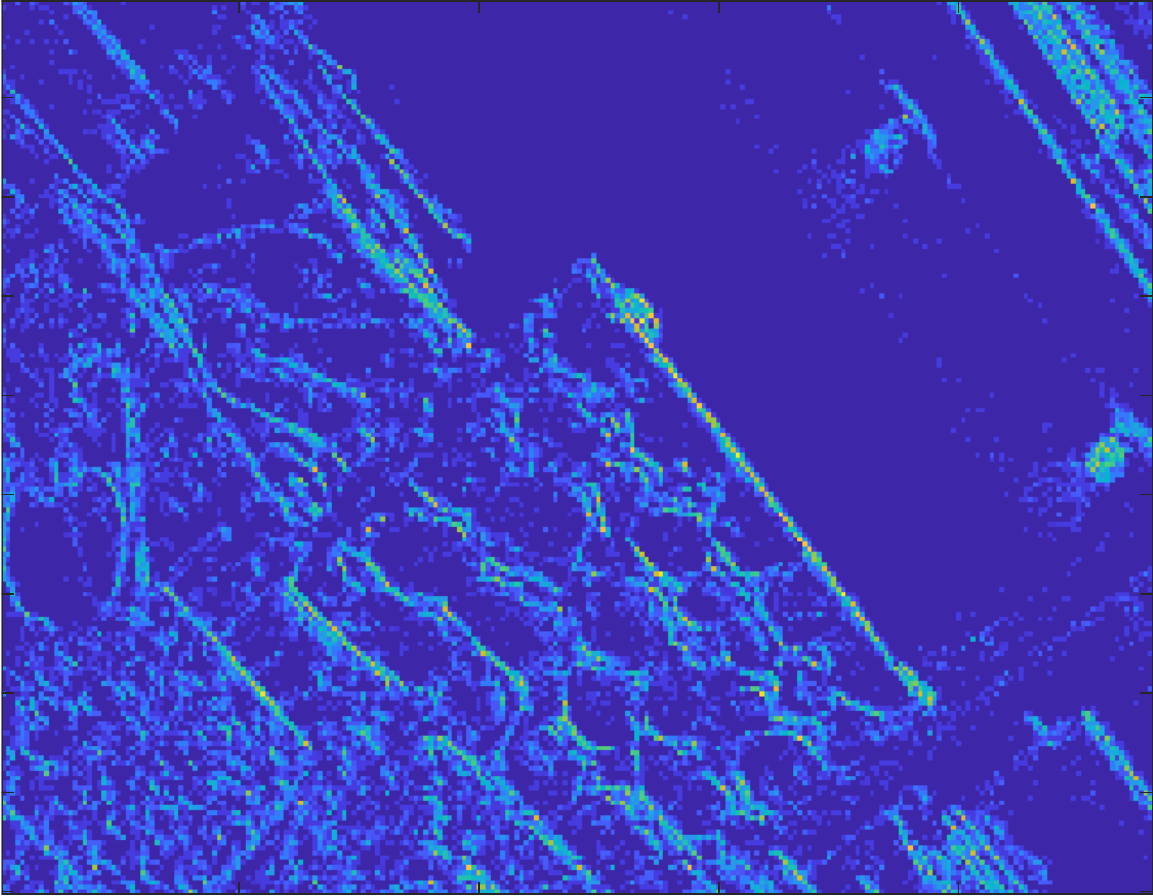}
			& \includegraphics[width=\linewidth]{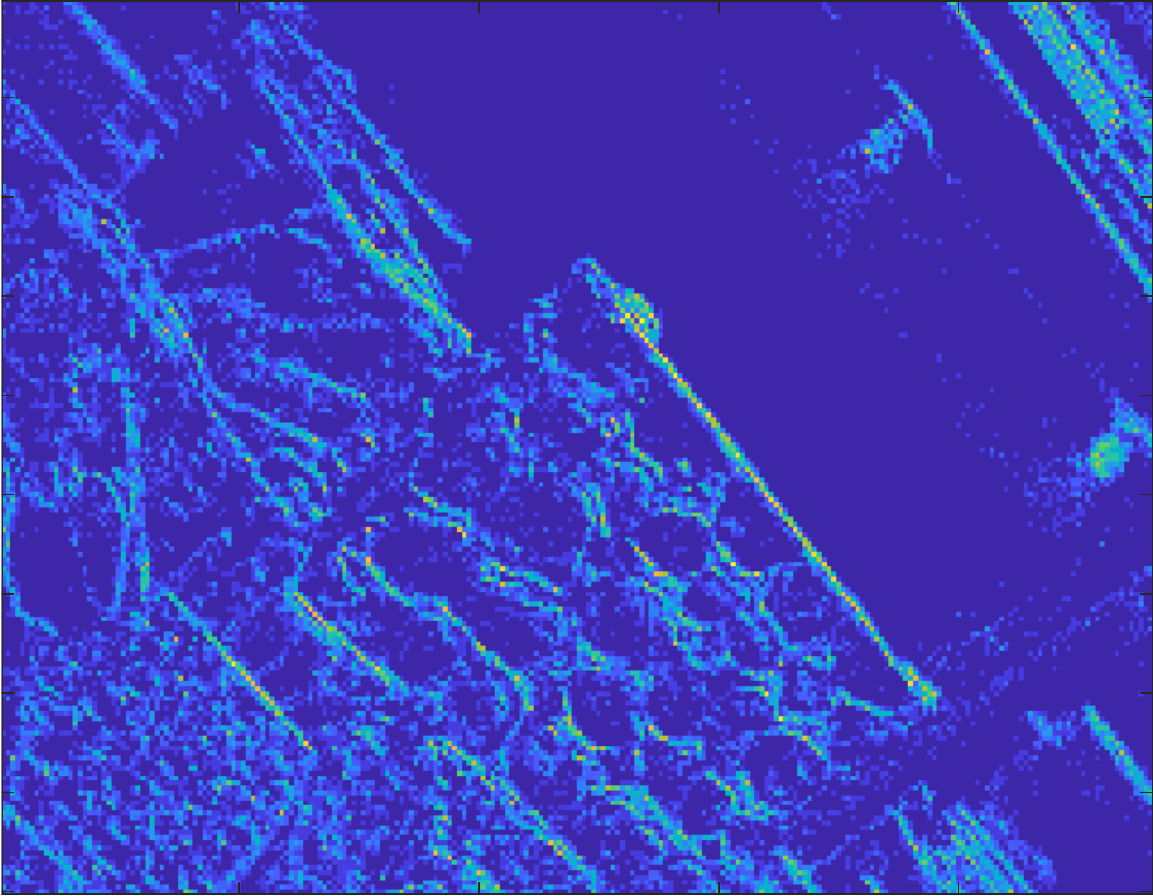}
			& \includegraphics[width=\linewidth]{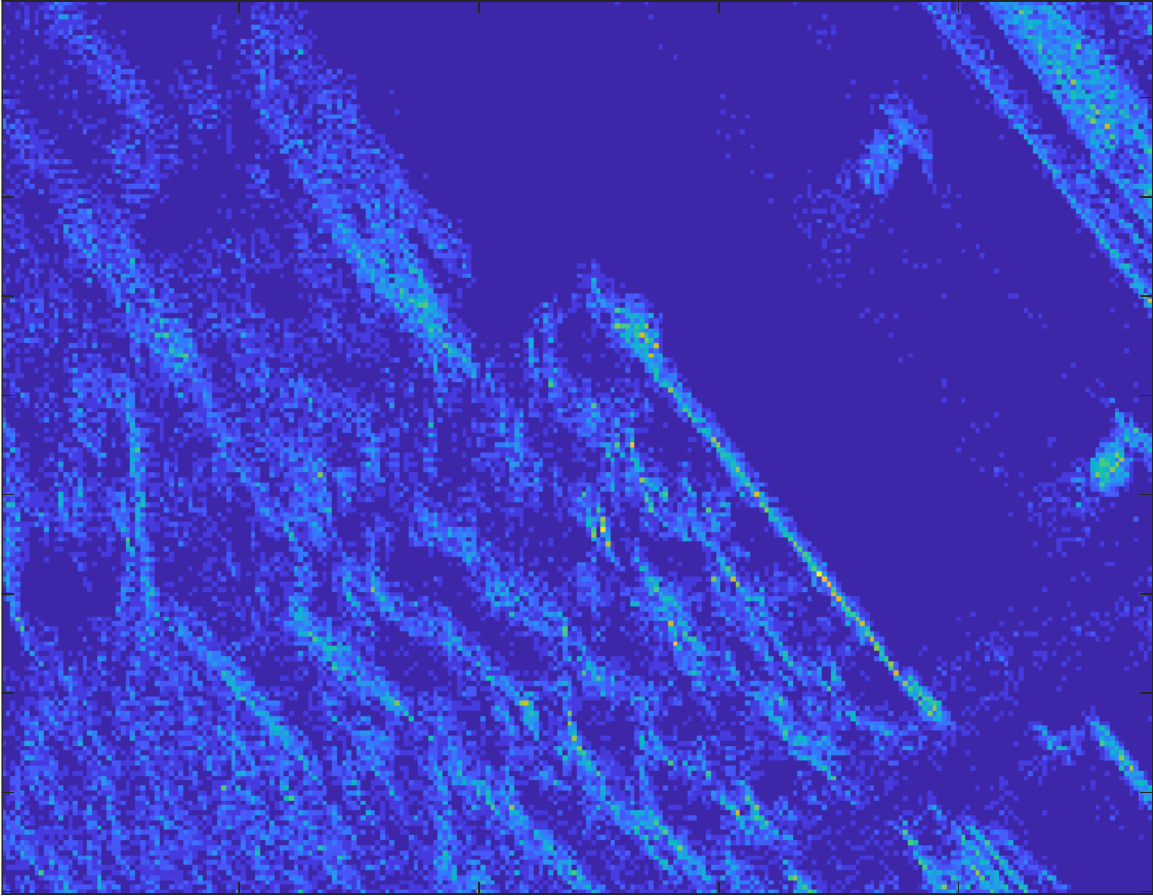}
		\end{tabularx}
		\caption{Qualitative results (motion compensated event images) from Subseq 1 and Subseq 2 of \emph{boxes}.}
		\label{fig:global_local}
	\end{figure*}
	
	\subsection{Quantitative benchmarking}
	
	We performed benchmarking using publicly available datasets~\cite{mueggler2017event,chin2019star}. We introduced two additional variants to CMGD1 and CMGD2:
	\begin{itemize}[leftmargin=1em,topsep=0.2em,parsep=0.2em,itemsep=0em]
		\item \textbf{CMRW1}: A variant of CM~\cite{Stoffregen_2019_CVPR} that uses a different objective function (called reward):
		\begin{equation*}    
		Rw(\vel) = C(\vel) + \dfrac{1}{P}\sum_{j=1}^P(e^{(-H(\bx_j;\vel))} +e^{(H(\bx_j;\vel))}).
		\end{equation*}
		The initial solution is taken as $\vel = 0$.
		\item \textbf{CMRW2}: Same as CMRW1 but initialised with the optimised $\vel$ from the previous subsequence.
		
		
		
	\end{itemize}
	We also compared against EventNet~\cite{sekikawa2019eventnet}, which is based on deep learning. However, similar to the error reported in~\cite{sekikawa2019eventnet}, we found that the error for EventNet was much higher than the error of the CM methods (e.g., the translated angular velocity error of the maximum of EventNet is 17.1\%, while it is around 5\% for CM). The lack of publicly available implementation also hampered objective testing of EventNet. We thus leave comparisons against deep learning methods as future work.
	
	\subsubsection{Rotational motion in indoor scene}

	We used event sequences \emph{poster}, \emph{boxes} and \emph{dynamic}  from~\cite{gallego2017accurate,mueggler2017event}, which were recorded using a Davis 240C~\cite{brandli2014240} under rotational motion over a static indoor scene. The ground truth motion was captured using a motion capture system. Each sequence has a duration of $1$ minute and around $100$ million events. For these sequences, the rotational motion was minor in a large part of the sequences (thereby producing trivial instances to CM), thus in our experiment we used only the final $15$ seconds of each sequence, which tended to have more significant motions.
	

	
	
	\begin{figure}[t]
		\centering
		\label{fig:RMS_result}
		\subfloat[\emph{boxes}]{\includegraphics[width=\linewidth]{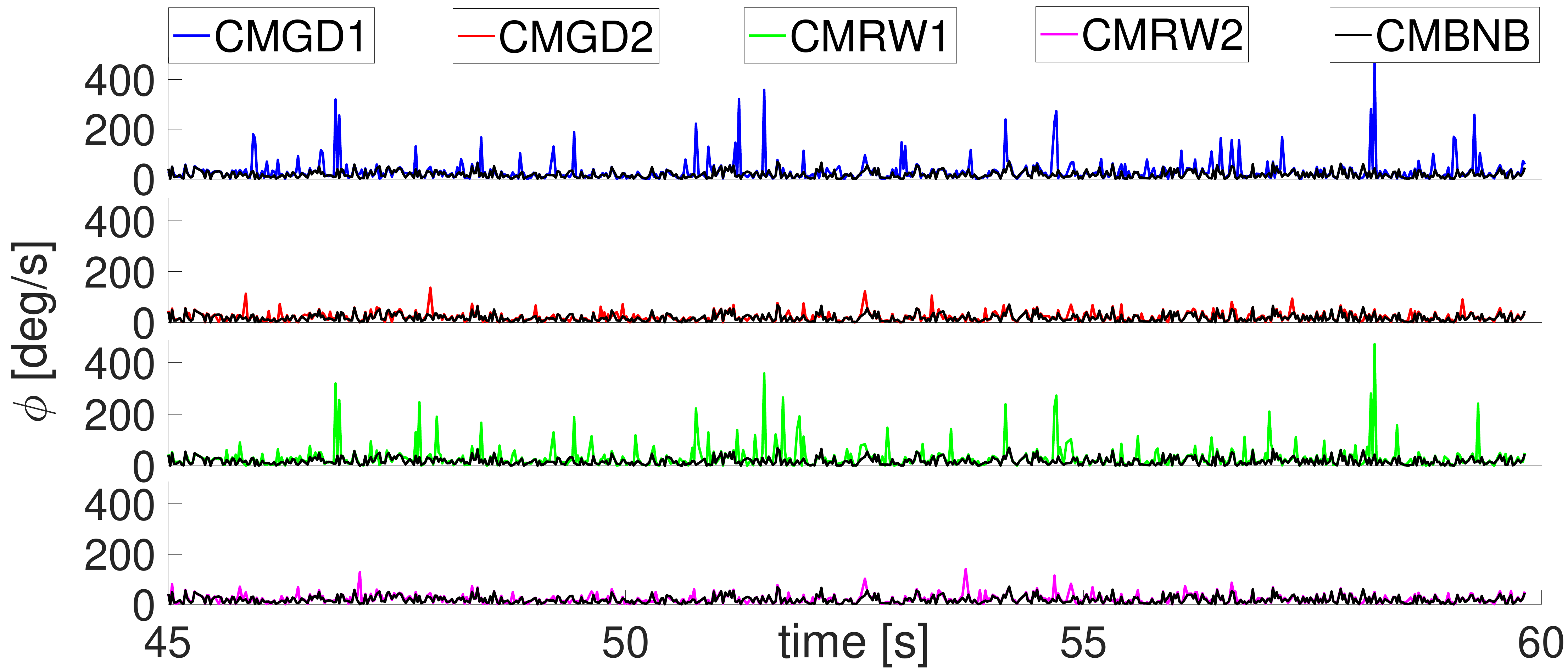}\label{fig:box_45_60}}\\
		\vspace{-1em}
		\subfloat[Sequence 1]{\includegraphics[width=0.95\linewidth]{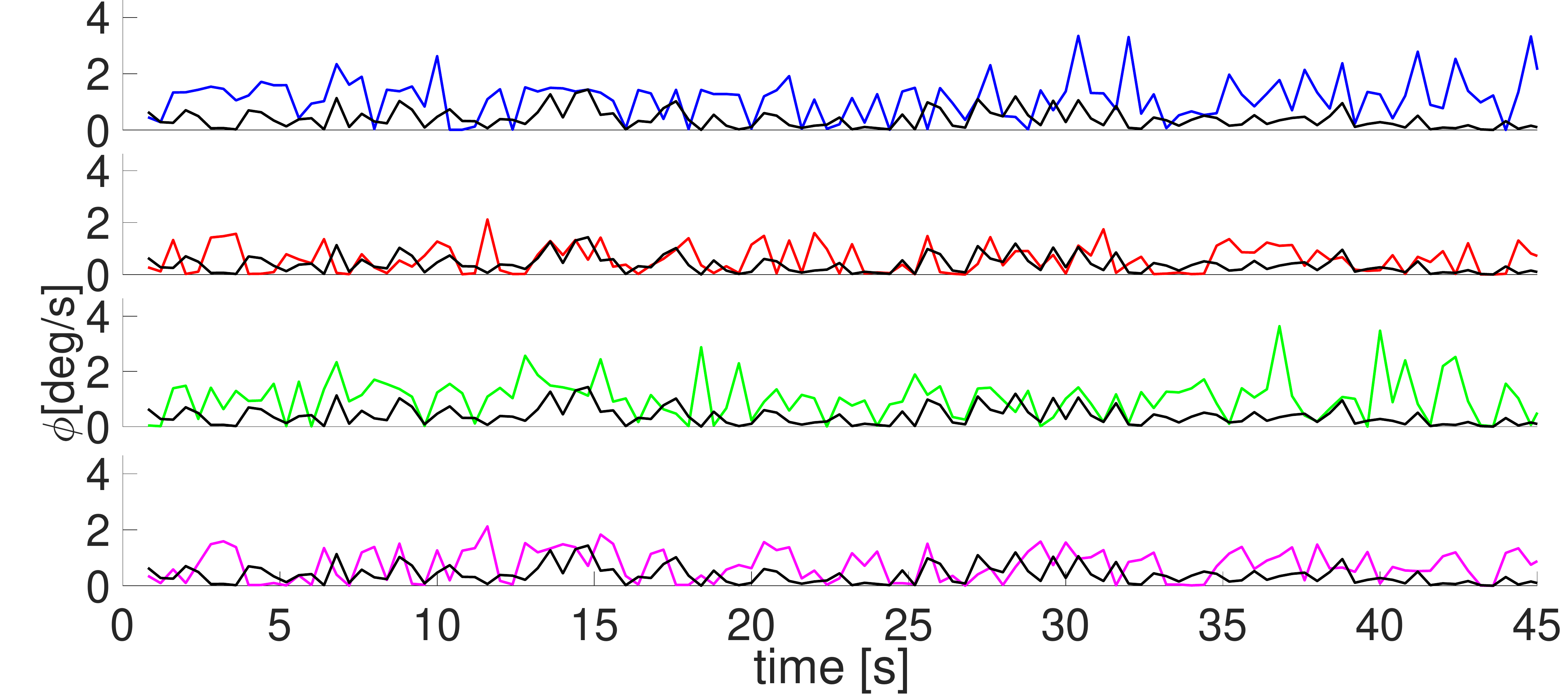}\label{fig:star_seq1}}
		\caption{Error $\phi$ of all CM methods (plotted against time) for (a) \emph{boxes} sequence~\cite{gallego2017accurate} and (b) Sequence 1 of~\cite{chin2019star}. In each subplot, the error of CMBNB is superimposed for reference.}
	\end{figure}

	We split each sequence into contiguous $10$ ms subsequences which were then subject to CM. For \emph{boxes} and \emph{poster}, each CM instance was of size $N\approx50k$, while for \emph{dynamic}, each instance was of size $N\approx25k$. For each CM instance, let $\bar{\vel}$ and $\hat{\vel}$ be the ground truth and estimated parameters. An error metric we used is
	\begin{align}
	\epsilon = \| \bar{\vel} - \hat{\vel} \|_2.
	\end{align}
	Our second error metric, which considers only differences in angular rate, is
	\begin{align}
	\phi = \abs{ \| \bar{\vel} \|_2 - \| \hat{\vel} \|_2}.
	\end{align}
	Fig.~\ref{fig:box_45_60} plots $\epsilon$ over all the CM instances for the \emph{boxes} sequence, while Table~\ref{tab:video} shows the average ($\mu$) and standard deviation ($\sigma$) of $\epsilon$ and $\phi$ over all CM instances.
	
	Amongst the local methods, the different objective functions did not yield significant differences in quality. The more important observation was that CMGD2/CMRW2 gave solutions of much higher quality than CMGD1/CMRW1, which confirms that good initialisation is essential for the local methods. Due to its exact nature, CMBnB provided the best quality in rotation estimation; its standard deviation of $\epsilon$ is also lower, indicating a higher stability over the sequences. Moreover, CMBnB does not require any initialisation, unlike the local methods.
	
	For the three sequences used (\emph{dynamic}, \emph{boxes}, \emph{poster}), the maximum absolute angular velocities are $500$, $670$ and $1000$ deg/s respectively~\cite{gallego2017accurate,mueggler2017event}. The average $\phi$ error of CMBnB of $10.09$, $17.97$ and $46.34$ deg/s thus translate into $2.2\%$, $2.7\%$ and $4.6\%$ of the maximum, respectively.
	
	
	\vspace{-1.5em}
	\paragraph{Runtime}
	
	The average runtimes of CMBnB over all instances in the three sequences (\emph{dynamic}, \emph{boxes}, \emph{poster}) were $163.2$, $278.3$ and $320.6$ seconds.
	CMGD optimised with the conjugate gradient solver in \texttt{fmincon} has average runtimes of $20.2$, $31.1$ and $35.3$ seconds. 
	
	\begin{table*}
		\centering
		{\small
			\begin{tabular}{ |c|cccc|cccc|cccc| } 
				\hline
				\multirow{2}{*}{Method}&	\multicolumn{4}{c|}{\emph{dynamic}} & \multicolumn{4}{c|}{\emph{boxes}}  & \multicolumn{4}{c|}{\emph{poster}} \\
				\cline{2-13}
				& $\mu(\epsilon)$& $\mu(\phi)$ &$\sigma(\epsilon)$ &$\sigma(\phi)$& $\mu(\epsilon)$& $\mu(\phi)$ &$\sigma(\epsilon)$ &$\sigma(\phi)$& $\mu(\epsilon)$& $\mu(\phi)$ &$\sigma(\epsilon)$ &$\sigma(\phi)$\\
				\hline
				CMGD1 & 21.52 & 20.07 & 24.38 & 31.13 & 31.29 & 31.47 & 34.30 & 45.94 & 56.58 & 54.92 & 47.03 & 58.95 \\ 
				CMGD2 & 15.09 & 13.31 & 10.39 & 12.08 & 22.01 & 21.70 & 12.79 & 18.83 & 49.64 & 50.12 & 35.93 & 42.77\\ 
				CMRW1 & 21.03 & 18.59 & 25.41 & 28.83 & 32.28 & 32.23 & 36.11 & 46.01 & 59.03 & 58.71 & 49.49 & 60.87  \\ 
				CMRW2 & 14.55 & 12.29 & 9.85 & 11.21 & 21.95 & 21.41 & 13.71 & 18.42 & 49.49 & 50.04 & 37.51 & 43.35 \\
				CMBnB & \textbf{11.93} & \textbf{10.09} & \textbf{7.82} & \textbf{8.74} & \textbf{18.76} & \textbf{17.97} & \textbf{10.06}& \textbf{14.66} & \textbf{44.34} & \textbf{46.34} & \textbf{24.79} & \textbf{36.79} \\ 
				\hline
			\end{tabular}\\
		}
		\caption{Average and standard deviation of $\epsilon$ and $\phi$ over all CM instances in \emph{boxes}, \emph{dynamic}, and \emph{poster} (best result bolded).}
		\label{tab:video}
		
	\end{table*}
	\vspace{-1em}
	\subsubsection{Attitude estimation}
	
	We repeated the above experiment on the event-based star tracking (attitude estimation) dataset of~\cite{chin2019star,chin2019event}, which contains $11$ event sequences of rotational motion over a star field. Each sequence has a constant angular velocity of $4$ deg/s over a duration of $45$ seconds and around $1.5$ million events. We split each sequence into $400$ ms subsequences, which yielded $N \approx 15,000$ events per subsequence. Fig.~\ref{fig:star_seq1} plots the $\phi$ errors for Sequence 1 in the dataset. The average errors and standard deviation over all CM instances are shown in Table~\ref{tab:star}. Again, CMBnB gave the highest quality solutions; its average $\phi$ error of $0.234$ deg/s translate into $5.8\%$ of the maximum. The average runtime of CMBnB and CMGD over all instances were $80.7$ and $11.1$ seconds.
	
	\begin{table}\centering
		{\small
			\begin{tabular}{ |c|c c c c| } 
				\hline
				Method & $\mu(\epsilon)$& $\mu(\phi)$ &$\sigma(\epsilon)$ &$\sigma(\phi)$ \\
				\hline
				CMGD1 & 0.448 & 0.652 & 0.314 & 0.486 \\ 
				CMGD2 & 0.294 & 0.423 & 0.232 & 0.323 \\ 	
				CMRW1 & 0.429 & 0.601 & 0.346 & 0.468 \\ 
				CMRW2 & 0.318 & 0.461 & 0.234 & 0.341 \\
				CMBnB & \textbf{0.174} & \textbf{0.234} & \textbf{0.168} & \textbf{0.217} \\
				\hline
			\end{tabular}\\
		}
		\caption{Average and standard deviation of $\epsilon$ and $\phi$ over all CM instances in the star tracking dataset (best result bolded). 
		}
		\label{tab:star}
	\end{table}
	\section{Conclusions}
	We proposed a novel globally optimal algorithm for CM based on BnB. The theoretical validity of our algorithm has been established, and the experiments showed that it greatly outperformed local methods in terms of solution quality.
	
	\section*{Acknowledgements}
	This work was supported by ARC DP200101675. We thank S.~Bagchi for the implementation of CM and E.~Mueggler for providing the dataset used in experiments.
	\clearpage
	

	
	\onecolumn 
	\begin{center}
		\Large\textbf{{Supplementary Material:\\
				Globally Optimal Contrast Maximisation for Event-based Motion Estimation}}
	\end{center}
	\begin{multicols}{2} 
		\setcounter{section}{0}
		\newcommand{\bn}{\mathbf{n}}
		
		\renewcommand\thesection{\Alph{section}}

		\graphicspath{{figure/}}

		\section{Geometric derivations of the elliptical region}
		Here we present the analytic form of the centre $\bc$, semi-major axis $\by$, and semi-minor axis $\bz$ of the elliptical region $\cL$ (see Sec.~\ref{sec:bounding}) following the method in~\cite{liu20182d,clackdoyle2011centers} (subscript $i$  and explicit dependency on $\rbox$ are omitted for simplicity). See Fig.~\ref{fig:cone} in the main text for a visual representation of the aforementioned geometric entities.

		\begin{enumerate}
			\item Calculate direction of the cone-beam 
			\begin{equation}
			\hat{\bu} = 
			\dfrac{\bR(t ; \vel_\bc) \tilde{\bu}}
			{\|\bR(t ; \vel_\bc) \tilde{\bu}\|_2},
			\end{equation}
			its radius 
			\begin{equation}
			r = \sin{\alpha({\rbox})},
			\end{equation}
			and the norm vector to the image plane $\hat{\bn} = [0\; 0\; 1]^T$.
			\item Calculate the semi-major axis direction within the cone-beam
			\begin{equation}
			\hat{\by} = \dfrac{\hat{\bu} \times (\hat{\bu} \times \hat{\bn})}{\|\hat{\bu} \times (\hat{\bu} \times \hat{\bn})\|}
			\end{equation}
			and semi-minor axis direction
			\begin{equation}
			\hat{\bz} = \dfrac{\hat{\by} \times \hat{\bn}}{\|\hat{\by} \times \hat{\bn}\|}.
			\end{equation}
			\item Calculate the intersecting points between the ray with the direction of the semi-major axis and the cone-beam
			\begin{equation}\label{eq:inty}
			\begin{aligned}
			\by^{(a)} &= \hat{\bu} - r \hat \by \\
			\by^{(b)} &= \hat{\bu} + r \hat \by, 
			\end{aligned}
			\end{equation}
			and the analogous points for the semi-minor axis
			\begin{equation}\label{eq:intz}
			\begin{aligned}
			\bz^{(a)} &= \hat{\bu} - r \hat \bz \\
			\bz^{(b)} &= \hat{\bu} + r \hat \bz. 
			\end{aligned}
			\end{equation}
			
			\item Obtain $\by'^{(a)}$, $\by'^{(b)}$, $\bz'^{(a)}$ and $\bz'^{(b)}$ as the projection of~\eqref{eq:inty}~and~\eqref{eq:intz} into the image plane with the intrinsic matrix $\bK$.
			
			\item Calculate $\bc = 0.5(\by'^{(a)} + \by'^{(b)})$, $\by = \|\by'^{(a)}-\by'^{(b)}\|_2$, and $\bz = \|\bz'^{(a)}-\bz'^{(b)}\|_2$.
		\end{enumerate}

		\section{Proofs}
		We state our integer quadratic problem again. 
		\begin{align}\tag{IQP}\label{eq:iqp}
		\begin{aligned}
		\overline{S}^*_d(\rbox) = \max_{\bZ \in \{0,1\}^{N \times K}} \quad & \sum^K_{k=1} \left( \sum_{i=1}^N \bZ_{i,k} \bM_{i,k} \right)^2 \\
		\text{s.t.} \quad & \bZ_{i,k} \le \bM_{i,k}, \;\; \forall i,k, \\
		& \sum_{k=1}^K \bZ_{i,k} = 1, \;\; \forall i.
		\end{aligned}
		\end{align}
		
		\subsection{Proof of Lemma~1 in the main text}
		\begin{lemma}\label{lem:pix_upbnd}
			\begin{align}\label{eq:lem1}
			\overline{H}_c(\bx_j;\rbox) \ge \max_{\vel\in \mathbb{B}}~H_c(\bx_j;\vel)
			\end{align}
			with equality achieved if $\rbox$ is singleton, i.e., $\rbox = \{ \vel \}$.
		\end{lemma}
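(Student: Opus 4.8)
The plan is to reduce the pixel-wise bound to a per-event bound and then combine the per-event bounds by a sum-of-maxima argument. The starting point is the containment property noted just before the lemma: by the axis--angle inequality~\eqref{eq:axisanglebound}, the rotated ray $\bR(t_i;\vel)\tilde{\bu}_i$ lies in the cone $\cV_i(\rbox)$ for every $\vel \in \rbox$, so its pinhole projection lies in the elliptical region $\cL_i(\rbox)$ and hence in the enclosing disc $\cD_i(\rbox)$. Thus every warped position $\bu'_i = f(\bu_i,t_i;\vel)$ with $\vel \in \rbox$ satisfies $\|\bu'_i - \bc_i(\rbox)\| \le \|\by_i(\rbox)\|$.

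First I would fix a pixel $\bx_j$ and a single event $i$, and bound $\bm{\delta}(\bx_j - \bu'_i)$ over all $\vel \in \rbox$. Treating $\bm{\delta}$ as a radially symmetric, nonincreasing function of distance (as for the Gaussian kernel of~\cite{gallego2018unifying}), its value is largest when $\bu'_i$ is as close to $\bx_j$ as the disc permits. The smallest attainable distance from $\bx_j$ to a point of $\cD_i(\rbox)$ is exactly $\max(\|\bx_j - \bc_i(\rbox)\| - \|\by_i(\rbox)\|,\,0)$, i.e.\ the distance to the disc boundary when $\bx_j$ lies outside and zero when it lies inside. Monotonicity then gives $\max_{\vel \in \rbox}\bm{\delta}(\bx_j - \bu'_i) \le \bm{\delta}\big(\max(\|\bx_j - \bc_i(\rbox)\| - \|\by_i(\rbox)\|,\,0)\big)$, which is precisely the $i$-th summand of $\overline{H}_c(\bx_j;\rbox)$.

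Next I would assemble the per-event bounds using $\max_{\vel}\sum_i(\cdot) \le \sum_i\max_{\vel}(\cdot)$: any $\vel$ attaining the maximum of $H_c(\bx_j;\vel)$ is dominated term by term by the independent per-event maxima, so summing the per-event inequalities over $i$ yields $\max_{\vel \in \rbox} H_c(\bx_j;\vel) \le \overline{H}_c(\bx_j;\rbox)$, establishing~\eqref{eq:lem1}. For the equality claim when $\rbox = \{\vel\}$, I would observe that $\alpha_i(\rbox) = 0$, so the cone degenerates to a single ray and $\cD_i(\rbox)$ collapses to the point $\bc_i(\rbox) = f(\bu_i,t_i;\vel)$ with $\|\by_i(\rbox)\| = 0$; the clamped distance then reduces to $\|\bx_j - \bu'_i\|$ and each summand of $\overline{H}_c$ becomes $\bm{\delta}(\bx_j - \bu'_i)$, recovering $H_c(\bx_j;\vel)$ exactly.

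The main subtlety is the sum-of-maxima step, which is what keeps $\overline{H}_c$ a genuine (if conservative) upper bound, since different events may attain their worst case at different $\vel$; it is also where looseness enters for a non-singleton $\rbox$. The other ingredient that must be made explicit is the monotone-radial assumption on $\bm{\delta}$, without which the per-event reduction from ``maximise the kernel over $\cD_i(\rbox)$'' to ``minimise the distance to $\cD_i(\rbox)$'' would not be valid.
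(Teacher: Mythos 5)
Your argument is correct and matches the paper's in substance: both rest on the containment $f(\bu_i,t_i;\vel)\in\cD_i(\rbox)$ for all $\vel\in\rbox$ together with the radial monotonicity of $\bm{\delta}$, applied term by term over events. The paper merely phrases this same termwise comparison as a proof by contradiction, whereas you state it directly and make explicit the sum-of-maxima step and the monotone-kernel assumption that the paper leaves implicit.
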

		\begin{proof}
			This lemma can be demonstrated by contradiction. Let $\vel^*$ be the optimiser for the RHS of~\eqref{eq:lem1}. If 
			\begin{equation} \label{eq:lem1_cont}
			H_c(\bx_j;\vel^*) > \overline{H}_c(\bx_j;\rbox),
			\end{equation}
			it follows from the definition of pixel intensity (Eq.~\eqref{eq:rt} ) and its upper bound (Eq.~\eqref{eq:pixelvalupbnd}) that 
			\begin{equation}
			\label{eq:lem1_violet}
			\|\bx_j-f(\bu_i,t_i,\vel^*)\| < \max\left(\|\bx_j-\bc_i(\rbox)\|-\|\by_i(\rbox)\|,0\right),
			\end{equation}
			for at least one $i=1,\ldots, N$.
			
			
			
			In words, the shortest distance between $\bx_j$ and the disc $\cD_i(\rbox)$ is greater than the distance between $\bx_j$ and the optimal position $f(\bu_i,t_i,\vel^*)$. However, $f(\bu_i,t_i,\vel^*)$ is always inside the disc $\cD_i(\rbox)$, and hence Eq.~\eqref{eq:lem1_violet} cannot hold. If $\rbox = \{ \vel \}$, then from definition \eqref{eq:pixelvalupbnd} in the main text $\overline{H}_c(\bx_j;\rbox) = H_c(\bx_j;\vel)$.
		\end{proof}

		\subsection{Proof of Lemma~2 in the main text}
		\begin{lemma}\label{lem:IQP}
			\begin{align}
			\label{eq:qip_upper}
			\overline{S}^*_d(\rbox) \ge \max_{\vel\in \rbox}~\sum_{j=1}^P H_d(\bx_j;\vel)^2,
			\end{align}
			with equality achieved if $\rbox$ is singleton, i.e., $\rbox = \{ \vel \}$.
		\end{lemma}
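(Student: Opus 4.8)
The plan is to prove the inequality by exhibiting, for each fixed $\vel \in \rbox$, a feasible point of~\eqref{eq:iqp} whose objective value already dominates $\sum_{j=1}^P H_d(\bx_j;\vel)^2$. Since $\overline{S}^*_d(\rbox)$ is the maximum over all feasible points, the bound then follows by maximising the left-hand side over $\vel \in \rbox$. The equality claim will be handled by specialising this construction to the singleton case.

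First I would fix $\vel \in \rbox$ and, for each event $i$, let $\bx_{j(i)}$ be the pixel containing the warped position $f(\bu_i,t_i;\vel)$. Because $\cD_i(\rbox)$ contains every warped position attainable over $\rbox$ (by the construction of the disc), the point $f(\bu_i,t_i;\vel)$ lies in $\cD_i(\rbox)$, so disc $i$ intersects $\bx_{j(i)}$ and hence the unique connected component $\cG_{k(i)}$ containing it; therefore $\bM_{i,k(i)}=1$. Setting $\bZ_{i,k}=1$ when $k=k(i)$ and $0$ otherwise yields a feasible solution: each row sums to one, and $\bZ_{i,k}\le \bM_{i,k}$ holds because the sole nonzero entry of row $i$ sits at $(i,k(i))$, where $\bM=1$. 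The column sum then counts the events warped into the pixels of $\cG_k$, i.e. $\sum_i \bZ_{i,k} = \sum_{\bx_j\in\cG_k} H_d(\bx_j;\vel)$, so this solution attains objective value $\sum_k \bigl(\sum_{\bx_j\in\cG_k} H_d(\bx_j;\vel)\bigr)^2$.

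The key step is then a grouping inequality. Since the components $\{\cG_k\}$ partition all of $X$ and every $H_d(\bx_j;\vel)\ge 0$, the cross terms within each component are nonnegative, giving
\[
\sum_{j=1}^P H_d(\bx_j;\vel)^2 = \sum_k \sum_{\bx_j\in\cG_k} H_d(\bx_j;\vel)^2 \le \sum_k \Bigl(\sum_{\bx_j\in\cG_k} H_d(\bx_j;\vel)\Bigr)^2 \le \overline{S}^*_d(\rbox).
\]
Maximising the left side over $\vel \in \rbox$ establishes the claimed inequality.

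For the equality when $\rbox=\{\vel\}$, I would note that each disc collapses to the single point $f(\bu_i,t_i;\vel)$, since its radius $\|\by_i(\rbox)\|$ vanishes as $\alpha_i(\rbox)=0$; consequently every row of $\bM$ has exactly one nonzero entry, and the constraints $\sum_k\bZ_{i,k}=1$ together with $\bZ_{i,k}\le\bM_{i,k}$ force $\bZ=\bM$, so the feasible point constructed above is the unique (hence optimal) one. The delicate part will be confirming that the grouping inequality is simultaneously tight: two distinct occupied pixels carry disjoint, nonempty disc-signatures, so they cannot lie in the same connected component, meaning each $\cG_k$ contains at most one pixel with $H_d>0$. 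Each grouped square then reduces to a single term, so $\sum_k(\sum_{\bx_j\in\cG_k}H_d)^2 = \sum_j H_d^2$, closing the chain to equality.
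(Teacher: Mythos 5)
Your proof is correct, and it rests on the same underlying mechanism as the paper's --- for any fixed $\vel\in\rbox$ the true warp induces a feasible assignment of events whose objective value dominates $\sum_{j} H_d(\bx_j;\vel)^2$ --- but the route is genuinely different. The paper first restates \ref{eq:qip} as an equivalent \emph{pixel-level} program (each event assigned to one pixel that its disc intersects), writes the right-hand side of the lemma as a mixed-integer program in which the assignment is dictated by a single $\vel$, and concludes that the former is a relaxation of the latter; the equivalence between the CC-level program \ref{eq:qip} and its pixel-level restatement is asserted as a ``pixel-wise reformulation'' without proof. You instead work directly at the level of connected components: you exhibit the explicit feasible $\bZ$ sending event $i$ to the CC containing its warped pixel, and you bridge the pixel-level quantity and the CC-level objective with the superadditivity inequality $\sum_{\bx_j\in\cG_k} H_d(\bx_j;\vel)^2 \le \bigl(\sum_{\bx_j\in\cG_k} H_d(\bx_j;\vel)\bigr)^2$ for nonnegative entries. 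This buys a self-contained argument that does not lean on the unproven pixel/CC equivalence, at the price of having to verify tightness of the grouping step in the singleton case --- which you do correctly by observing that when every disc degenerates to a point, occupied pixels have pairwise disjoint disc signatures and therefore lie in distinct CCs, so each $\cG_k$ carries at most one nonzero term. The one loose end, which the paper shares, concerns events whose warp under the chosen $\vel$ falls outside the image plane: the row constraint still forces such an $i$ to be assigned somewhere, but any $k$ with $\bM_{i,k}=1$ (which exists by the paper's standing assumption that every retained disc meets a pixel) works and only increases the constructed objective, so the inequality is unaffected.
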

		\begin{proof}
			We pixel-wisely reformulate~\ref{eq:iqp}:
			\begin{align}\tag{P-IQP}\label{eq:pqip}
			\begin{aligned}
			\overline{S}^*_d(\rbox) = \max_{\bQ \in \{0,1\}^{N \times P}} \quad & \sum^P_{j=1} \left( \sum_{i=1}^N \bQ_{i,j}  \right)^2 \\
			\text{s.t.} \quad & \bQ_{i,j} \le \bT_{i,j}, \;\; \forall i,j, \\
			& \sum_{j=1}^P \bQ_{i,j} = 1, \;\; \forall i,
			\end{aligned}
			\end{align}
			and we express the RHS of~\eqref{eq:qip_upper} as a mixed integer quadratic program:
			\begin{align}\tag{MIQP}\label{eq:mqip}
			\begin{aligned}
			\max_{\vel \in \rbox,\bQ \in \{0,1\}^{N \times P}} \quad & \sum^P_{j=1} \left( \sum_{i=1}^N \bQ_{i,j} \right)^2 \\
			\text{s.t.} \quad & \bQ_{i,j} = \mathbb{I}(f(\bu_i, t_i ;\, \vel)~\text{in}~\bx_j),\; \forall i,j.
			\end{aligned}
			\end{align}
			Problem \ref{eq:pqip} is a relaxed version of~\ref{eq:mqip} - hence~\eqref{eq:qip_upper} holds - as for every $\bm{e}_i$, the feasible pixel $\bx_j$ is in $\cD_i(\rbox)$; whereas for \ref{eq:mqip}, the feasible pixel is dictated by a single $\vel\in\rbox$. If $\rbox$ collapses into $\vel$, every event $\bm{e}_i$ can intersect only one pixel $\bx_j$, hence $\bT_{i,j} = \mathbb{I}(f(\bu_i, t_i ;\, \vel)~\text{in}~\bx_j),\; \forall i,j$;  $\sum_{j=1}^P \bT_{i,j} = 1, \; \forall i$; and  $\sum_{j=1}^P \bQ_{i,j} = 1  \implies \bQ_{i,j} = \bT_{i.j}, \forall i$; therefore, \ref{eq:mqip} is equivalent to~\ref{eq:pqip} if $\rbox = \{\vel\}$.

		\end{proof}

		\subsection{Proof of Lemma 3 in the main text}
		\begin{lemma}
			Problem~\ref{eq:iqp} has the same solution if $\bM$ is replaced with $\bM^\prime$.
		\end{lemma}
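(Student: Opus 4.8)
The plan is to prove that the optimal value of~\ref{eq:iqp} is unchanged when $\bM$ is replaced by $\bM'$, by establishing two inequalities between the optima of the two integer programs. It helps to read~\ref{eq:iqp} combinatorially: the constraints $\bZ_{i,k}\le\bM_{i,k}$ and $\sum_k\bZ_{i,k}=1$ assign each disc $\cD_i(\rbox)$ (row $i$) to exactly one CC (column $k$) that it intersects, and the objective $\sum_k(\sum_i\bZ_{i,k}\bM_{i,k})^2$ is the sum over columns of the squared number of discs routed to that column. The engine of the whole argument is the superadditivity $(a+b)^2\ge a^2+b^2$ for $a,b\ge 0$: merging the discs routed to two columns into a single column never decreases the objective. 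I would keep $\mathrm{supp}(\bM_{:,k})$ (the set of rows with a $1$ in column $k$) as the central bookkeeping object, since $\bM_{i,k}\le\bM_{i,\eta}$ for all $i$ is exactly $\mathrm{supp}(\bM_{:,k})\subseteq\mathrm{supp}(\bM_{:,\eta})$.

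The easy direction, that the optimum over $\bM'$ is at most the optimum over $\bM$, follows by zero-padding: any $\bZ$ feasible for the program over $\bM'$ extends to a feasible point of the program over $\bM$ by setting the entries in the non-dominant columns to $0$, which preserves both constraints and the objective. For this to be meaningful the restricted program must be feasible, i.e.\ every disc must intersect at least one dominant column; this holds because, by~\eqref{eq:M'}, the densest CC contained in $\cD_i(\rbox)$ is dominant and is by construction intersected by $\cD_i(\rbox)$ (it is precisely a column emitted for row $i$ by Algorithm~\ref{al:M'}).

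For the reverse inequality I would take an optimal $\bZ$ over $\bM$ and rewrite it as a partition of the discs into groups $G_k=\{i:\bZ_{i,k}=1\}\subseteq\mathrm{supp}(\bM_{:,k})$ with value $\sum_k|G_k|^2$. In the clean situation, each column $k$ carrying a nonempty group is coordinatewise dominated by some dominant column $\eta$, i.e.\ $\mathrm{supp}(\bM_{:,k})\subseteq\mathrm{supp}(\bM_{:,\eta})$ (this is the relation encoded by the set $\Lambda$ in the definition of a dominant column). I would then reassign the whole of $G_k$ to $\eta$, which is feasible since support containment guarantees every disc of $G_k$ intersects $\cG_\eta$. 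Distinct source columns may collapse onto the same $\eta$, but by superadditivity $\sum_\eta(\sum_{k\mapsto\eta}|G_k|)^2\ge\sum_k|G_k|^2$, so the value does not drop. As the reassigned solution uses only dominant columns, its value lower-bounds the optimum over $\bM'$, and combining the two inequalities gives equality together with a map between optimisers.

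The main obstacle is justifying the domination property invoked above. The geometric content of~\eqref{eq:M'} is needed: picking a disc $\cD_a(\rbox)$ with $a\in\mathrm{supp}(\bM_{:,k})$ (so $\cG_k\subseteq\cD_a(\rbox)$) and following Algorithm~\ref{al:M'} returns a retained column, but~\eqref{eq:M'} only guarantees its density satisfies $\Delta_\eta\ge\Delta_k$, whereas the argument needs the stronger subset relation on supports. Extracting the subset relation from the connected-component structure, and ruling out the case where a column has a maximal support that is not the densest CC inside any single disc (so that no one dominant column contains its support), is the delicate step. In that case the single-column reassignment fails and one must instead reroute such a group \emph{across several} denser dominant columns and show, via an optimality/exchange argument leaning on the quadratic objective, that the total never decreases; handling this global rerouting consistently is where I expect the real work to lie. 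The remaining points are routine: feasibility of each reassignment is immediate from support containment, and monotonicity under merging is the superadditivity already used.
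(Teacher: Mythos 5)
Your proposal follows essentially the same route as the paper's proof. The paper removes non-dominant columns one at a time: for a removed column $K$ it invokes a dominant column $\eta$ with $\bM_{i,K}\le\bM_{i,\eta}$ for all $i$ (its eq.~\eqref{eq:dominate}), moves the entire group assigned to $K$ onto $\eta$, and uses optimality of $\bZ^*$ to conclude the objective is unchanged --- which is your superadditivity $(a+b)^2\ge a^2+b^2$ phrased as a contradiction (at an optimum the target column must carry an empty group, so the merge is value-preserving). Your ``easy direction'' by zero-padding and your feasibility check (every disc intersects at least one dominant column, via~\eqref{eq:M'}/Algorithm~\ref{al:M'}) are correct and are left implicit in the paper.

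The step you single out as delicate is exactly the step the paper does not prove: eq.~\eqref{eq:dominate} is asserted, not derived. Your suspicion is justified, and your proposed fallback does not close the gap. Under the operational notion of dominance actually used (eq.~\eqref{eq:M'} and Algorithm~\ref{al:M'}: a retained column is a densest CC inside some disc), a column can have support that is maximal under containment and yet not be the densest CC inside any disc covering it --- for instance, three discs $a,b,c$ with a common triple-overlap CC of density $3$, where each of $\cD_a(\rbox),\cD_b(\rbox),\cD_c(\rbox)$ separately contains a density-$4$ CC formed with unrelated discs. Then no dominant column contains $\{a,b,c\}$ in its support, the single-column move is impossible, and the ``reroute across several dominant columns'' repair fails in general: splitting a group of size $g$ into pieces attached to otherwise-empty columns changes the objective by $g-g^2<0$, and one can arrange the surrounding discs so that those target columns are indeed otherwise empty at the optimum, making the optimum over $\bM$ strictly exceed that over $\bM'$. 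So the missing idea is not a cleverer exchange argument but the domination property itself; it holds automatically only if ``dominant'' is read as ``maximal in the support-containment partial order'' (immediate in a finite poset), and then one must separately verify that Algorithm~\ref{al:M'} returns all such columns. Neither your proposal nor the paper's own proof resolves this.
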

		\begin{proof}
			
			We show that removing an arbitrary non-dominant column from $\bM$ does not change the solution of~\ref{eq:iqp}. Without loss of generality, assume the last column of $\bM$ is non-dominant. Equivalent to solving~\ref{eq:iqp} on $\bM$ without its last column is the following~\ref{eq:iqp} reformulation:
			\begin{subequations}\label{eq:qip_remove_non}
				\begin{align}
				\overline{S}^*_d(\rbox) = \max_{\bZ \in \{0,1\}^{N \times K}} \quad & \sum^{K-1}_{k=1} \left( \sum_{i=1}^N \bZ_{i,k} \bM_{i,k} \right)^2 +\\
				& \left( \sum_{i=1}^N \bZ_{i,K} \bM_{i,K} \right)^2\\
				\text{s.t.} 
				\quad & \bZ_{i,k} \le \bM_{i,k}, \;\; \forall i,k, \label{eq:iqp2_c1}\\
				& \sum_{k=1}^{K} \bZ_{i,k} = 1, \;\; \forall i, \label{eq:lm3_c3}\\
				& \bZ_{i,K} =  0,\;\; \forall i,\label{eq:remcol}
				\end{align}
				\label{eq:iqp2}
			\end{subequations}
			which is same as~\ref{eq:iqp} but with additional constraint~\eqref{eq:remcol}. Since $\bM_{:,K}$ is non-dominant, it must exists a dominant column $\bM_{:,\eta}$ such that
			\begin{equation}
			\label{eq:dominate}
			\bM_{i,K} \leq \bM_{i,\eta},\;\; \forall i.
			\end{equation}
			Hence, if $\bM_{i,K} = 1$, then $\bM_{i,\eta} = 1$ must holds $\forall i$. Let $\bZ^*$ be the optimiser of~\ref{eq:iqp} with $\bZ^*_{i_a,K},\ldots,\bZ^*_{i_b,K} = 1$. Let define $\bZ'^*$ same as $\bZ^*$ but with $\bZ'^*_{:,K}=\mathbf{0}$ and $\bZ'^*_{i_a,\eta},\ldots,\bZ'^*_{i_b,\eta} = 1$. In words, we ``move'' the $1$ values from the last column to its dominant one. We show that $\bZ'^*$ is an equivalent solution (same objective value than $\bZ^*$). $\bZ'^*$ is feasible since~\eqref{eq:dominate} ensures condition~\eqref{eq:iqp2_c1}, \eqref{eq:lm3_c3} is not affected by ``moving ones'' in the same row, and~\eqref{eq:remcol} is true for the definition of $\bZ'^*$. Finally we show that 
			\begin{equation}\label{eq:same}
			\sum_{i=1}^N \bZ^*_{i,K} \bM_{i,K} = \sum_{i=1}^N \bZ'^*_{i,\eta} \bM_{i,\eta} 
			\end{equation}
			therefore $\bZ'^*$ produces same objective value than~\ref{eq:iqp}. We prove \eqref{eq:same} by contradiction. Assume exists at least one $i' \not\in \{i_a, \ldots, i_b \}$ such that $\bZ^*_{i',\eta} = 1 \implies \bZ'^*_{i',\eta} = 1$. Then, $\bZ'^*$ produces a larger objective value than $\bZ^*$ which is a contradiction since problem~\eqref{eq:iqp2} is most restricted than~\ref{eq:iqp}. Thus, removing any arbitrary non-dominant column will not change the solution which implies this is also true if we remove all non-dominant columns (\ie, if we replace $\bM$ with $\bM'$). 

		\end{proof}
		
		\subsection{Proof of Lemma 4 in the main text}
		\begin{lemma}
			\begin{align}
			\overline{S}_d(\rbox) \ge \overline{S}^*_d(\rbox)
			\label{eq:lm4}
			\end{align}
			with equality achieved if $\rbox$ is singleton, i.e., $\rbox = \{ \vel \}$.
		\end{lemma}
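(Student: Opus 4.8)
The plan is to establish \eqref{eq:lm4} by showing that \ref{eq:rqip} is a genuine relaxation of \ref{eq:iqp} (formulated over the dominant columns $\bM'$), so its optimum can only be larger, and then to verify that the relaxation becomes tight precisely when $\rbox$ collapses to a point. The inequality is then a routine relaxation argument, while the equality requires a structural analysis of $\bM'$ in the singleton limit.

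For the inequality, I would first invoke Lemma~3, which guarantees that the value $\overline{S}^*_d(\rbox)$ of \ref{eq:iqp} is unchanged when $\bM$ is replaced by its dominant columns $\bM'$. It therefore suffices to compare \ref{eq:iqp} over $\bM'$ with \ref{eq:rqip}, both of which share the objective $\sum_k (\sum_i \bZ_{i,k}\bM'_{i,k})^2$ and the box constraint $\bZ_{i,k}\le \bM'_{i,k}$. The only difference is the selection constraint: \ref{eq:iqp} enforces $\sum_k \bZ_{i,k}=1$ for every row $i$, whereas \ref{eq:rqip} enforces only the aggregate $\sum_{i,k}\bZ_{i,k}=N$. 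Summing the row-wise equalities over $i$ shows that every feasible $\bZ$ for \ref{eq:iqp} satisfies $\sum_{i,k}\bZ_{i,k}=N$ and is hence feasible for \ref{eq:rqip}. Since \ref{eq:rqip} maximises the same objective over a superset of the feasible region, $\overline{S}_d(\rbox)\ge \overline{S}^*_d(\rbox)$, which is \eqref{eq:lm4}.

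For the equality when $\rbox=\{\vel\}$, I would first argue that in the singleton case each disc $\cD_i(\rbox)$ reduces to a single point lying in exactly one pixel, so each row of $\bT$ --- and hence of $\bM$ --- contains exactly one nonzero entry. Consequently the columns of $\bM$ (and of $\bM'$) have pairwise disjoint row-supports, the density $\Delta_k$ of a column equals the event count $H_d$ of the corresponding occupied pixel, and $\sum_k \Delta_k = N$ because every disc is counted in exactly one occupied pixel. Under the box constraint $\bZ_{i,k}\le \bM'_{i,k}$ each row $i$ can place its single admissible $1$ in only one column, so the aggregate constraint $\sum_{i,k}\bZ_{i,k}=N$ forces $\bZ$ to coincide with $\bM'$; equivalently, the greedy closed form yields $\gamma=K'-1$ with zero shortfall, whence $\overline{S}_d(\rbox)=\sum_k \Delta_k^2$. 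This is exactly the value attained by \ref{eq:iqp} in the singleton case (cf.\ Lemma~2), so $\overline{S}_d(\rbox)=\overline{S}^*_d(\rbox)$.

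The main obstacle is the equality direction: it hinges on correctly characterising the dominant-column matrix $\bM'$ in the singleton limit. One must verify that, once each disc degenerates to a point, all occupied-pixel columns become dominant and mutually support-disjoint with densities summing to $N$; only then does the loosened aggregate constraint of \ref{eq:rqip} fail to buy any additional freedom and collapse back onto the per-row assignment of \ref{eq:iqp}. The inequality direction, by contrast, follows immediately once Lemma~3 identifies the two problems as sharing objective and box constraints over $\bM'$.
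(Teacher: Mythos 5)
Your proof is correct and follows essentially the same route as the paper: invoke Lemma~3 to pass to $\bM'$, observe that summing the per-row constraints of \ref{eq:qip} yields the aggregate constraint of \ref{eq:rqip} so the latter maximises over a superset, and then note that in the singleton case each row of $\bM'$ has a single nonzero entry, forcing the two feasible sets to coincide. Your treatment of the equality case is in fact more explicit than the paper's one-line version, but it is the same argument.
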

		\begin{proof}
			To prove~\eqref{eq:lm4}, it is enough to show 
			\begin{align}\tag{R-IQP}\label{eq:rqip}
			\begin{aligned}
			\overline{S}_d(\rbox) = \max_{\bZ \in \{0,1\}^{N \times K^\prime}} \quad & \sum^{K^\prime}_{k=1} \left( \sum_{i=1}^N \bZ_{i,k}\bM^\prime_{i,k} \right)^2 \\
			\text{s.t.} \quad & \bZ_{i,k} \le \bM^\prime_{i,k}, \;\; \forall i,k, \\
			& \sum_{k=1}^{K^\prime} \sum^{N}_{i=1} \bZ_{i,k} = N,
			\end{aligned}
			\end{align}
			is a valid relaxation of~\ref{eq:iqp}. This is true as the constraint $\sum_{k=1}^{K^\prime}\sum^{N}_{i=1} \bZ_{i,k} = N$ in \ref{eq:rqip} is a necessary but not sufficient condition for the constraints $\sum_{k=1}^{K^\prime} \bZ_{i,k} = 1, \forall i$ in~\ref{eq:iqp}. If $\rbox$ collapse into $\vel$, every event $\bm{e}_i$ can intersect only one CC $\cG_k \implies \sum_{k=1}^{K^\prime} \bZ_{i,k} = 1$; hence, \ref{eq:rqip} is equivalent to~\ref{eq:iqp}.
		\end{proof}
		
		\subsection{Proof of lower bound~\eqref{eq:pixelvaluelwbnd} in the main text}
		\begin{lemma}\label{lem:pix_lwbnd}
			\begin{align}\label{eq:lem5}
			\underline{H}_c(\bx_j;\rbox) \le \min_{\vel\in \mathbb{B}}~H_c(\bx_j;\vel)
			\end{align}
			with equality achieved if $\rbox$ is singleton, i.e., $\rbox = \{ \vel \}$.
		\end{lemma}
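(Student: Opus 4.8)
The plan is to mirror the proof of Lemma~\ref{lem:pix_upbnd}, replacing the nearest-point argument used there with a farthest-point argument and additionally exploiting the monotonicity of the kernel $\bm{\delta}$. Concretely, I would establish the pointwise inequality $H_c(\bx_j;\vel) \ge \underline{H}_c(\bx_j;\rbox)$ for every $\vel \in \rbox$ and then take the minimum over $\vel$ to obtain~\eqref{eq:lem5}; a contradiction argument as in Lemma~\ref{lem:pix_upbnd} would work equally well but the direct version is cleaner here.

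First I would recall two facts already in place from Sec.~\ref{sec:method1}. The first is that for every $\vel \in \rbox$ the warped position $f(\bu_i,t_i;\vel)$ lies inside the disc $\cD_i(\rbox)$ of centre $\bc_i(\rbox)$ and radius $\|\by_i(\rbox)\|$, since $\cD_i(\rbox)$ was constructed to contain the uncertainty set~\eqref{eq:uncertainty}. The second is that $\bm{\delta}$ (e.g.\ a Gaussian) is a non-increasing function of distance, so larger distances produce smaller kernel values; this assumption is what makes the lower bound valid and I would state it explicitly.

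Next, for a fixed event $i$ and pixel $\bx_j$, I would bound the distance from $\bx_j$ to an arbitrary point $\bp \in \cD_i(\rbox)$ from above by the triangle inequality, $\|\bx_j - \bp\| \le \|\bx_j - \bc_i(\rbox)\| + \|\bc_i(\rbox) - \bp\| \le \|\bx_j - \bc_i(\rbox)\| + \|\by_i(\rbox)\|$, the last step because $\bp$ is within radius $\|\by_i(\rbox)\|$ of the centre. Applying this with $\bp = f(\bu_i,t_i;\vel)$ (valid by the first fact) and then the monotonicity of $\bm{\delta}$ gives, term by term, $\bm{\delta}(\|\bx_j - f(\bu_i,t_i;\vel)\|) \ge \bm{\delta}(\|\bx_j - \bc_i(\rbox)\| + \|\by_i(\rbox)\|)$. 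Summing over $i=1,\dots,N$ yields $H_c(\bx_j;\vel) \ge \underline{H}_c(\bx_j;\rbox)$ for every $\vel \in \rbox$, and minimising the left side over $\vel$ establishes~\eqref{eq:lem5}.

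Finally, for the equality claim I would note that when $\rbox = \{\vel\}$ we have $\alpha_i(\rbox) = 0$, so the cone $\cV_i(\rbox)$ degenerates to a single ray and $\cD_i(\rbox)$ collapses to the single point $\bc_i(\rbox) = f(\bu_i,t_i;\vel)$ with $\|\by_i(\rbox)\| = 0$; substituting into~\eqref{eq:pixelvaluelwbnd} recovers $\underline{H}_c(\bx_j;\rbox) = H_c(\bx_j;\vel)$ exactly. The only genuinely substantive ingredient is the farthest-point distance identity combined with the monotonicity of $\bm{\delta}$, and since both are elementary I expect no real obstacle beyond ensuring the monotonicity hypothesis on the kernel is made explicit, as the entire inequality hinges on it.
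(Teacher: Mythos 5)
Your proposal is correct and is essentially the paper's argument in contrapositive form: the paper proves the same inequality by contradiction, using exactly the same ingredients (the warped point $f(\bu_i,t_i;\vel)$ lies in $\cD_i(\rbox)$, the triangle-inequality bound $\|\bx_j-\bp\|\le\|\bx_j-\bc_i(\rbox)\|+\|\by_i(\rbox)\|$ for $\bp\in\cD_i(\rbox)$, and the implicit monotonicity of $\bm{\delta}$ in distance), and the same collapse of the disc to a point for the singleton case. Your direct term-by-term version, with the kernel-monotonicity hypothesis stated explicitly, is if anything a slightly cleaner write-up of the identical idea.
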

		\begin{proof}
			Analogous to Lemma~\ref{lem:pix_upbnd}, we prove this Lemma by contraction. Let $\vel^*$ be the optimiser for the RHS of~\eqref{eq:lem5}. If
			\begin{equation} \label{eq:lem5_cont}
			H_c(\bx_j;\vel^*) < \underline{H}_c(\bx_j;\rbox),
			\end{equation}
			it follows from the definition of pixel intensity (Eq.~\eqref{eq:rt}) and its lower bound (Eq.~\eqref{eq:pixelvaluelwbnd}) that 
			\begin{equation}
			\label{eq:lem5_violet}
			\|\bx_j-f(\bu_i,t_i,\vel^*)\| > \|\bx_j-\bc_i(\rbox)\|+\|\by_i(\rbox)\|,
			\end{equation}
			for at least one $i=1,\ldots, N$.
			
			In words, the longest distance between $\bx_j$ and the disc $\cD_i(\rbox)$ is less than the distance between $\bx_j$ and the optimal position $f(\bu_i,t_i,\vel^*)$. However, $f(\bu_i,t_i,\vel^*)$ is always inside the disc $\cD_i(\rbox)$, and hence Eq.~\eqref{eq:lem5_violet} cannot hold. If $\rbox = \{ \vel \}$, then from definition~\eqref{eq:pixelvaluelwbnd} in the main text $\underline{H}_c(\bx_j;\rbox) = H_c(\bx_j;\vel)$.
			
			

		\end{proof}
		
		
		\subsection{Proof of lower bound~\eqref{eq:lowerbound} in the main text}
		\begin{lemma}
			\begin{equation}\label{eq:discrete_lwbnd}
			\underline{\mu}_d(\rbox) \le \min_{\vel\in \rbox}~\dfrac{1}{P}\sum_{j=1}^P H_d(\bx_j;\vel),
			\end{equation}
			with equality achieved if $\rbox$ is singleton, i.e., $\rbox = \{ \vel \}$.
		\end{lemma}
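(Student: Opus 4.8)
The plan is to reduce the mean pixel value to a count of events that land inside the image plane, and then observe that discs fully contained in the image plane contribute guaranteed landings regardless of $\vel$. First I would rewrite the quantity being minimised. Since each warped event $f(\bu_i,t_i;\vel)$ lands in at most one pixel, summing the discrete event image over all pixels simply counts the events whose warped position falls inside the image plane:
\begin{equation}
\sum_{j=1}^P H_d(\bx_j;\vel) = \sum_{i=1}^N \mathbb{I}\!\left(f(\bu_i,t_i;\vel)~\text{lies in the image plane}\right).
\end{equation}
Thus minimising the mean over $\rbox$ is equivalent to minimising the number of events that remain inside the image.

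Next I would exploit the geometric containment established earlier. By construction, $\cD_i(\rbox)$ contains every possible warped position of event $i$ for all $\vel\in\rbox$. Hence, if $\cD_i(\rbox)$ lies fully within the image plane, then $f(\bu_i,t_i;\vel)$ is inside the image for \emph{every} $\vel\in\rbox$, so that event's indicator equals $1$ uniformly over $\rbox$. Consequently, for any $\vel\in\rbox$,
\begin{equation}
\sum_{i=1}^N \mathbb{I}\!\left(f(\bu_i,t_i;\vel)~\text{in image}\right) \ge \sum_{i=1}^N \mathbb{I}\!\left(\cD_i(\rbox)~\text{fully in image}\right),
\end{equation}
because the right-hand side counts a subset of events guaranteed to land inside. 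Taking the minimum over $\vel\in\rbox$ and dividing by $P$ yields exactly the claimed inequality $\underline{\mu}_d(\rbox) \le \min_{\vel\in\rbox} \tfrac{1}{P}\sum_{j=1}^P H_d(\bx_j;\vel)$.

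For the equality when $\rbox=\{\vel\}$ is a singleton, I would note that the cone $\cV_i(\rbox)$ collapses to a single ray, so the elliptical region $\cL_i(\rbox)$, and therefore the disc $\cD_i(\rbox)$, collapses to the single point $f(\bu_i,t_i;\vel)$. A degenerate (radius-zero) disc ``fully lies in the image plane'' precisely when that point does, so the counting bound above becomes an equality and $\underline{\mu}_d(\{\vel\}) = \tfrac{1}{P}\sum_{j=1}^P H_d(\bx_j;\vel)$.

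The step I expect to require the most care is the first reduction, where one must verify that no event contributes to more than one pixel—so that the pixel sum is a genuine event count—and that $\cD_i(\rbox)$ is a true over-approximation of the reachable warped positions; both follow from the discrete image definition~\eqref{eq:rt_discrete} and the construction of $\cD_i(\rbox)$, respectively. The bound is deliberately conservative, since a disc may protrude from the image yet still keep the event inside for all $\vel\in\rbox$, but this only makes $\underline{\mu}_d$ a valid, if not tight, lower bound, which is all that BnB requires; tightness is recovered precisely in the singleton limit.
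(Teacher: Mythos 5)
Your proof is correct and rests on the same key facts as the paper's: the pixel sum of $H_d$ counts the warped events landing in the image plane, and any disc $\cD_i(\rbox)$ fully contained in the image plane guarantees its event lands inside for every $\vel\in\rbox$. The paper packages this as a proof by contradiction while you argue the inequality directly, but this is the same argument in substance, including the singleton case where the disc collapses to the warped point.
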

		\begin{proof}
			This lemma can be demonstrated by contraction. Let $\vel^*$ be the optimiser of the RHS of~\eqref{eq:discrete_lwbnd}. If 
			\begin{equation}\label{eq:lowd_cont}
			\dfrac{1}{P}\sum_{j=1}^P H_d(\bx_j;\vel^*) < \underline{\mu}_d(\rbox),
			\end{equation}
			after replacing the pixel intensity and the lower bound pixel value with they definitions (Eqs.~\eqref{eq:rt_discrete} and~\eqref{eq:lowerbound}) in \eqref{eq:lowd_cont}, it leads to
			\begin{subequations}\label{eq:lowd_fcont}
				\begin{align}
				&& 
				\sum_{i=1}^N\sum_{j=1}^P \mathbb{I}(f(\bu_i, t_i ;\, \vel^*)~\text{lies in pixel}~\bx_j)\label{eq:lowd_fcont2}\\
				&<& \sum_{i=1}^N\mathbb{I}(\cD_i~\text{fully lie in the image plane}).\label{eq:lowd_fcont3}
				\end{align}
			\end{subequations}
			In words, for every warped event $f(\bu_i, t_i ;\, \vel^*) \in \cD_i$ that lies in any pixel $\bx_j\in X$ of the image plane,  the discs $\cD_i$ must fully lie in the image plane. Since \eqref{eq:lowd_fcont2} is a less restricted problem than \eqref{eq:lowd_fcont3},  \eqref{eq:lowd_cont} cannot hold. If $\rbox = \{\vel\}$, $\cD_i = f(\bu_i, t_i ;\, \vel)$; therefore, the two sides in \eqref{eq:discrete_lwbnd} are equivalent.

		\end{proof}

		\section{Additional qualitative results}
		Figs.~\ref{fig:boxes}, \ref{fig:dynamic} and \ref{fig:poster} show additional motion compensation results (Sec.~\ref{sec:global_local_exp} in the main text) for subsequences from \emph{boxes}, \emph{dynamic}  and \emph{poster}. 
	\end{multicols} %
	
	
	\begin{figure}
		\renewcommand{\arraystretch}{0.7}
		\setlength{\tabcolsep}{1mm}
		\footnotesize
		\begin{tabularx}{\textwidth} { 
				>{\raggedright\arraybackslash}C{.05} 
				| >{\centering\arraybackslash}X 
				| >{\centering\arraybackslash}X 
				| >{\centering\arraybackslash}X 
				| >{\centering\arraybackslash}X  }
			& CMBnB1
			& CMBnB2
			& CMGD1
			& CMGD2\\
			\RotText{~~~~ Subseq 3}
			& \includegraphics[width=\linewidth]{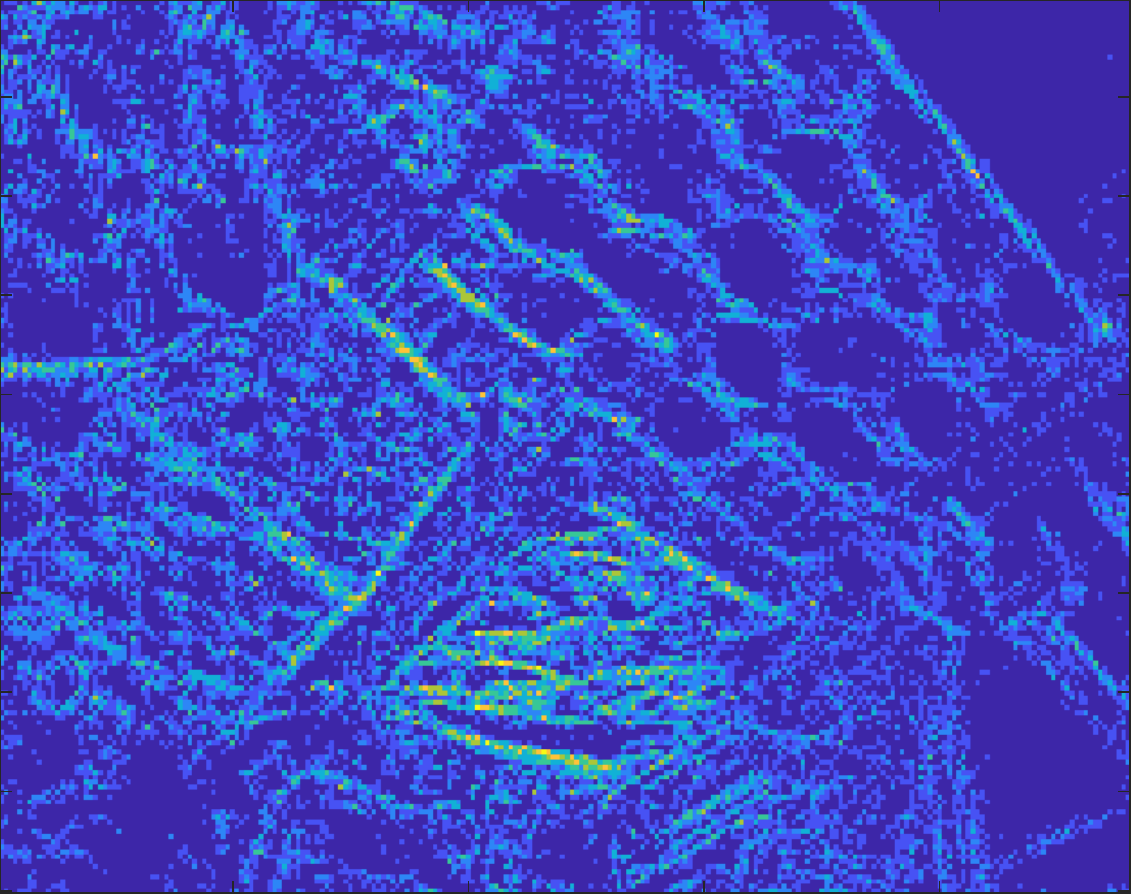}  
			& \includegraphics[width=\linewidth]{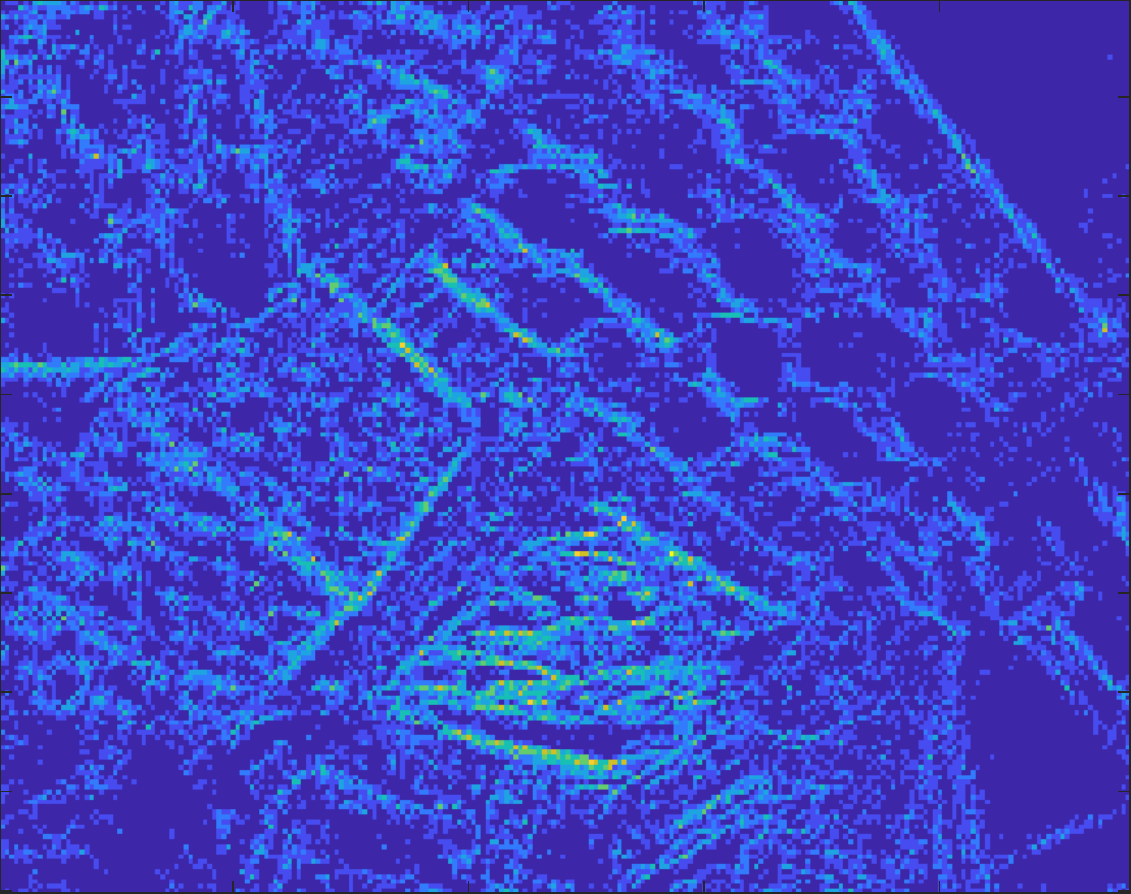}
			& \includegraphics[width=\linewidth]{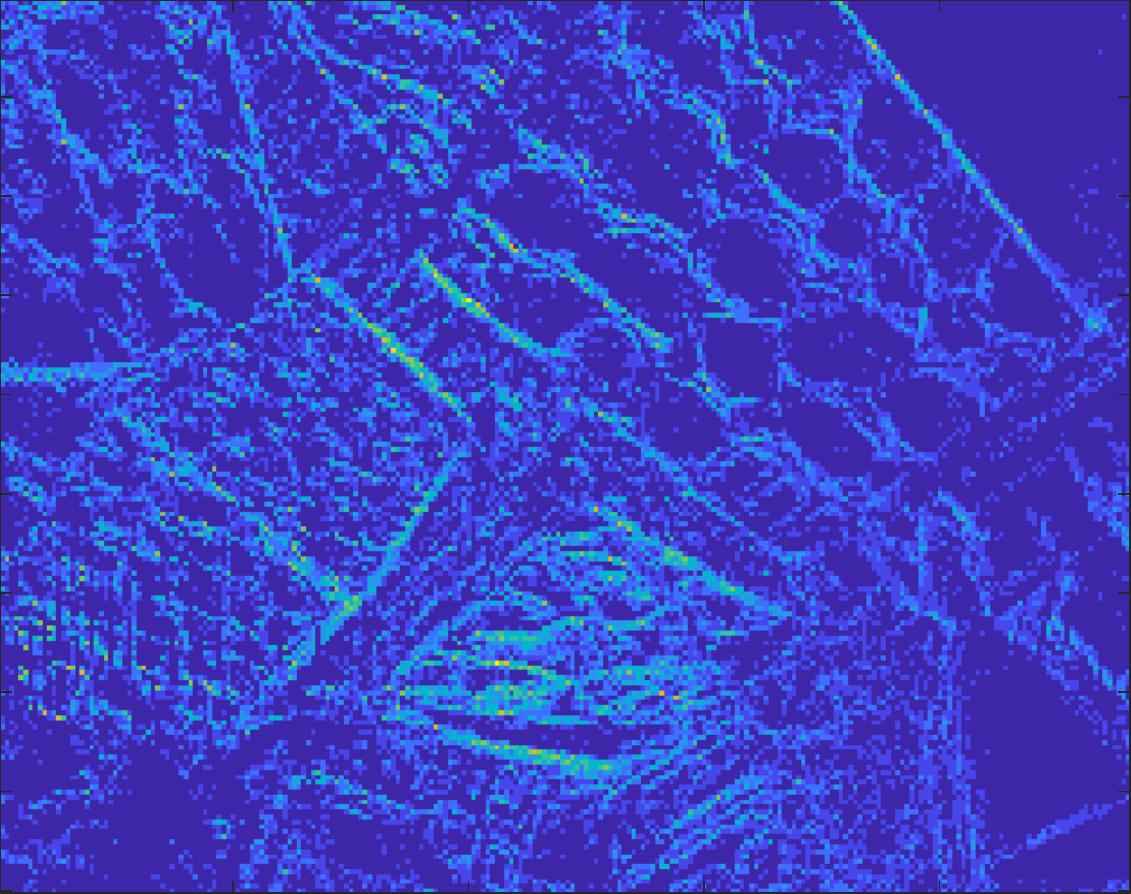}
			& \includegraphics[width=\linewidth]{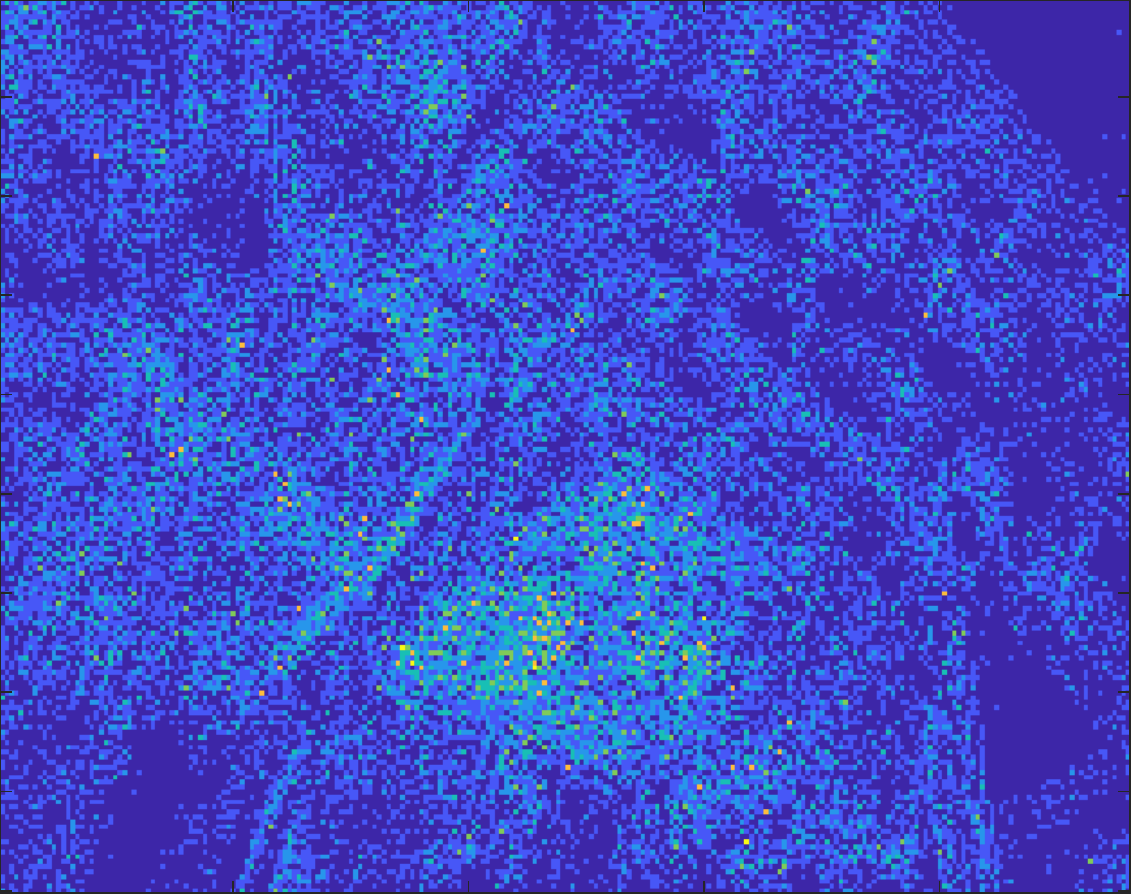}\\
			\hline
			&&&&\\[-.15cm]
			\RotText{~~~~ Subseq 4}
			& \includegraphics[width=\linewidth]{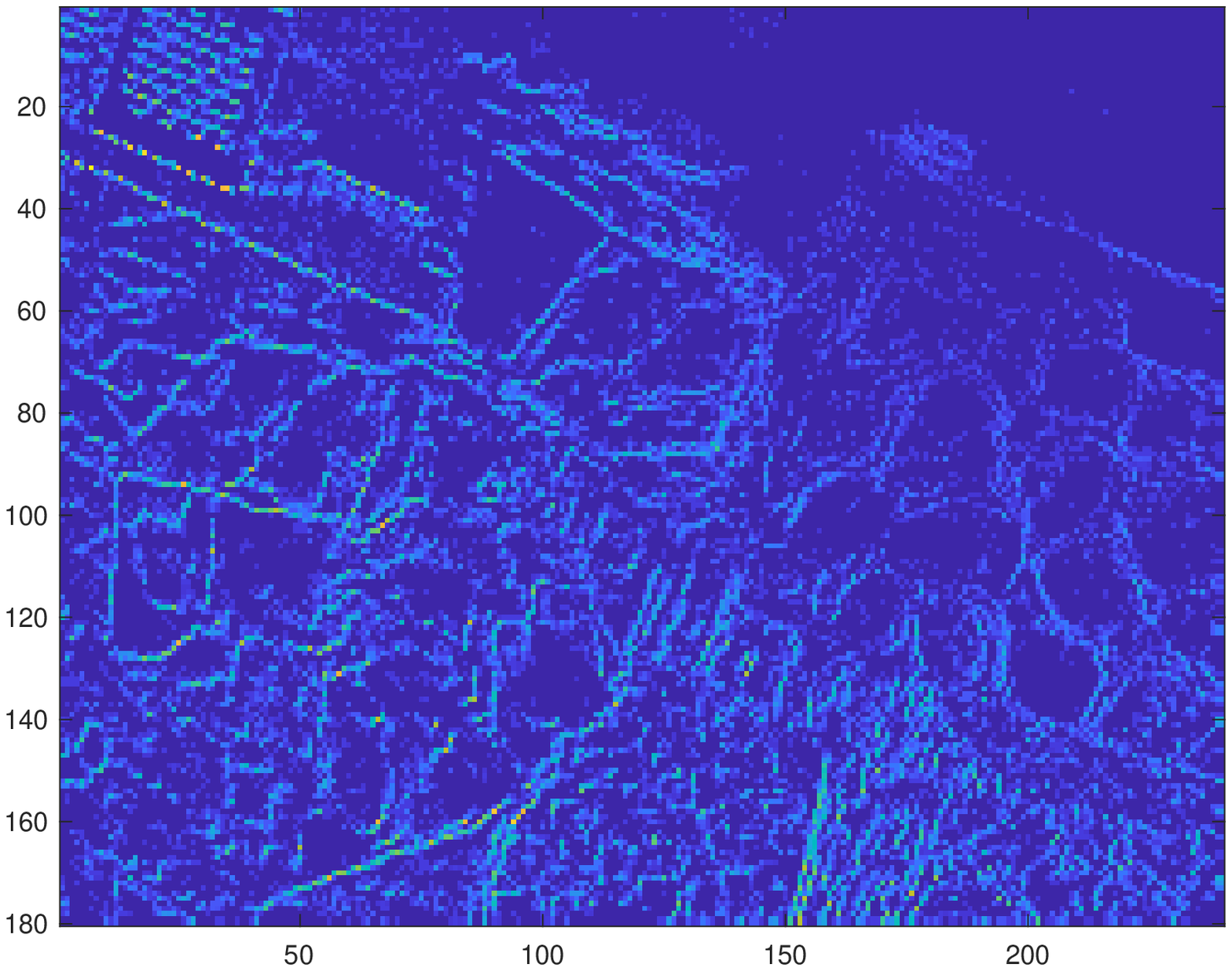}  
			& \includegraphics[width=\linewidth]{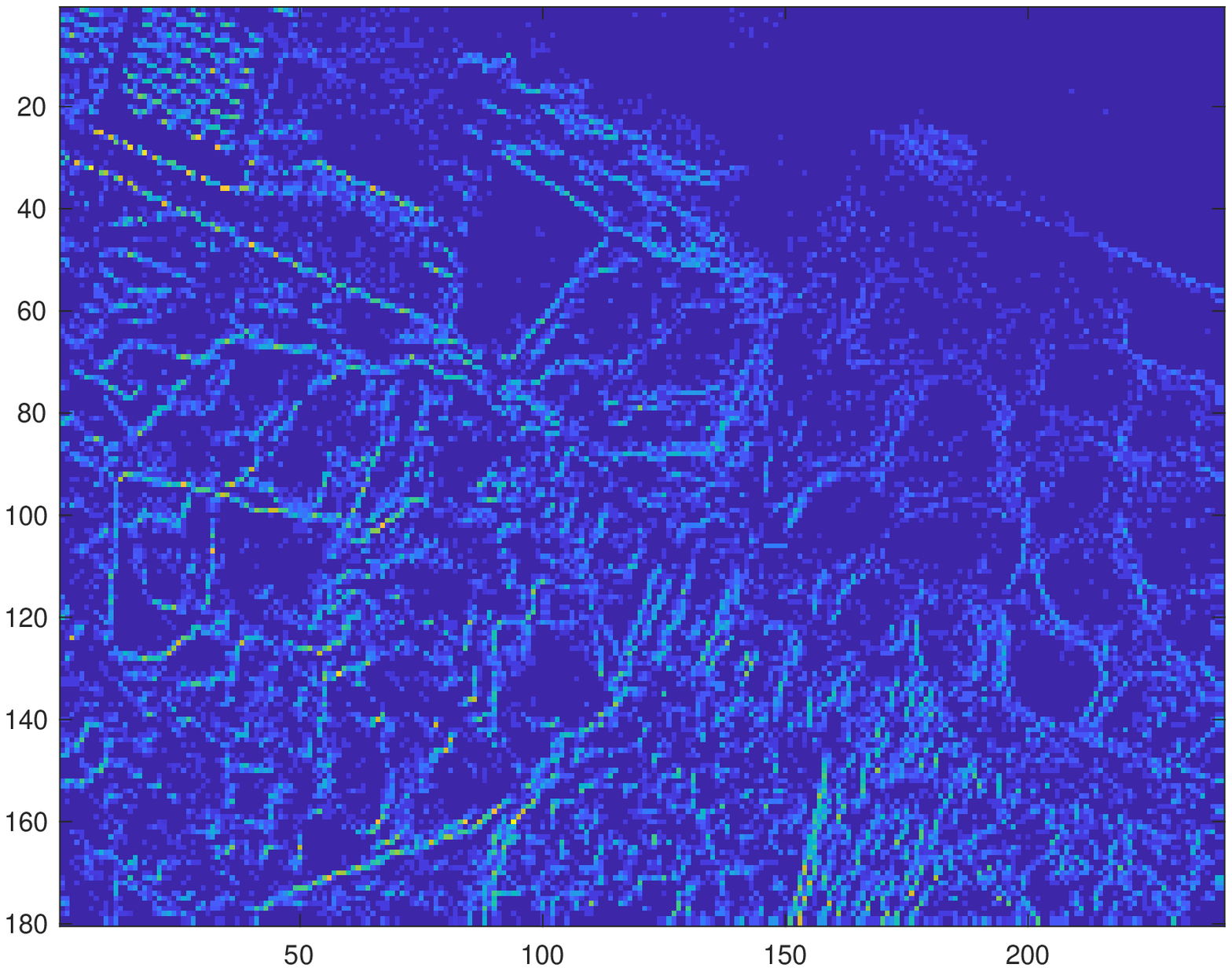}
			& \includegraphics[width=\linewidth]{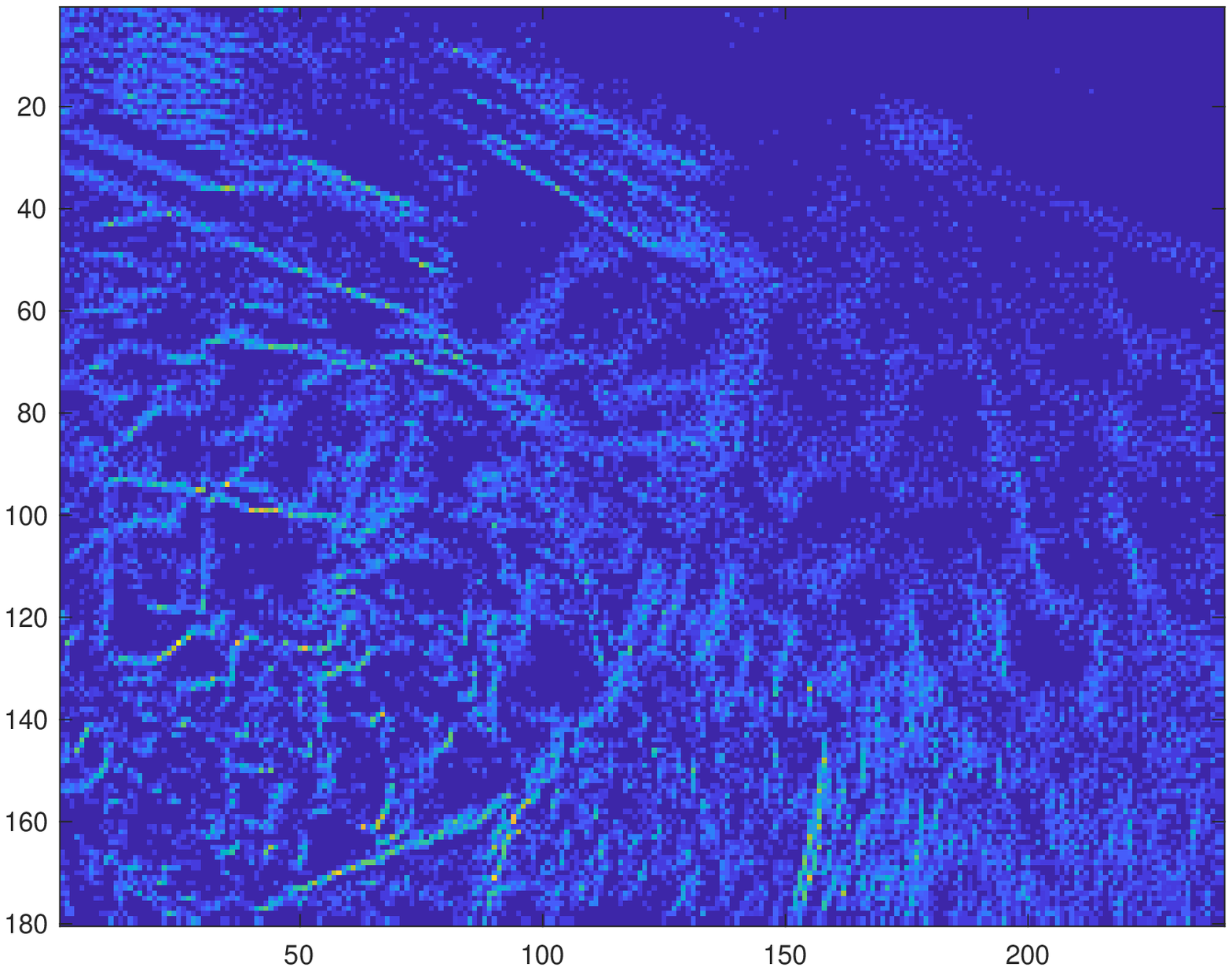}
			& \includegraphics[width=\linewidth]{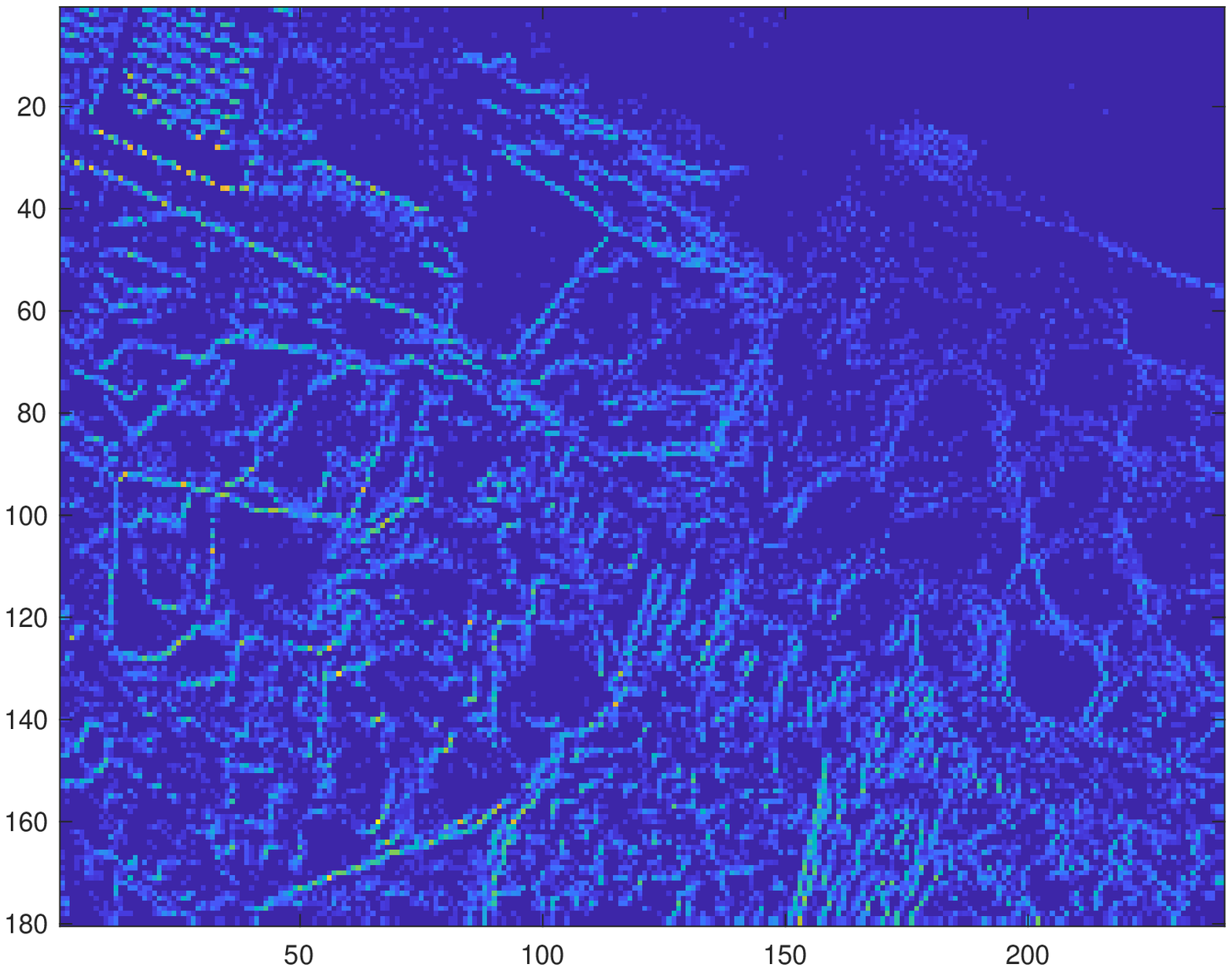}\\
			\hline
			&&&&\\[-.15cm]
			\RotText{~~~~Subseq 5}
			& \includegraphics[width=\linewidth]{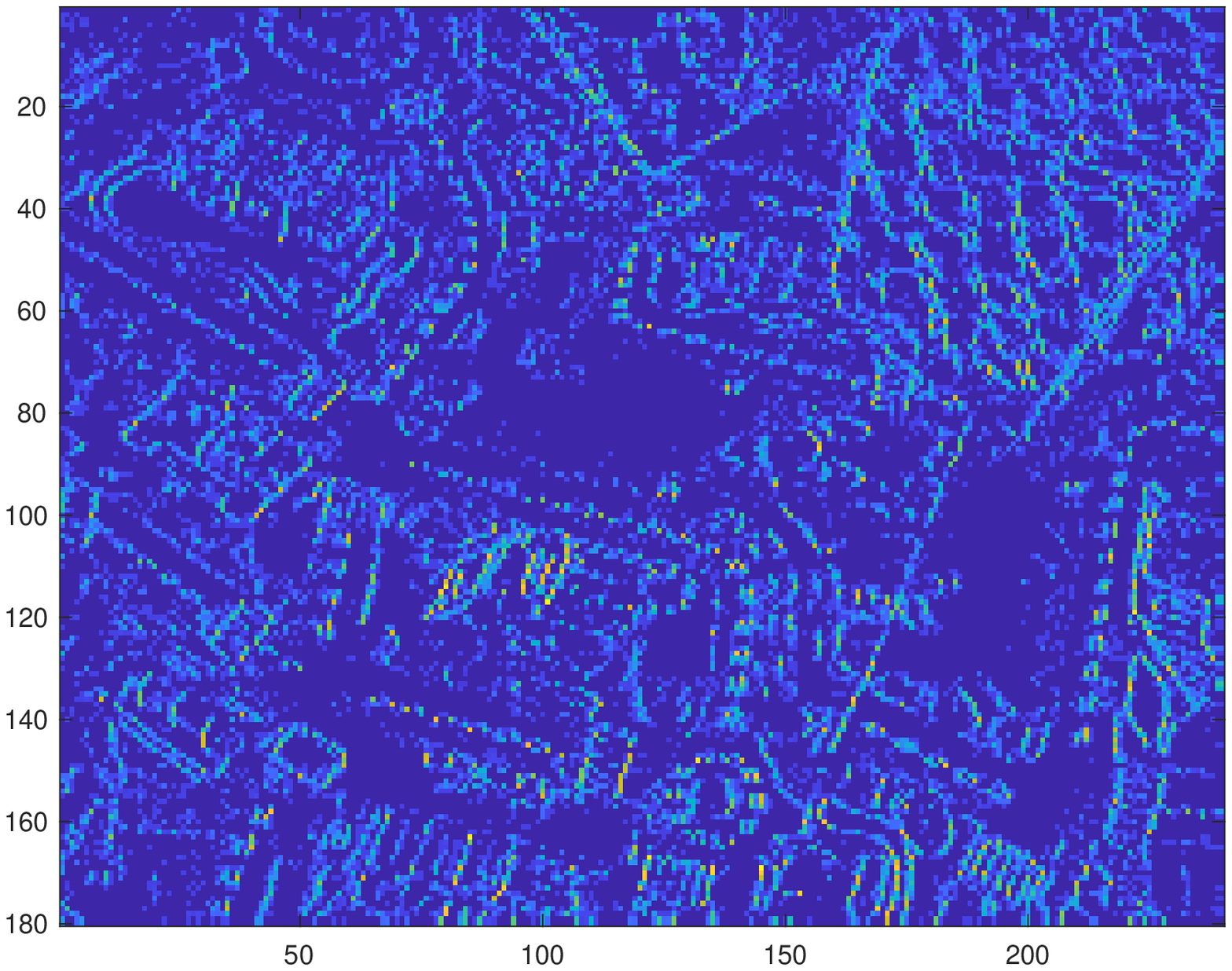}  
			& \includegraphics[width=\linewidth]{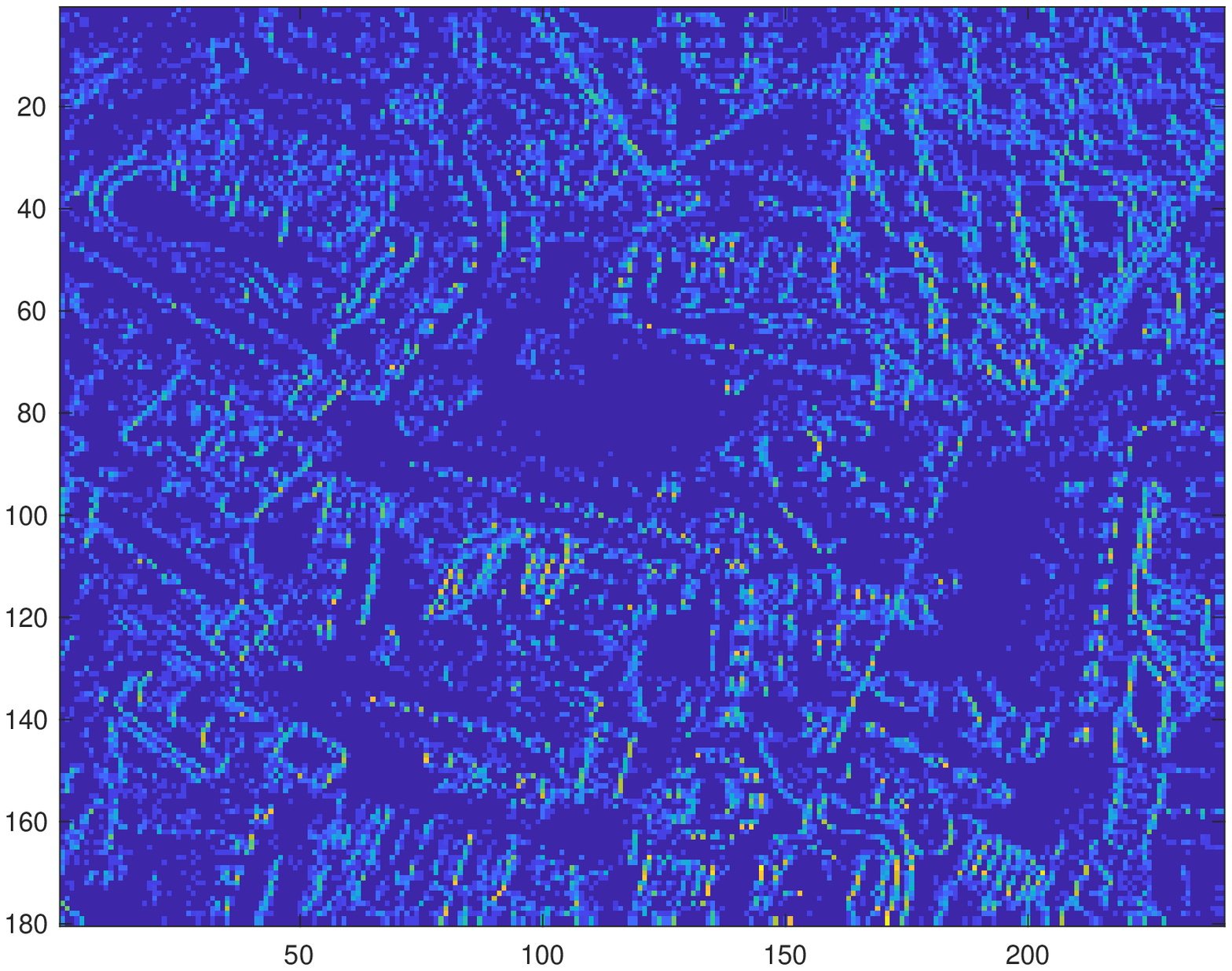}
			& \includegraphics[width=\linewidth]{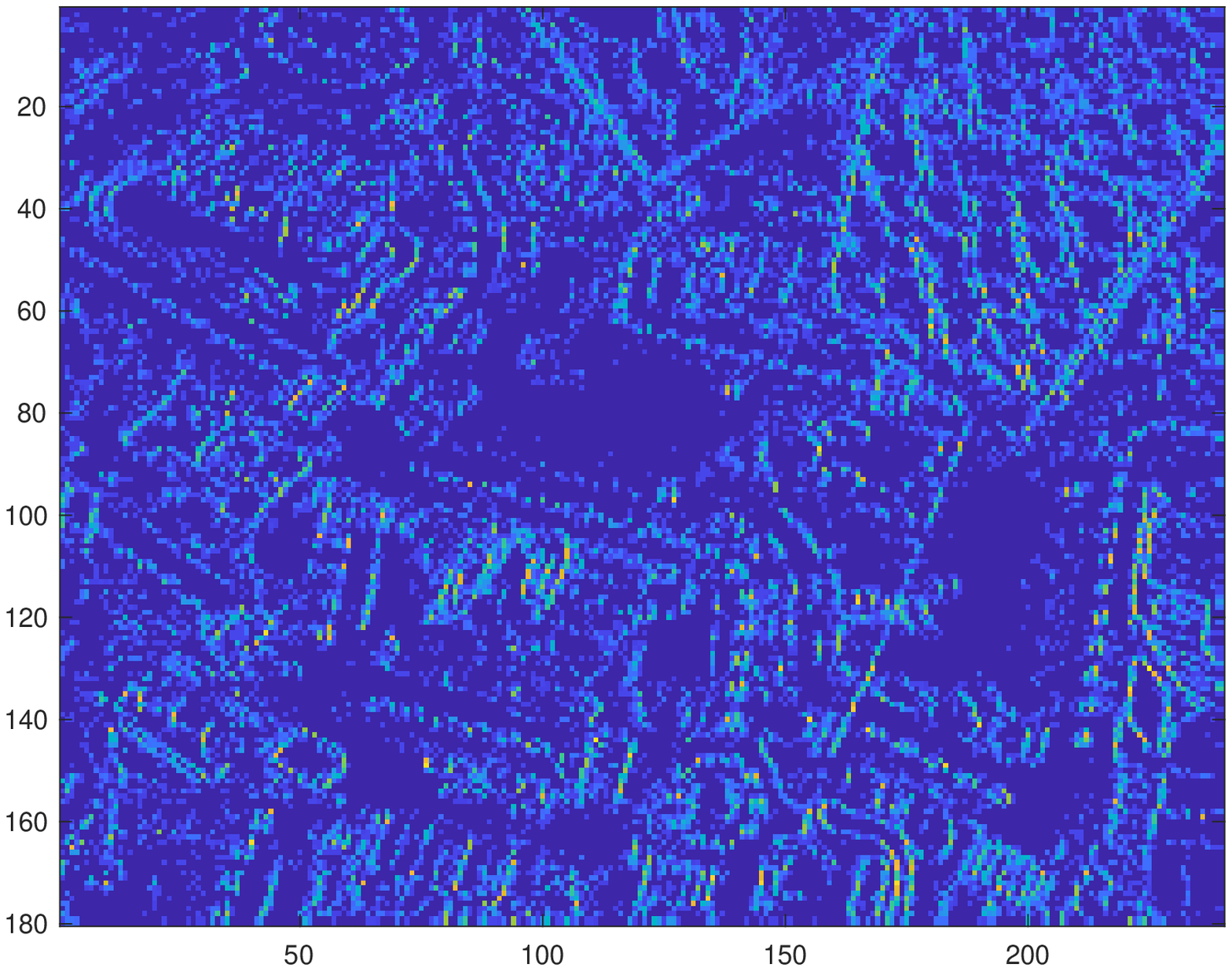}
			& \includegraphics[width=\linewidth]{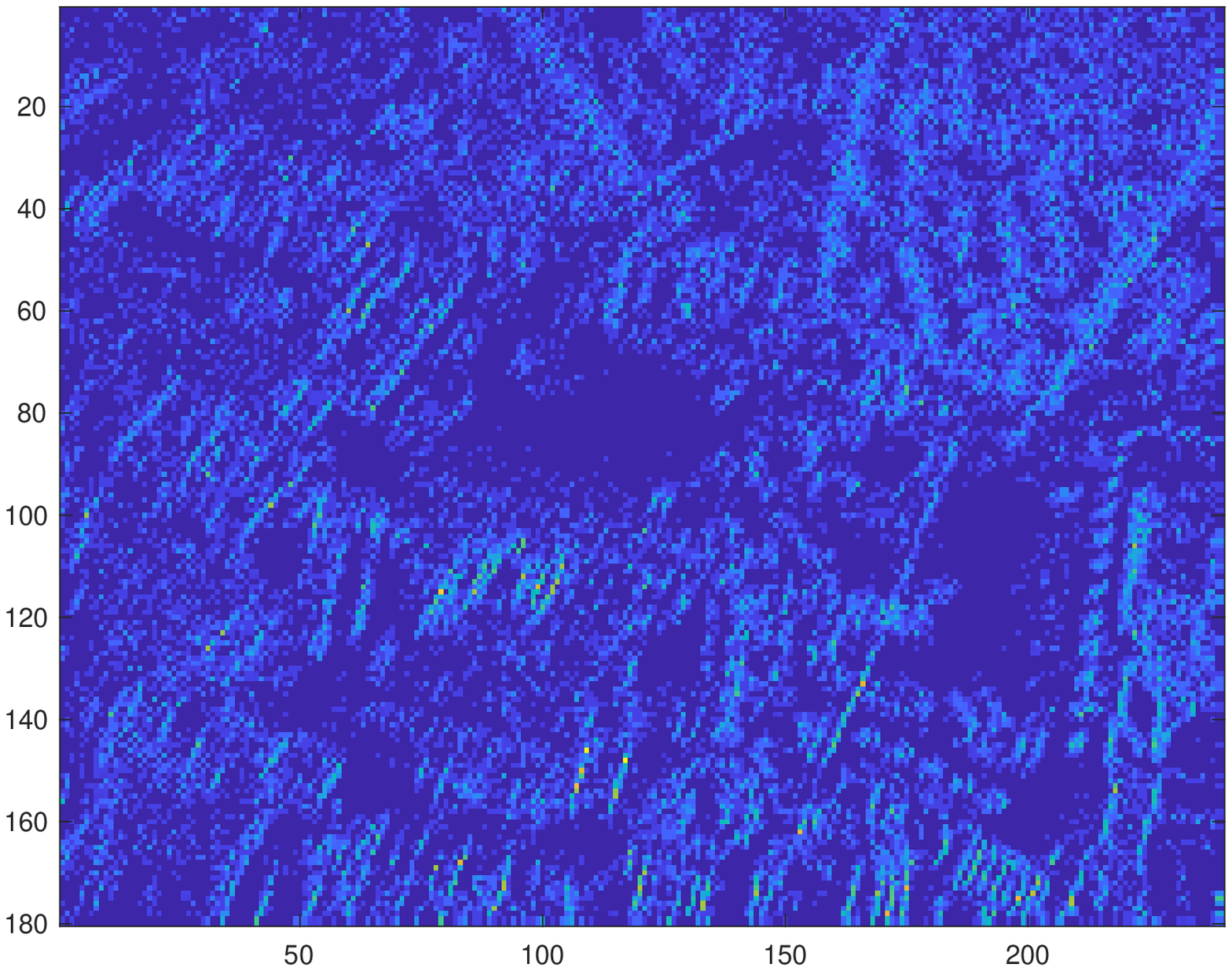}
		\end{tabularx}
		\caption{Qualitative results (motion compensated event images) for \emph{boxes}.}
		\label{fig:boxes}
	\end{figure}
	

	\begin{figure}
		\renewcommand{\arraystretch}{0.7}
		\setlength{\tabcolsep}{1mm}
		\footnotesize
		\begin{tabularx}{\textwidth} { 
				>{\raggedright\arraybackslash}C{.05} 
				| >{\centering\arraybackslash}X 
				| >{\centering\arraybackslash}X 
				| >{\centering\arraybackslash}X 
				| >{\centering\arraybackslash}X  }
			& CMBnB1
			& CMBnB2
			& CMGD1
			& CMGD2\\
			\RotText{~~ Subseq 1}
			& \includegraphics[width=\linewidth]{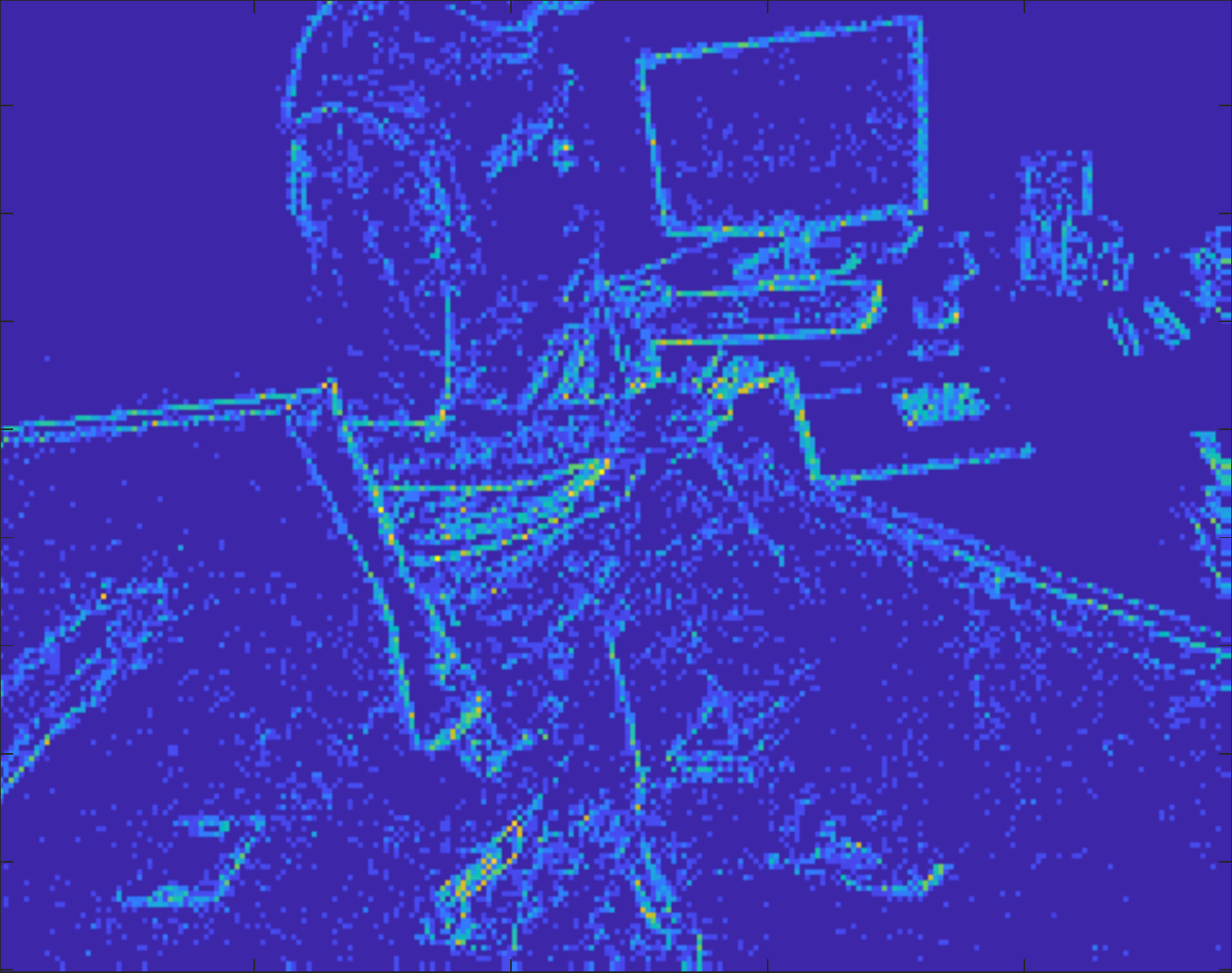}  
			& \includegraphics[width=\linewidth]{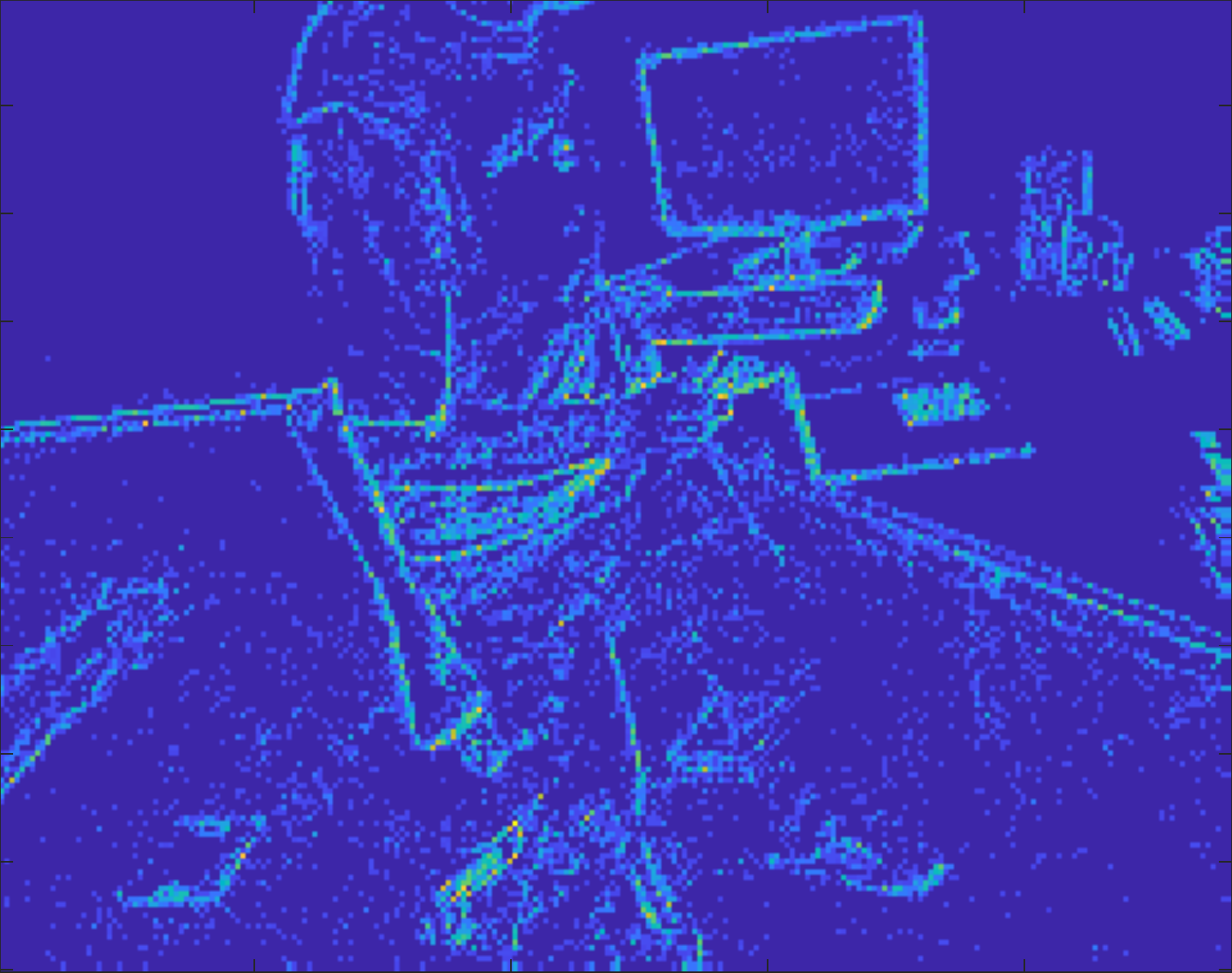}
			& \includegraphics[width=\linewidth]{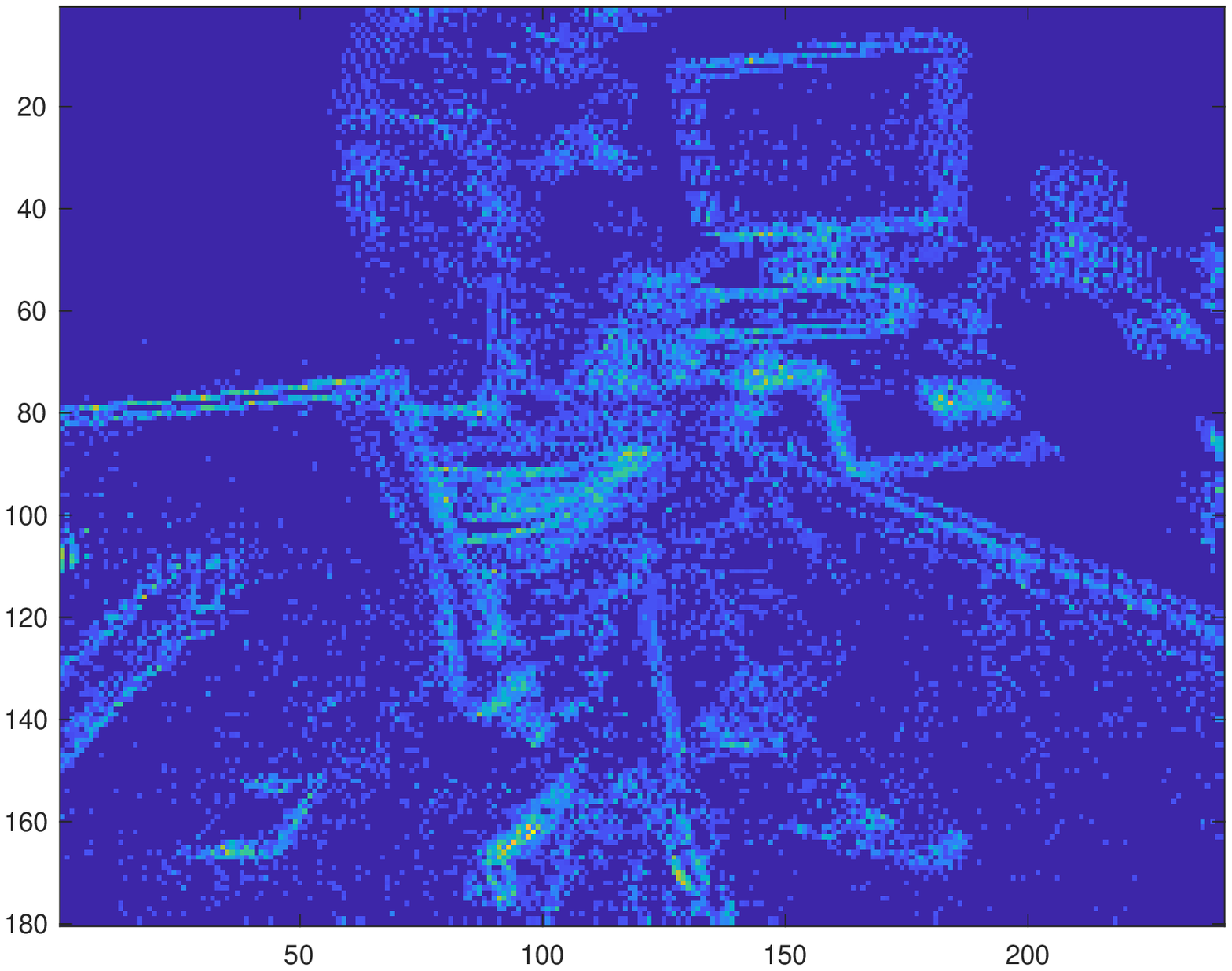}
			& \includegraphics[width=\linewidth]{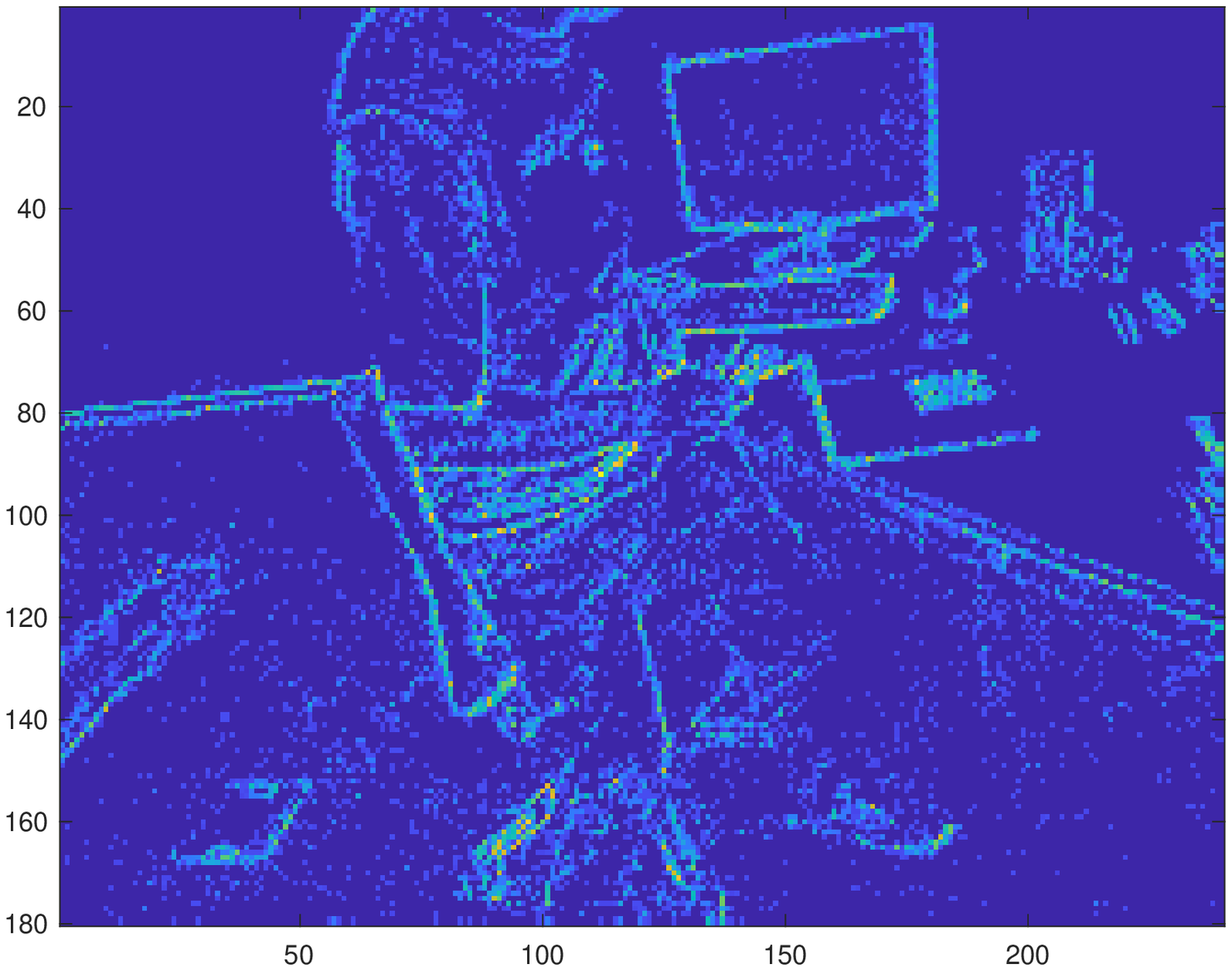}\\
			\hline
			&&&&\\[-.15cm]
			\RotText{~~ Subseq 2}
			& \includegraphics[width=\linewidth]{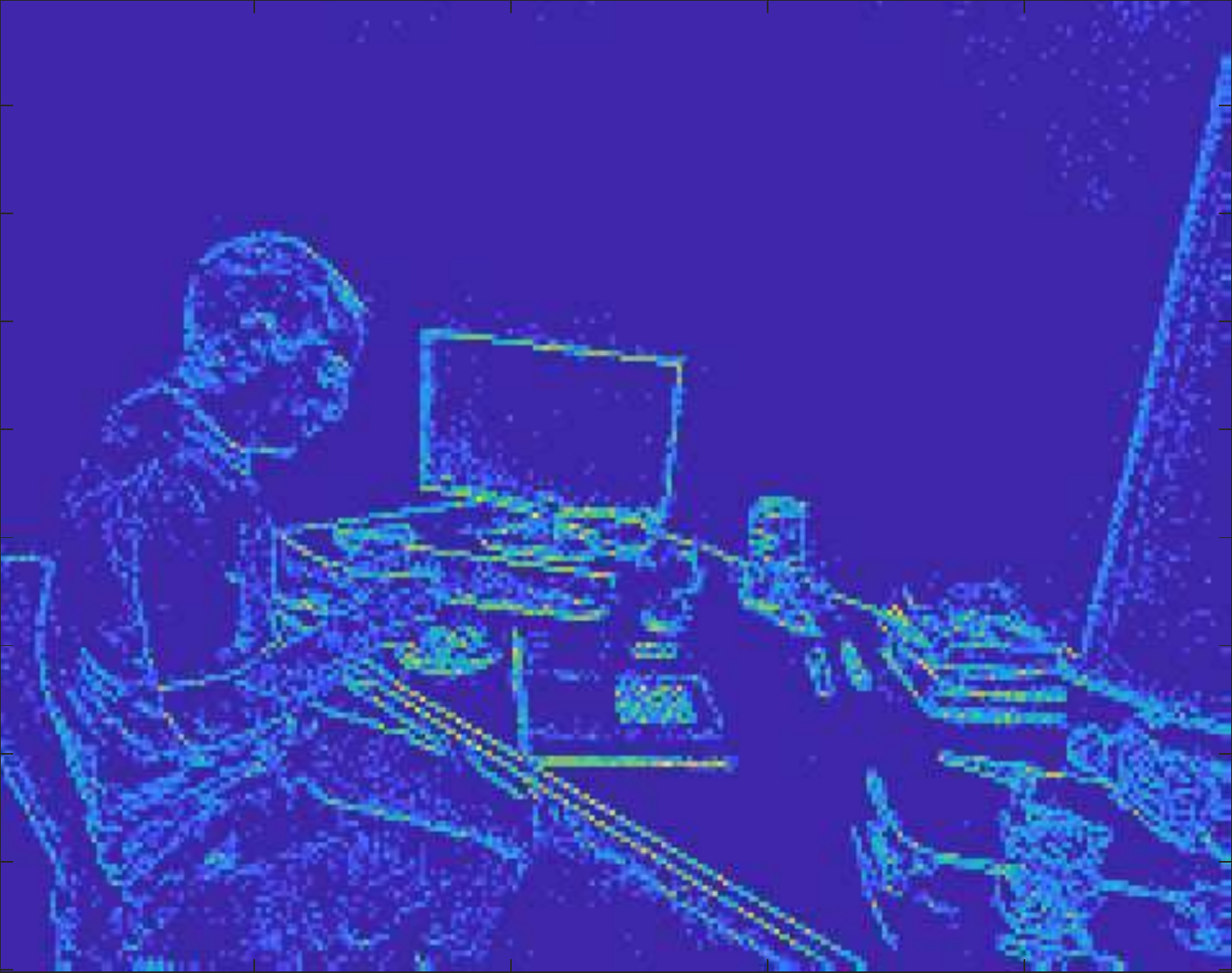}  
			& \includegraphics[width=\linewidth]{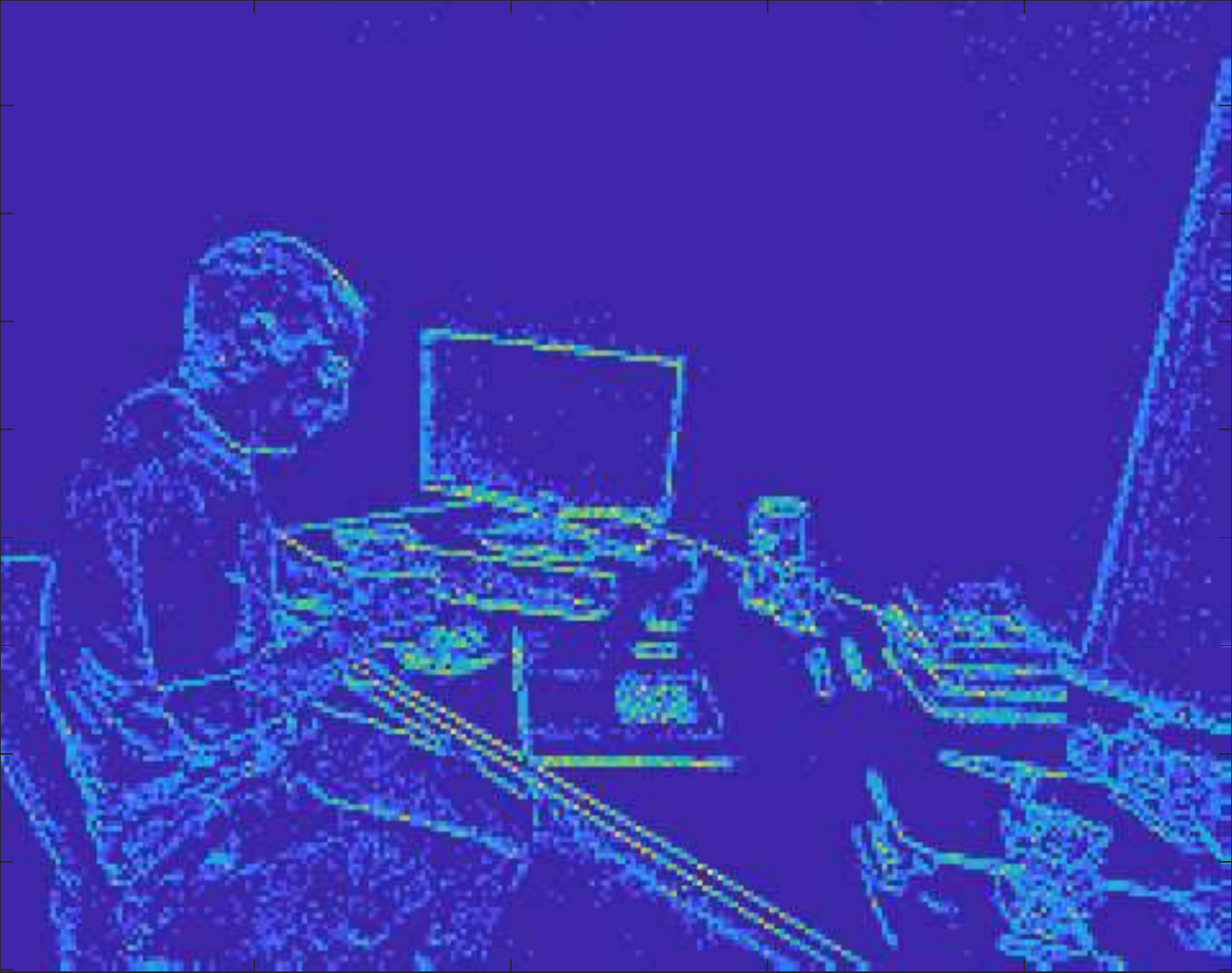}
			& \includegraphics[width=\linewidth]{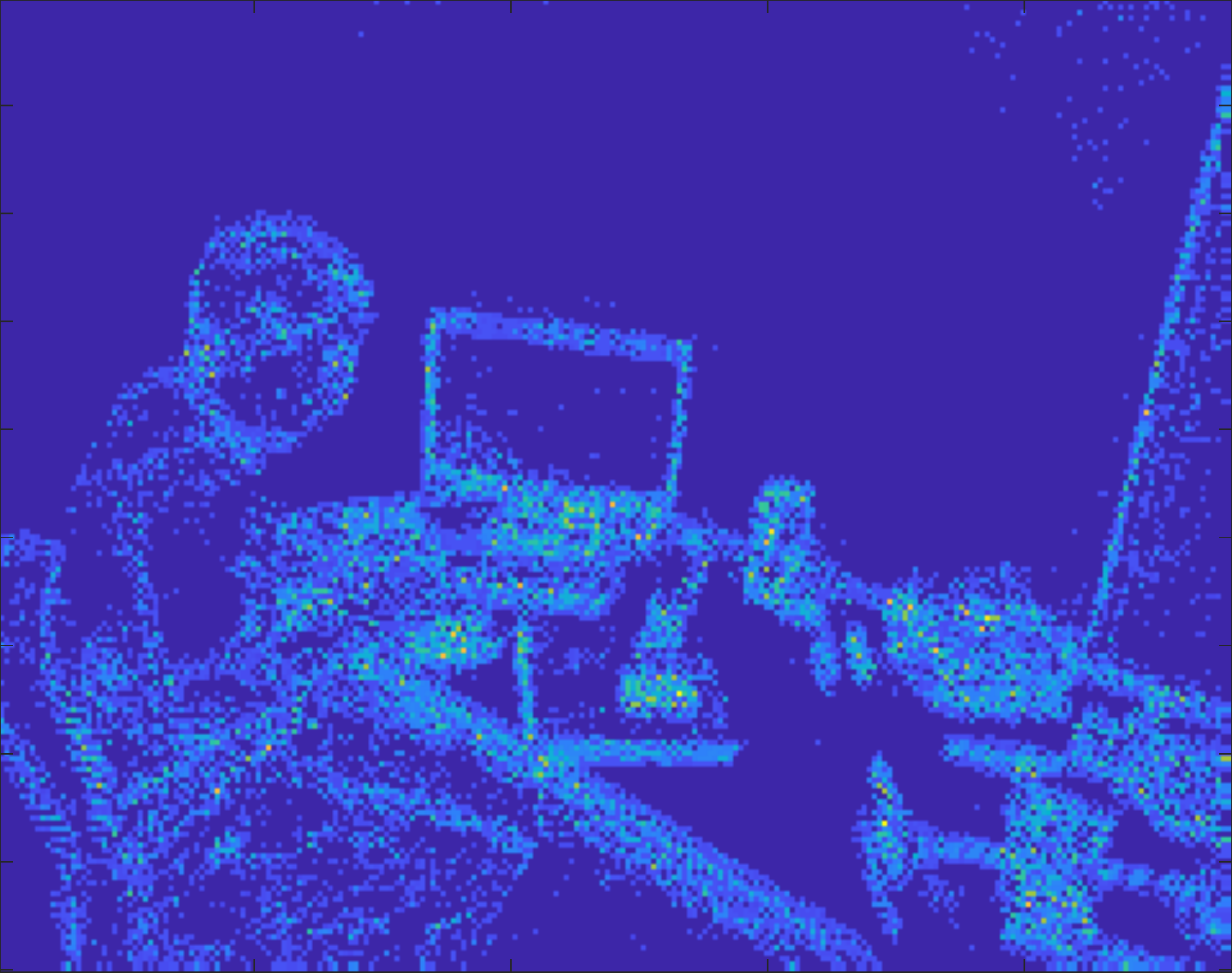}
			& \includegraphics[width=\linewidth]{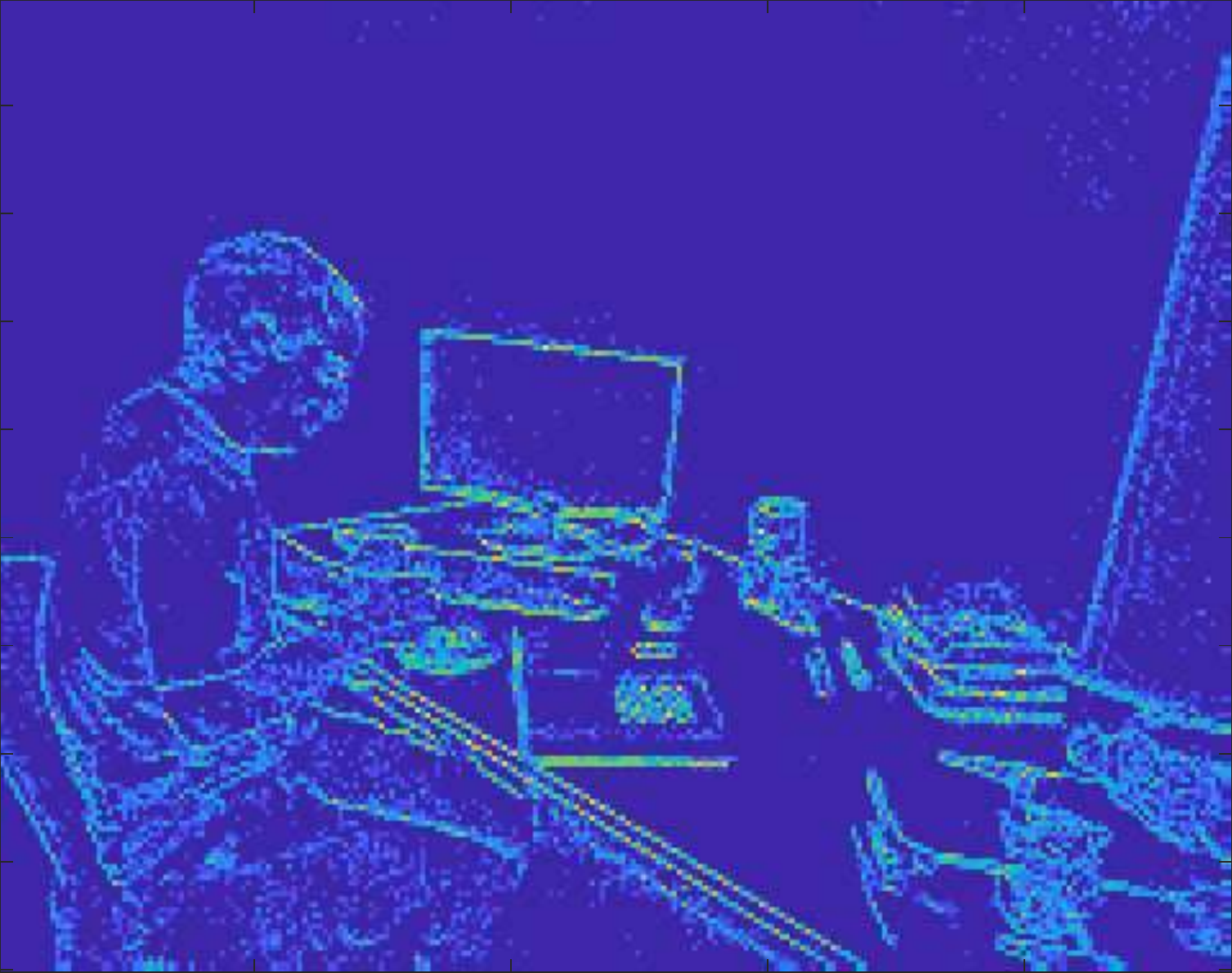}\\
			\hline
			&&&&\\[-.15cm]
			\RotText{~~ Subseq 3}
			& \includegraphics[width=\linewidth]{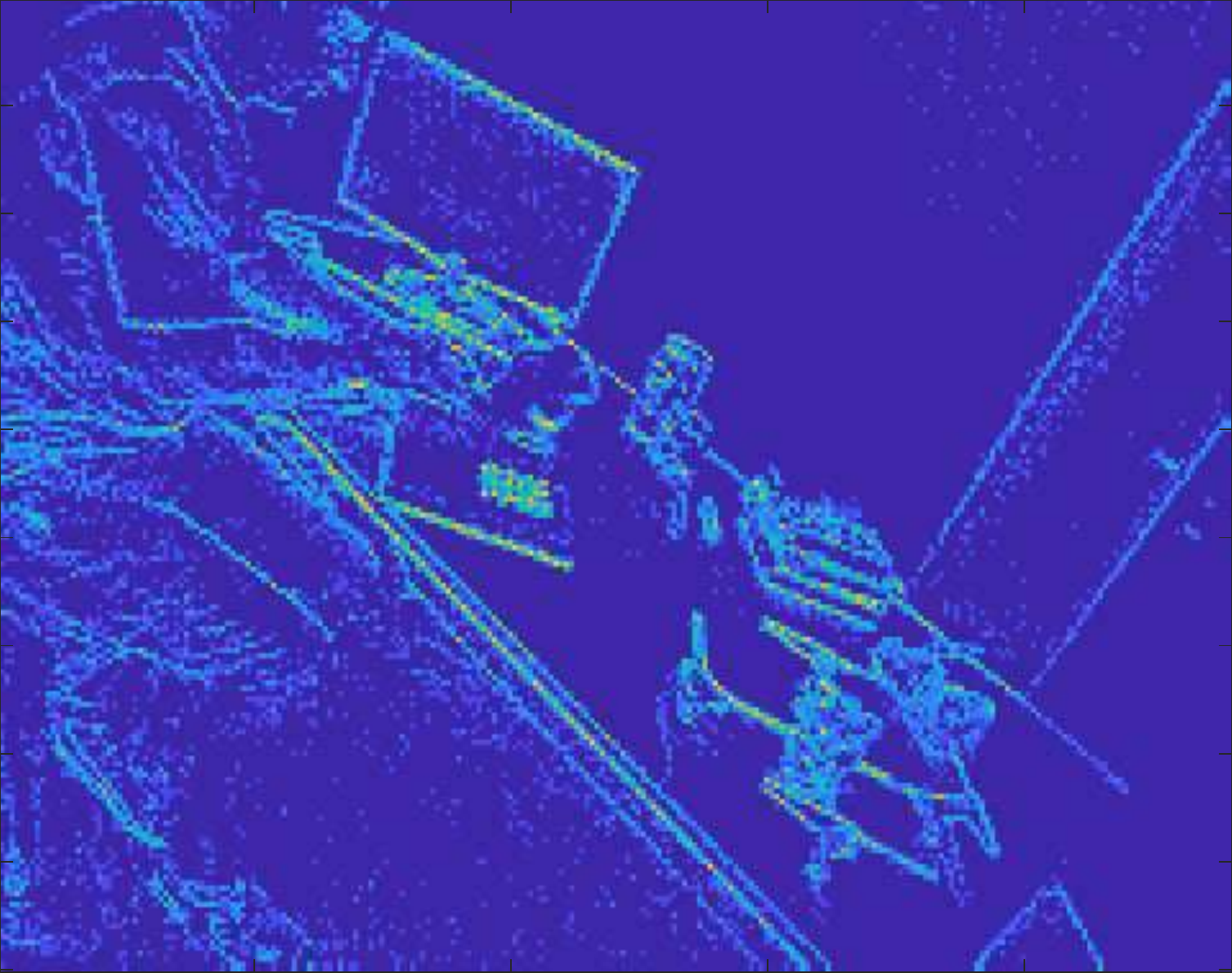}  
			& \includegraphics[width=\linewidth]{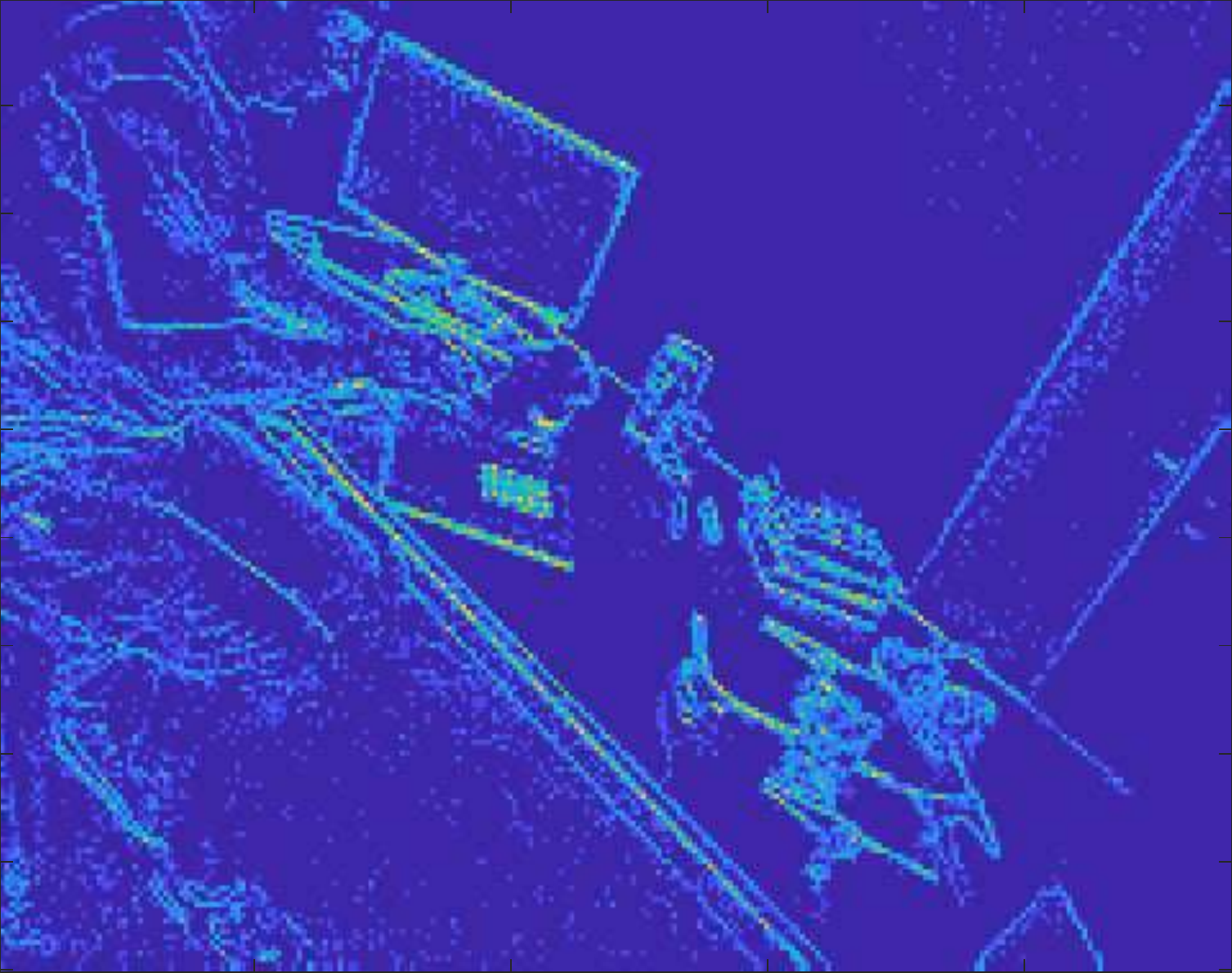}
			& \includegraphics[width=\linewidth]{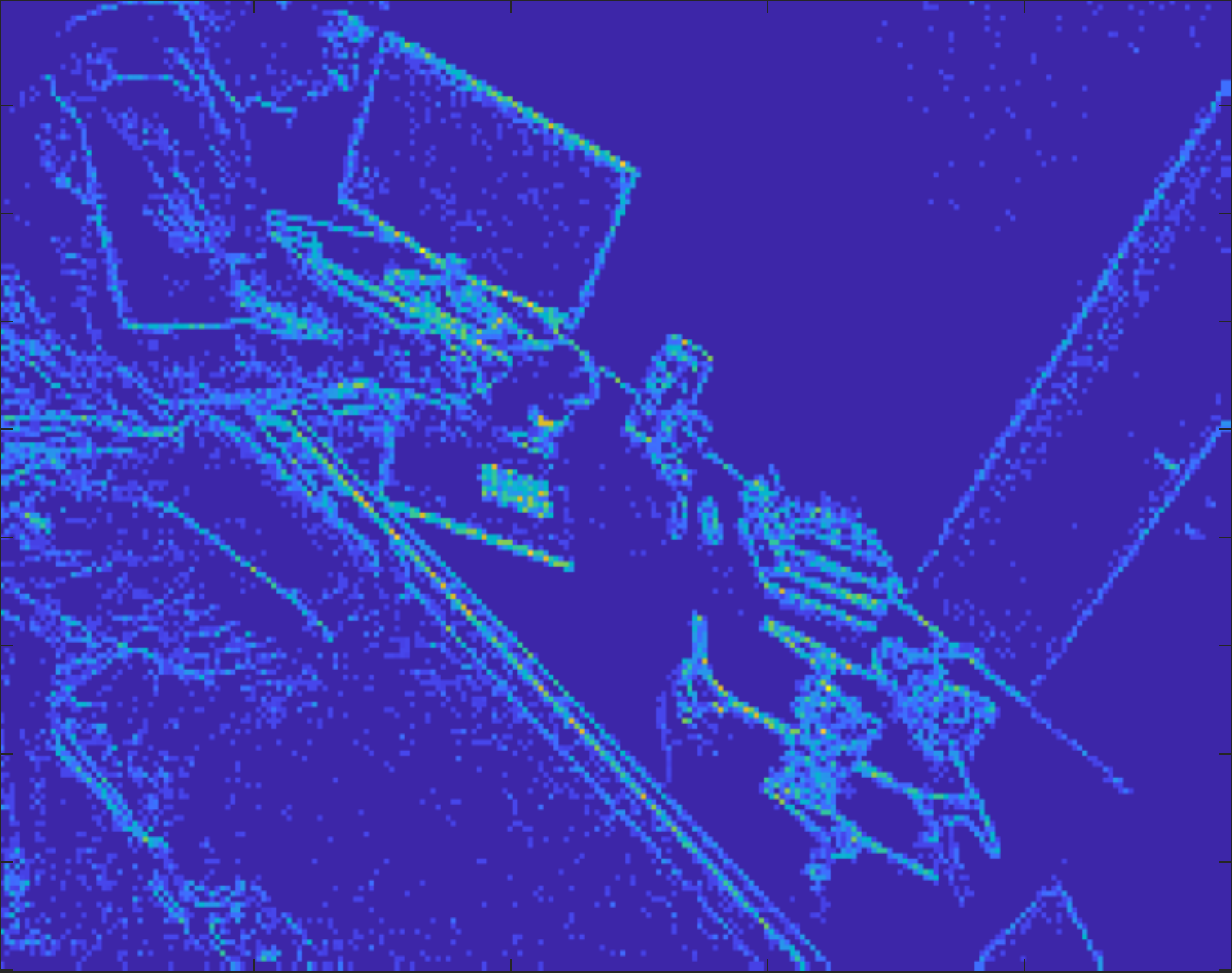}
			& \includegraphics[width=\linewidth]{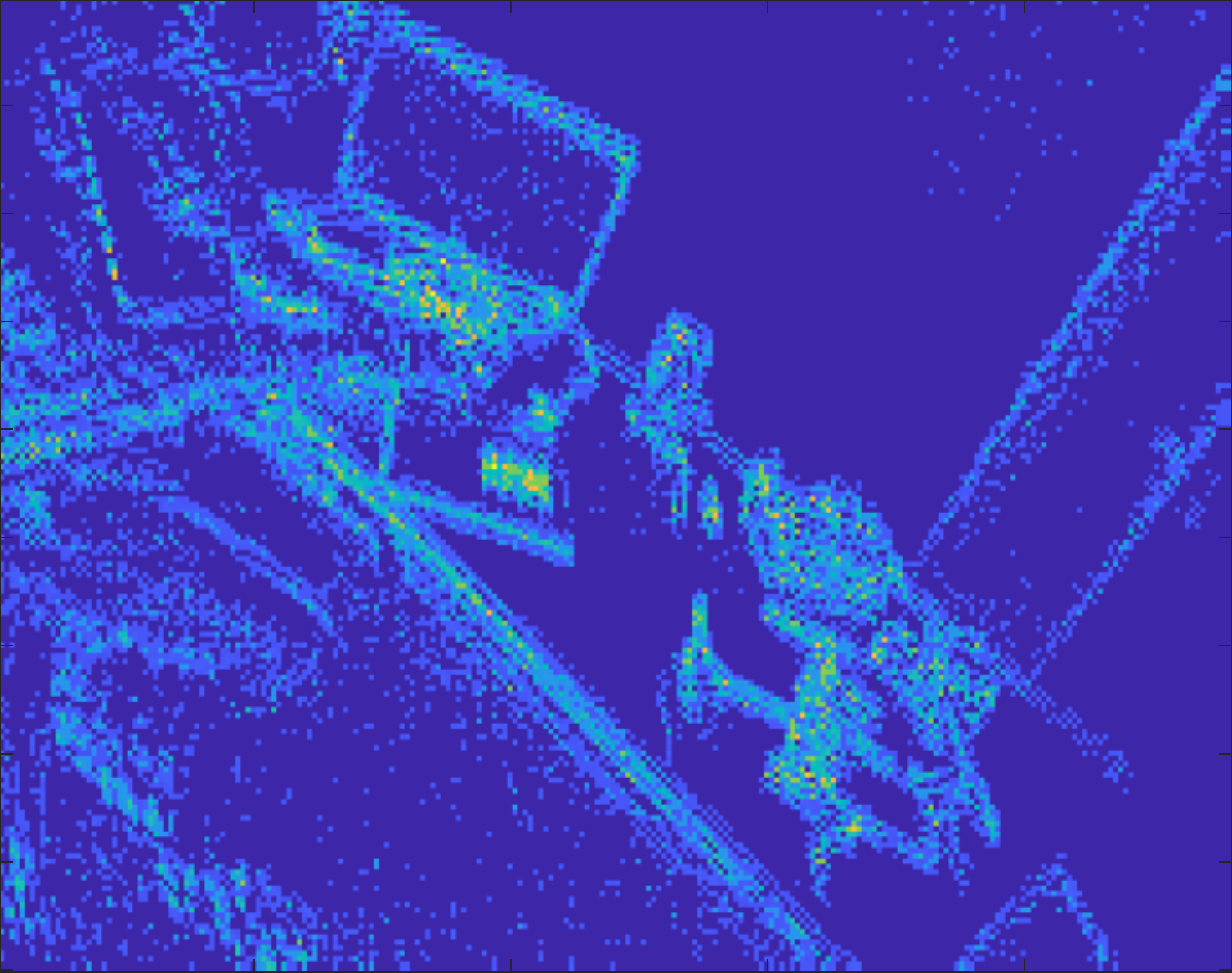}\\
			\hline
			&&&&\\[-.15cm]
			\RotText{~~ Subseq 4}
			& \includegraphics[width=\linewidth]{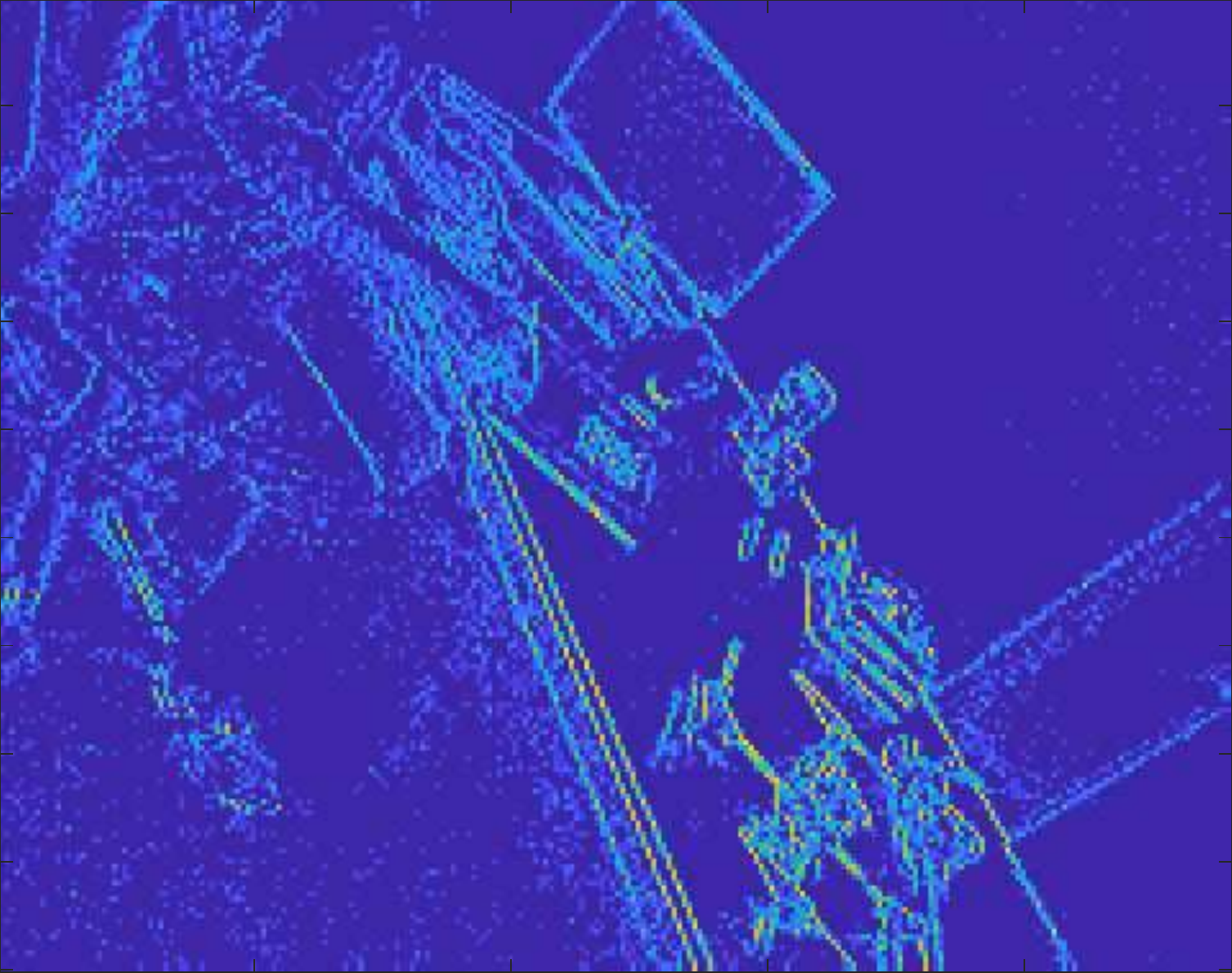}  
			& \includegraphics[width=\linewidth]{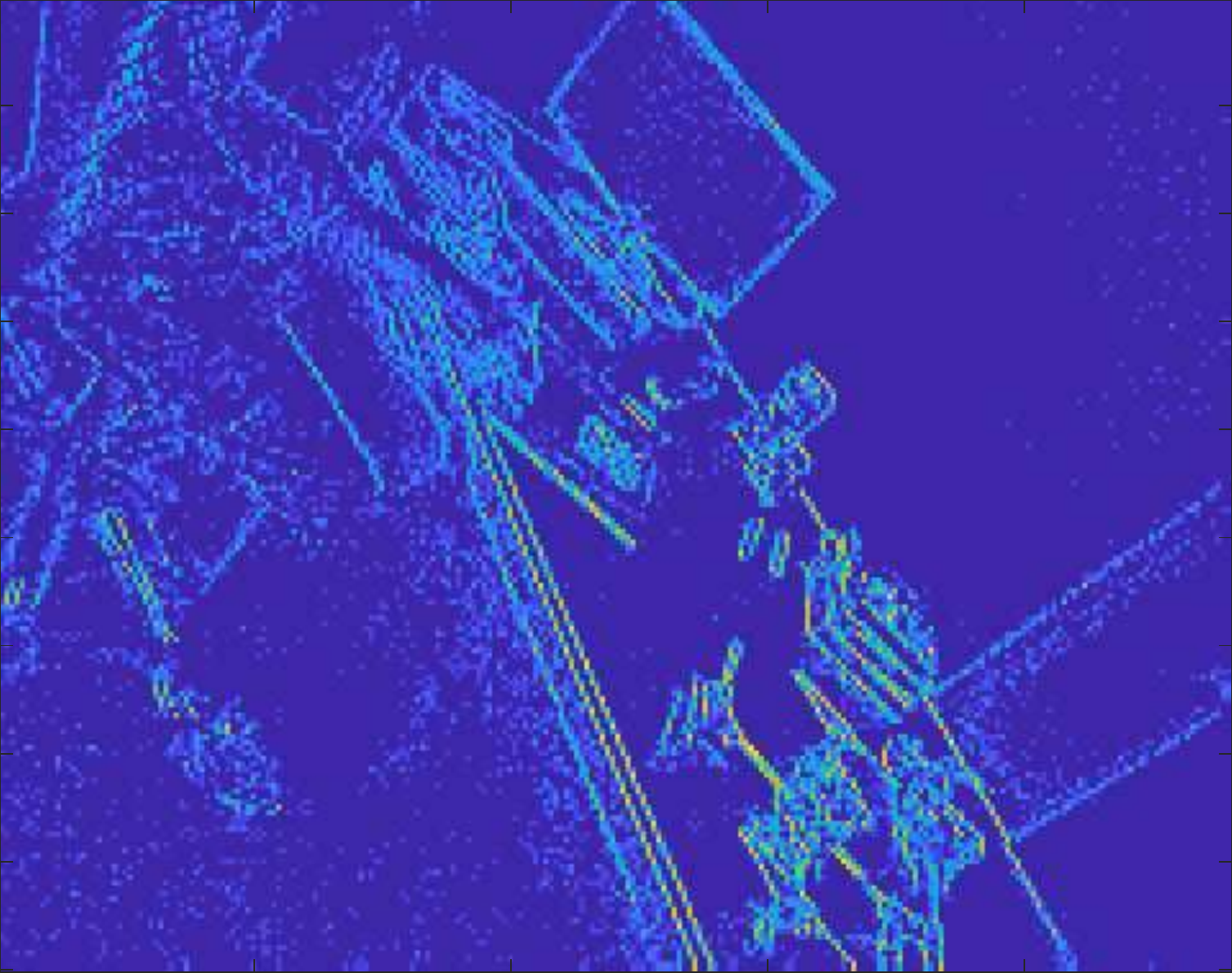}
			& \includegraphics[width=\linewidth]{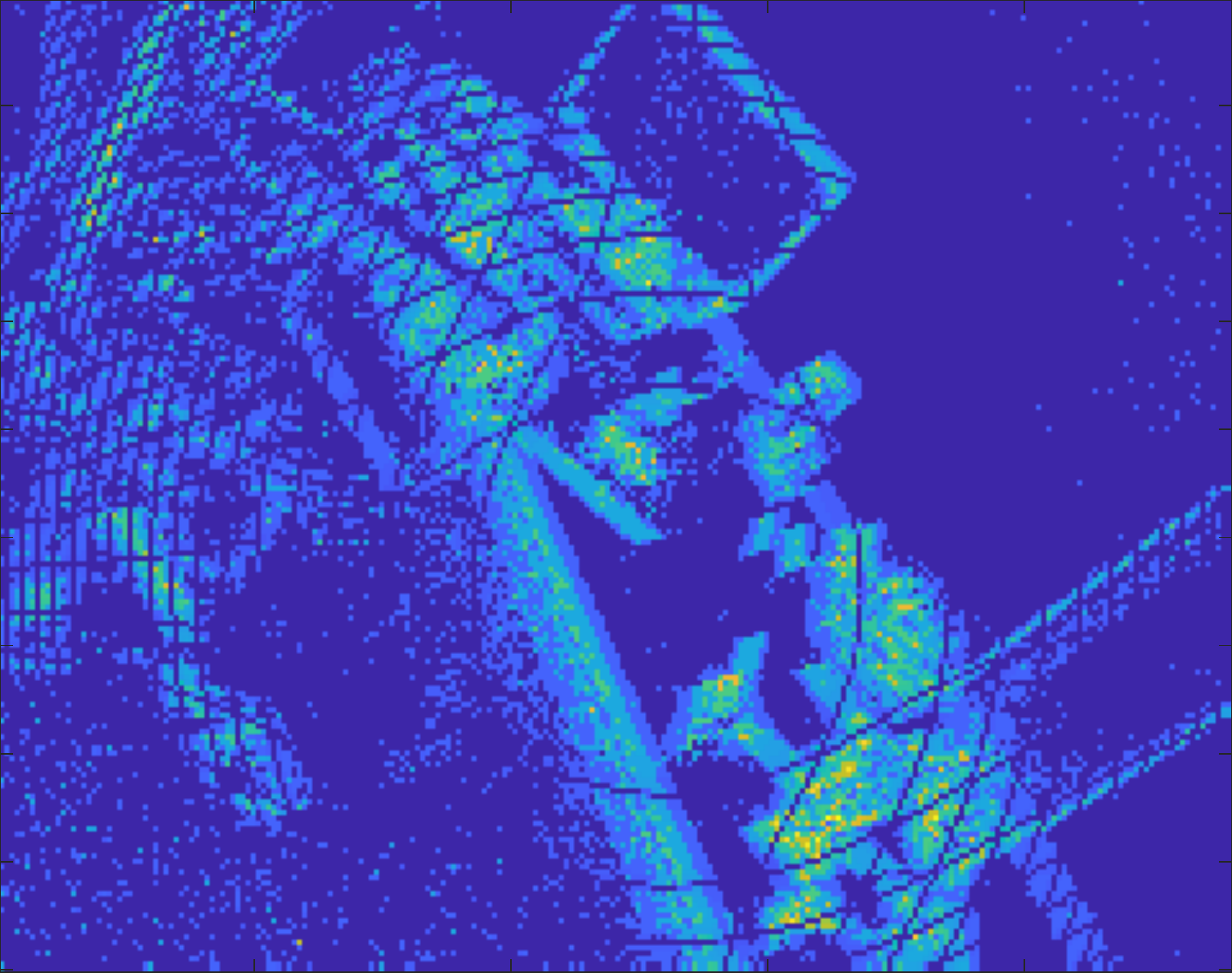}
			& \includegraphics[width=\linewidth]{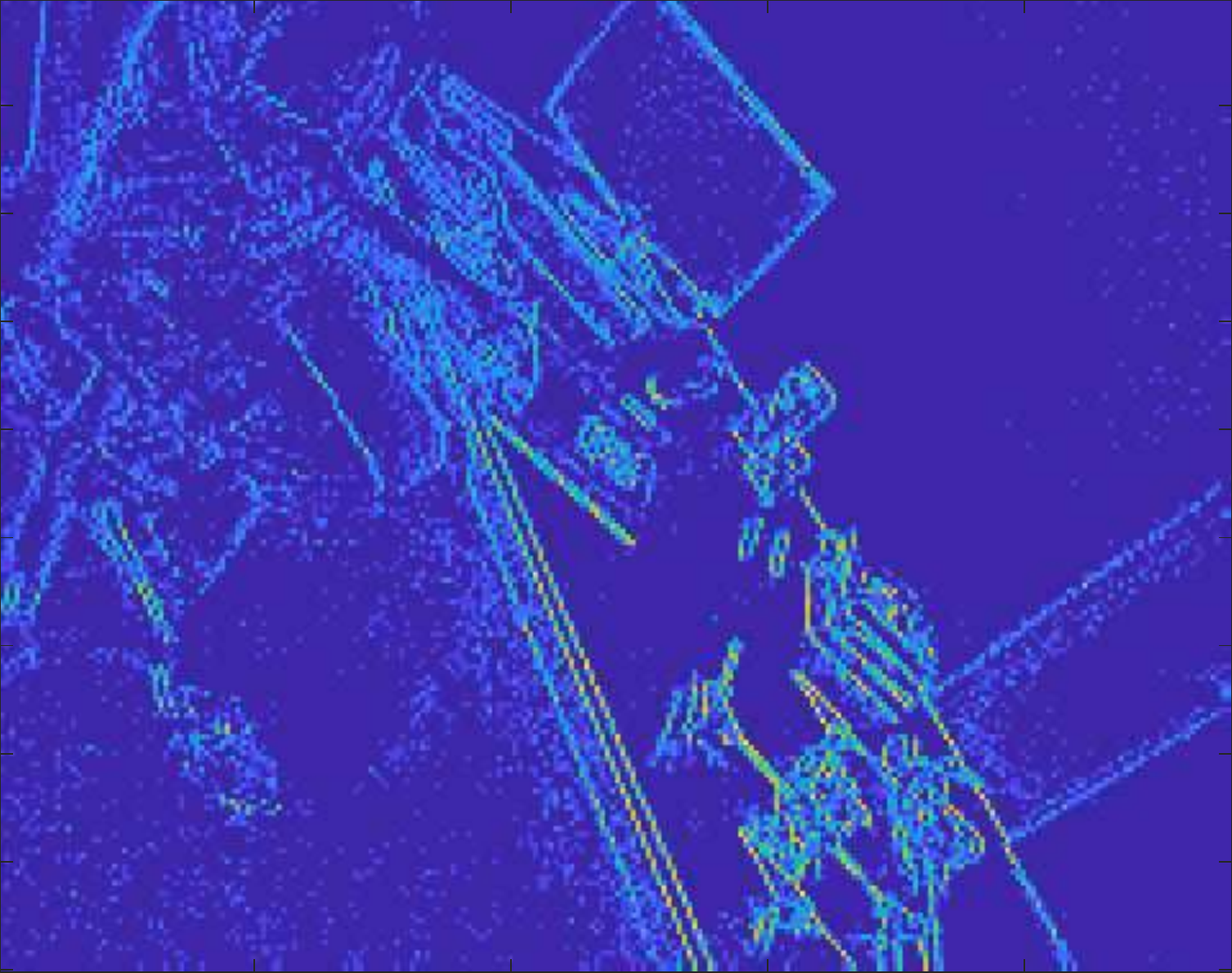}\\
			\hline
			&&&&\\[-.15cm]
			\RotText{~~ Subseq 5}
			& \includegraphics[width=\linewidth]{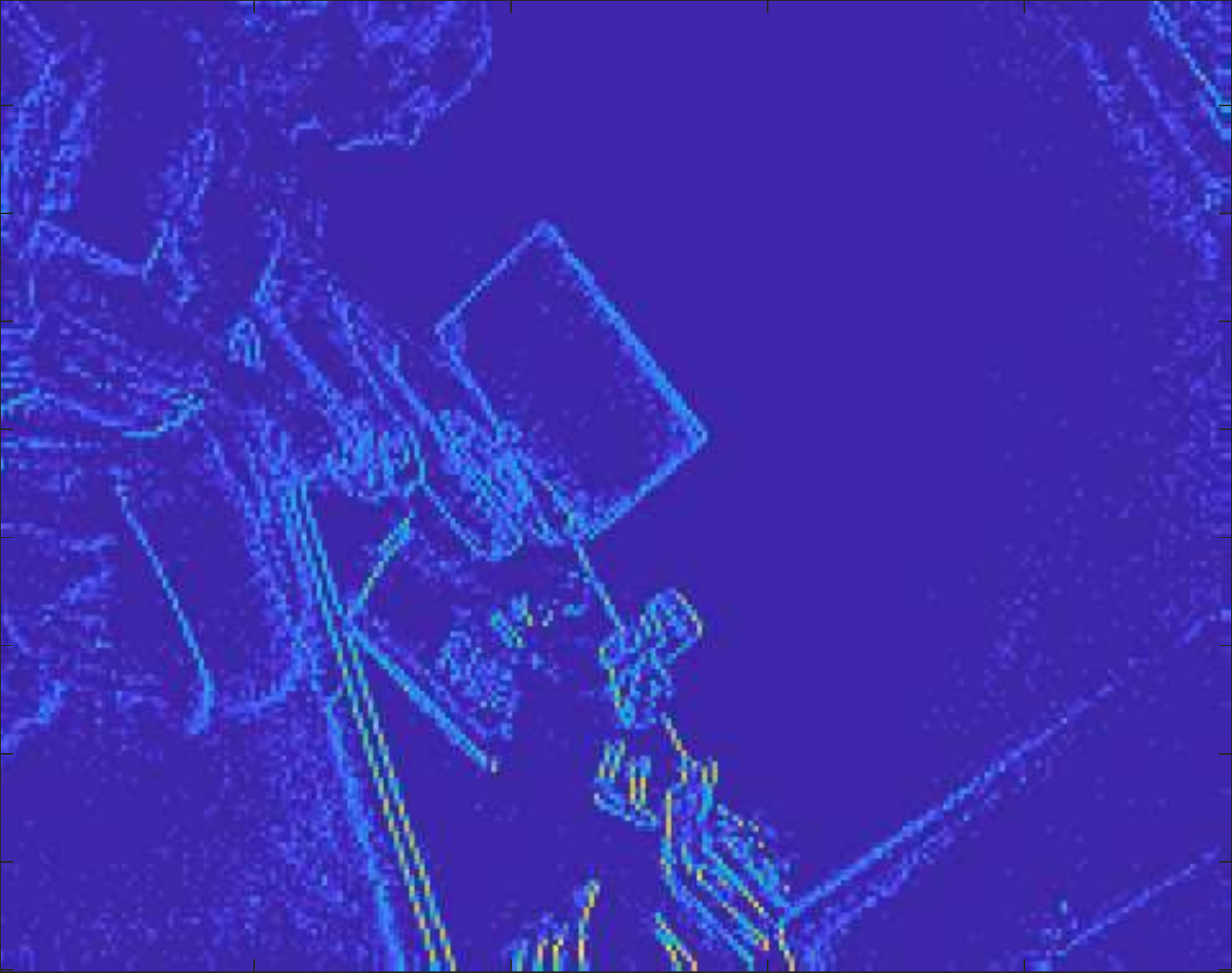}  
			& \includegraphics[width=\linewidth]{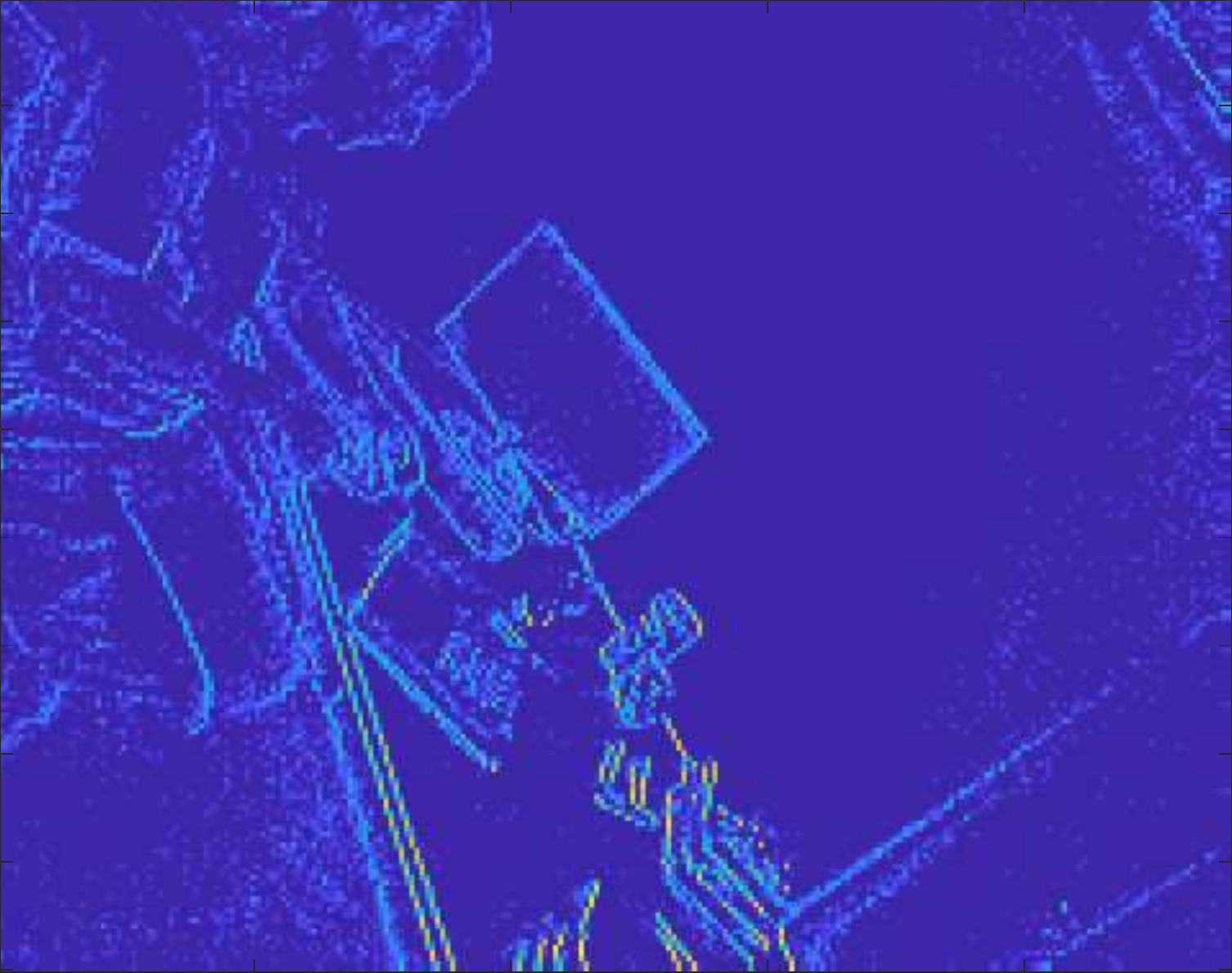}
			& \includegraphics[width=\linewidth]{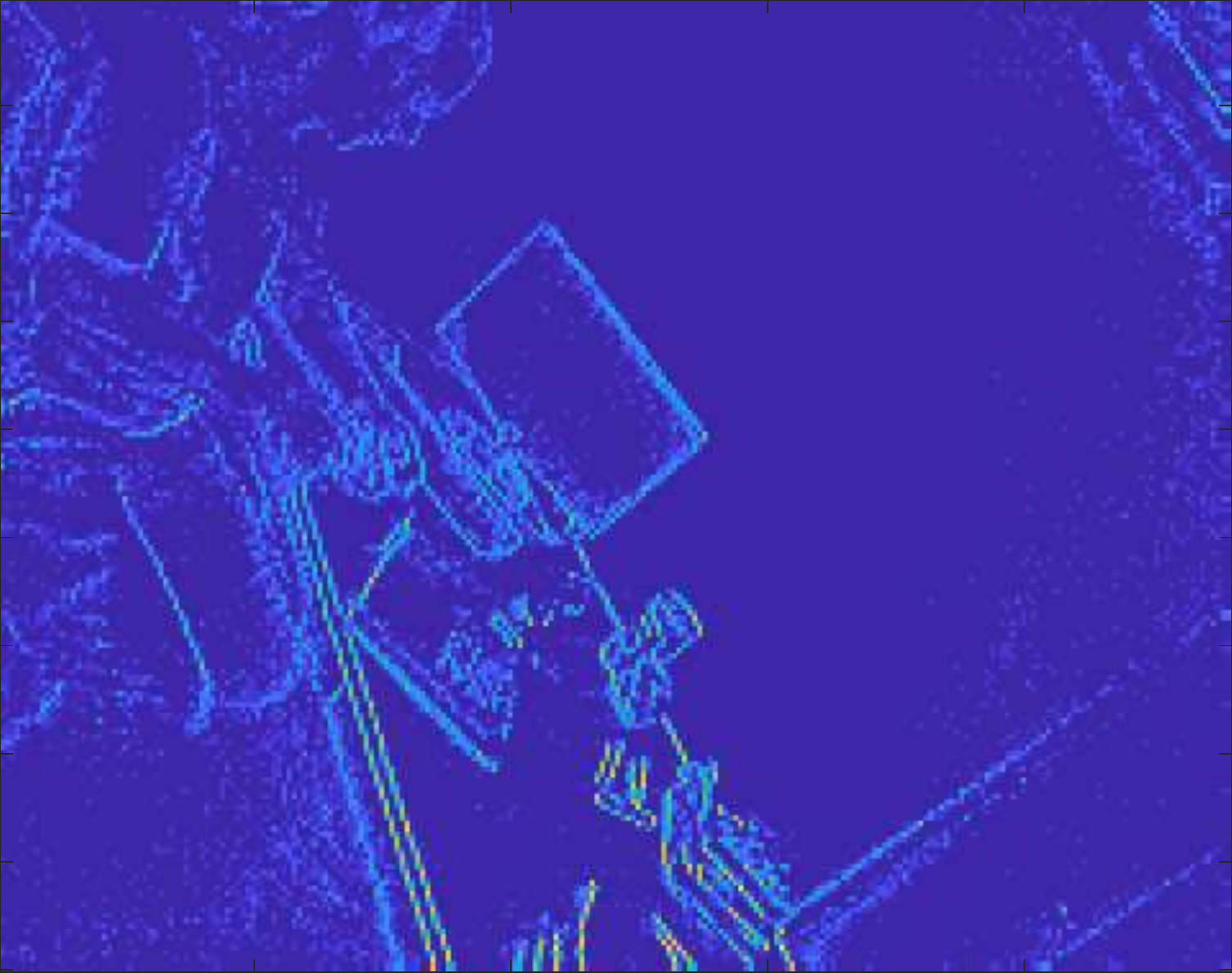}
			& \includegraphics[width=\linewidth]{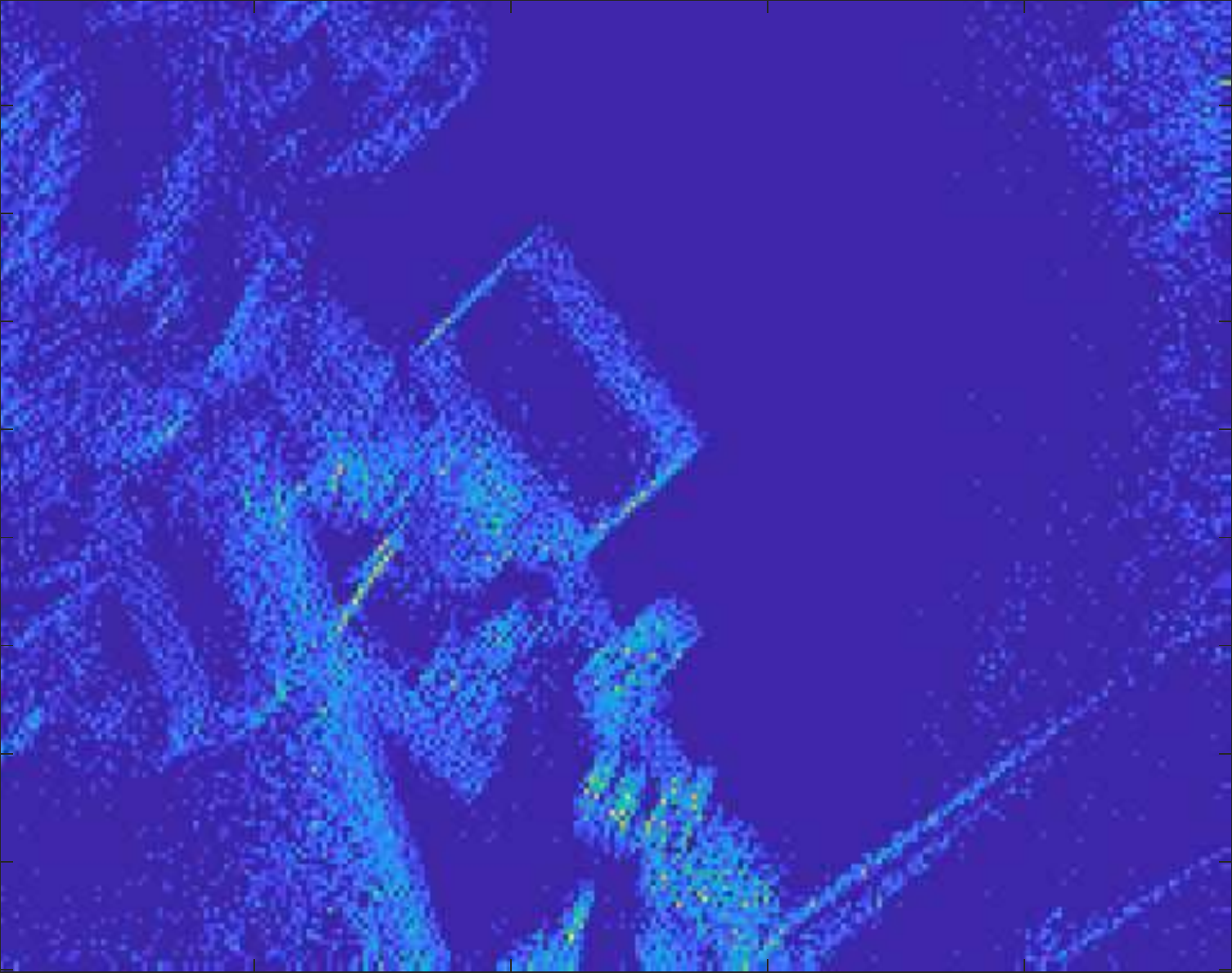}\\
		\end{tabularx}
		\caption{Qualitative results (motion compensated event images) for \emph{dynamic}.}
		\label{fig:dynamic}
	\end{figure}
	
	\begin{figure}
		\renewcommand{\arraystretch}{0.7}
		\setlength{\tabcolsep}{1mm}
		\footnotesize
		\begin{tabularx}{\textwidth} { 
				>{\raggedright\arraybackslash}C{.05} 
				| >{\centering\arraybackslash}X 
				| >{\centering\arraybackslash}X 
				| >{\centering\arraybackslash}X 
				| >{\centering\arraybackslash}X  }
			& CMBnB1
			& CMBnB2
			& CMGD1
			& CMGD2\\
			\RotText{~~ Subseq 1}
			& \includegraphics[width=\linewidth]{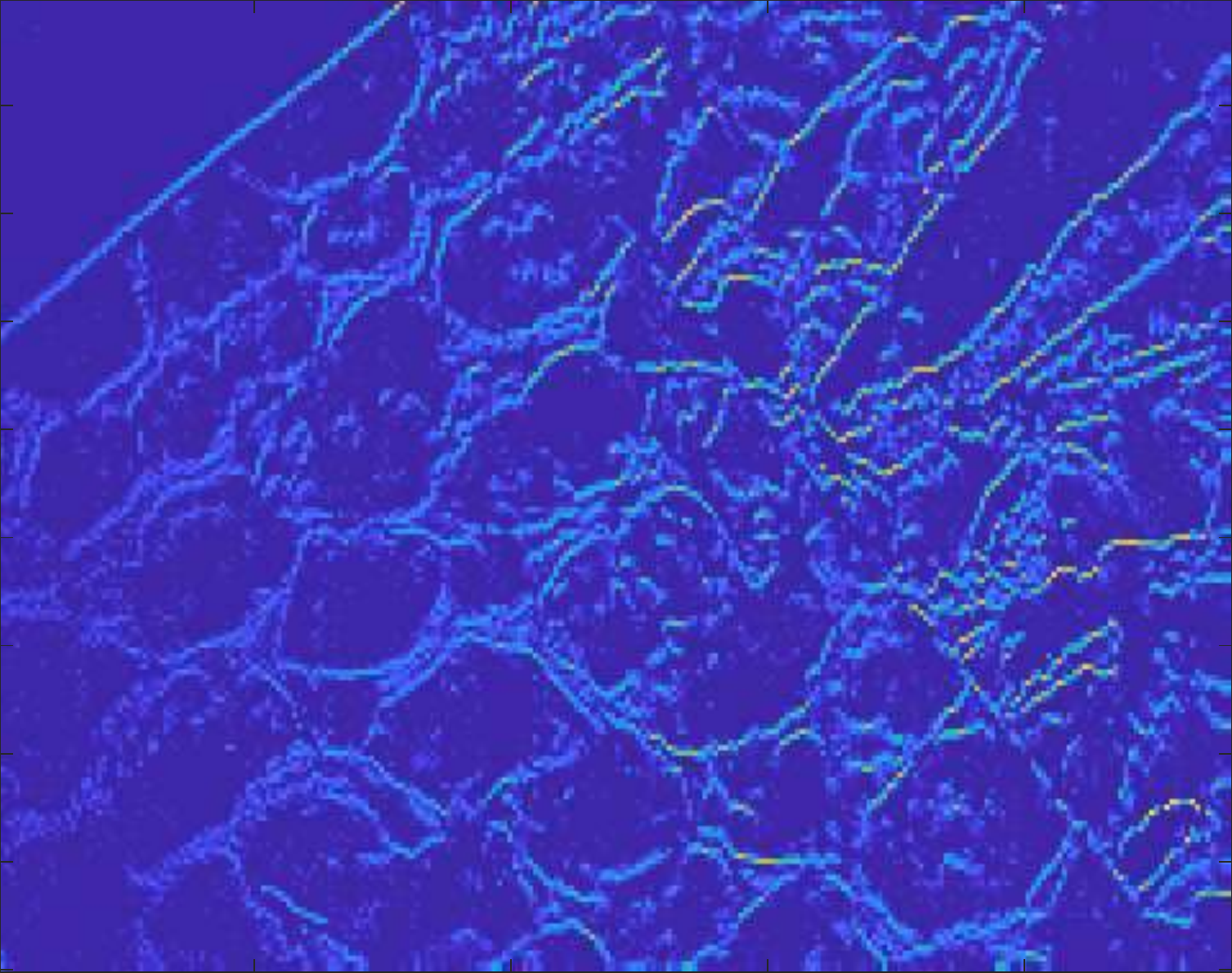}  
			& \includegraphics[width=\linewidth]{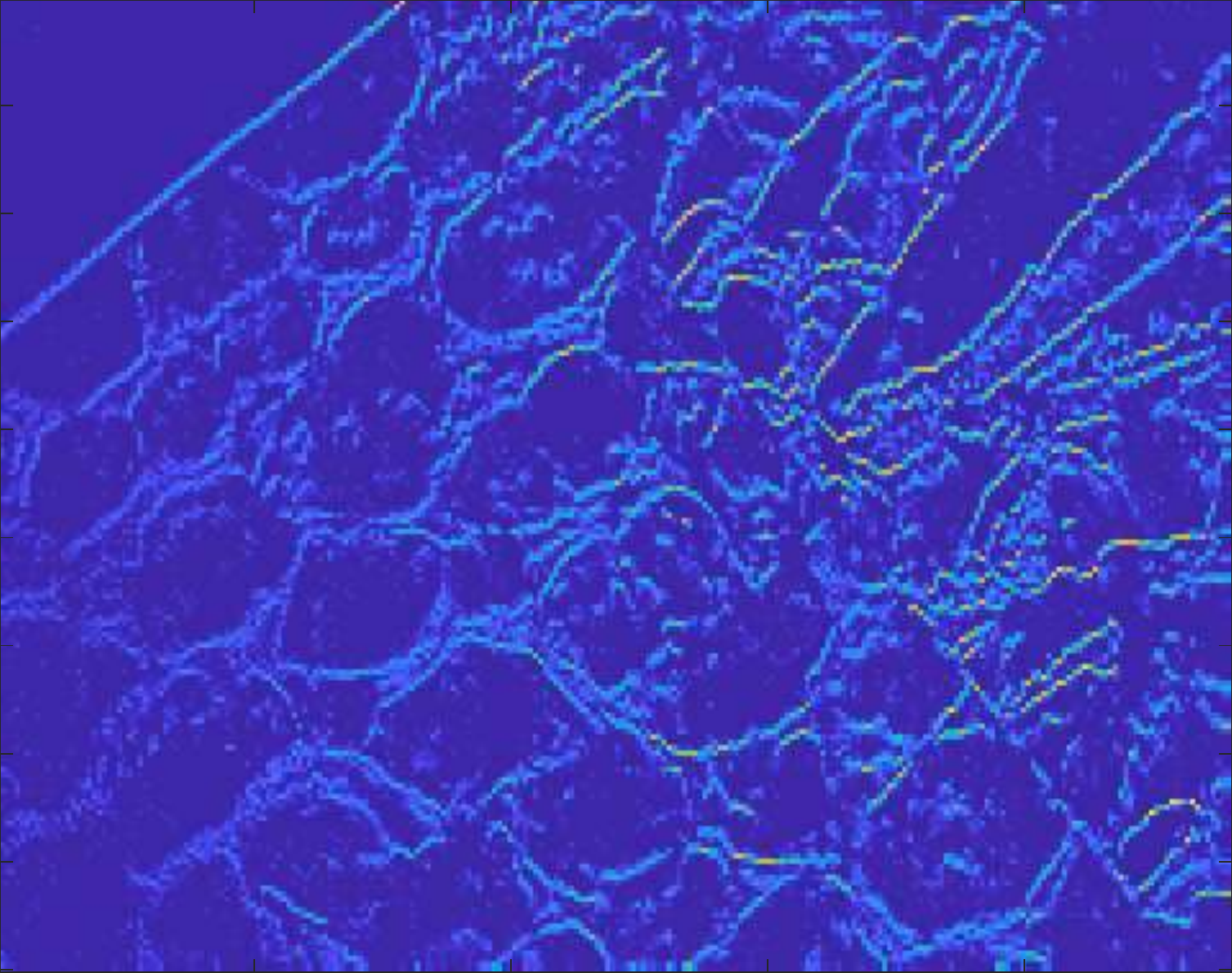}
			& \includegraphics[width=\linewidth]{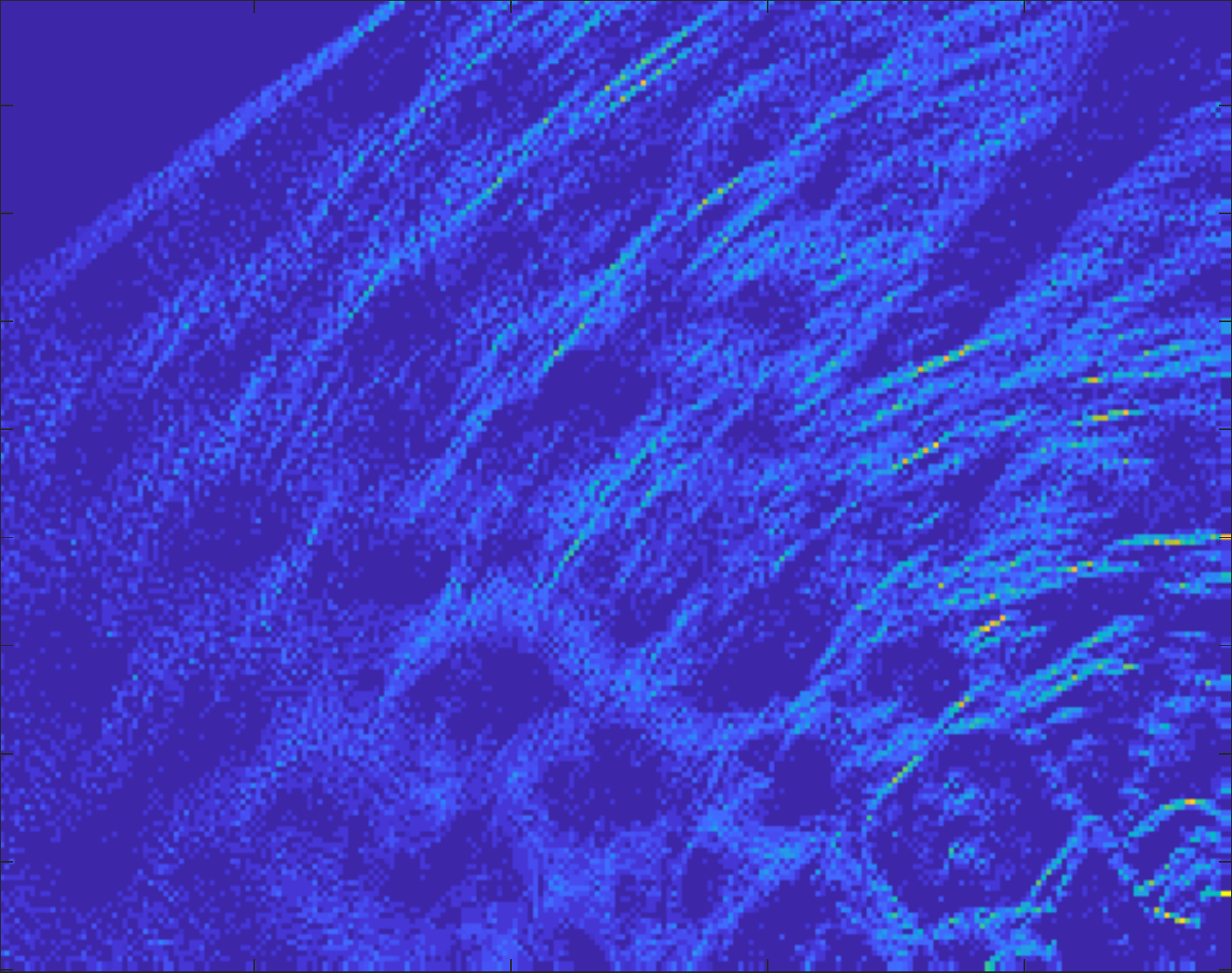}
			& \includegraphics[width=\linewidth]{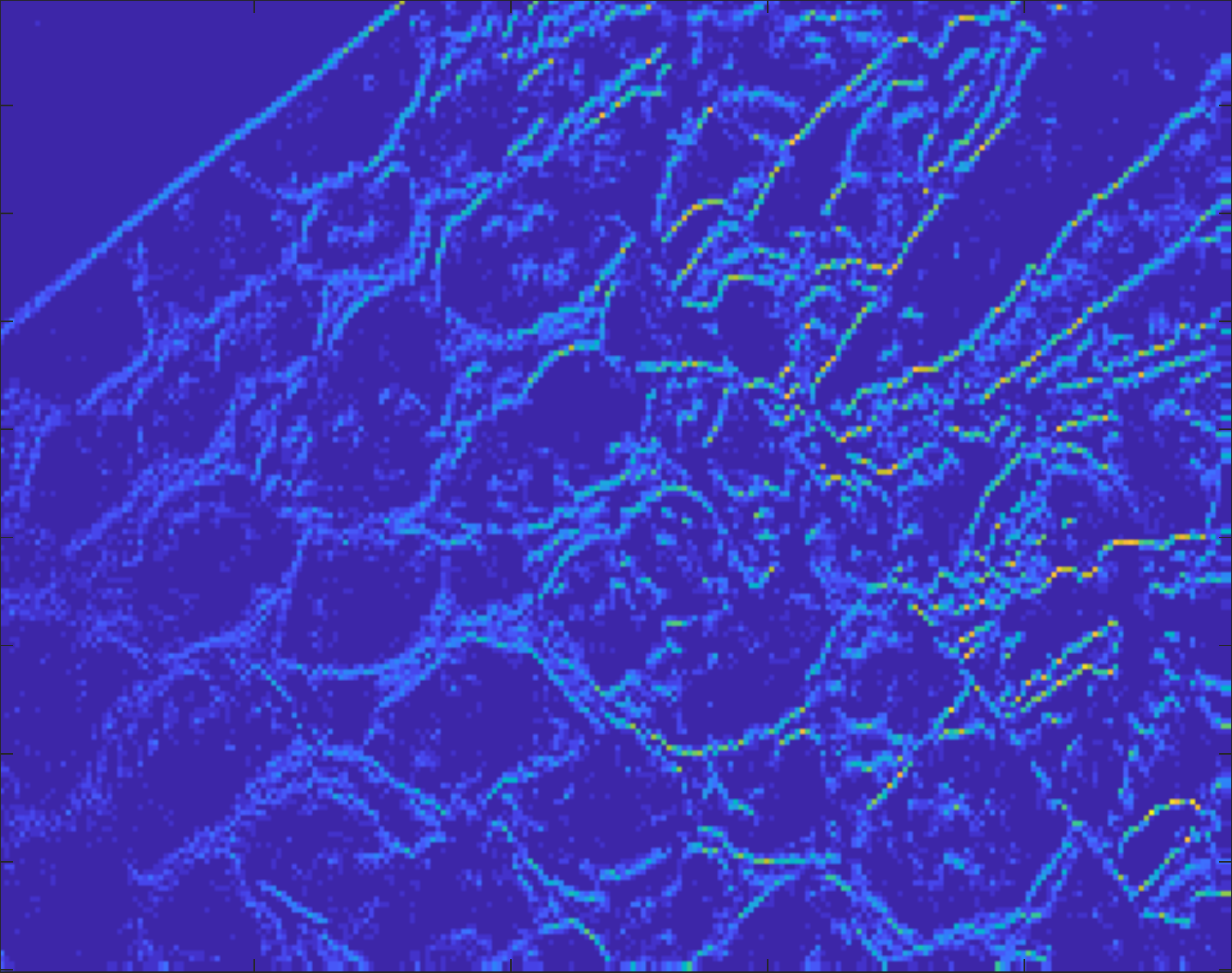}\\
			\hline
			&&&&\\[-.15cm]
			\RotText{~~ Subseq 2}
			& \includegraphics[width=\linewidth]{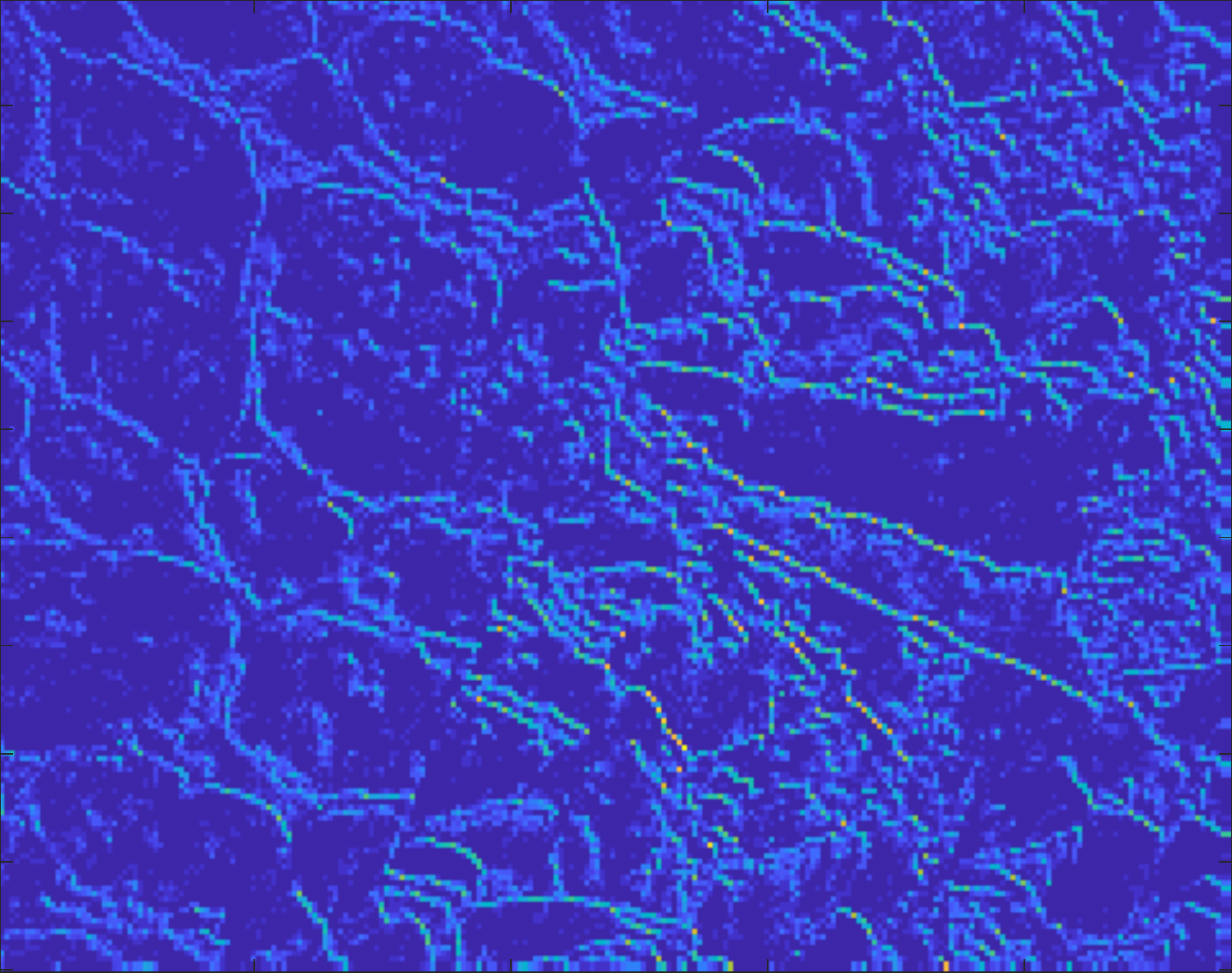}  
			& \includegraphics[width=\linewidth]{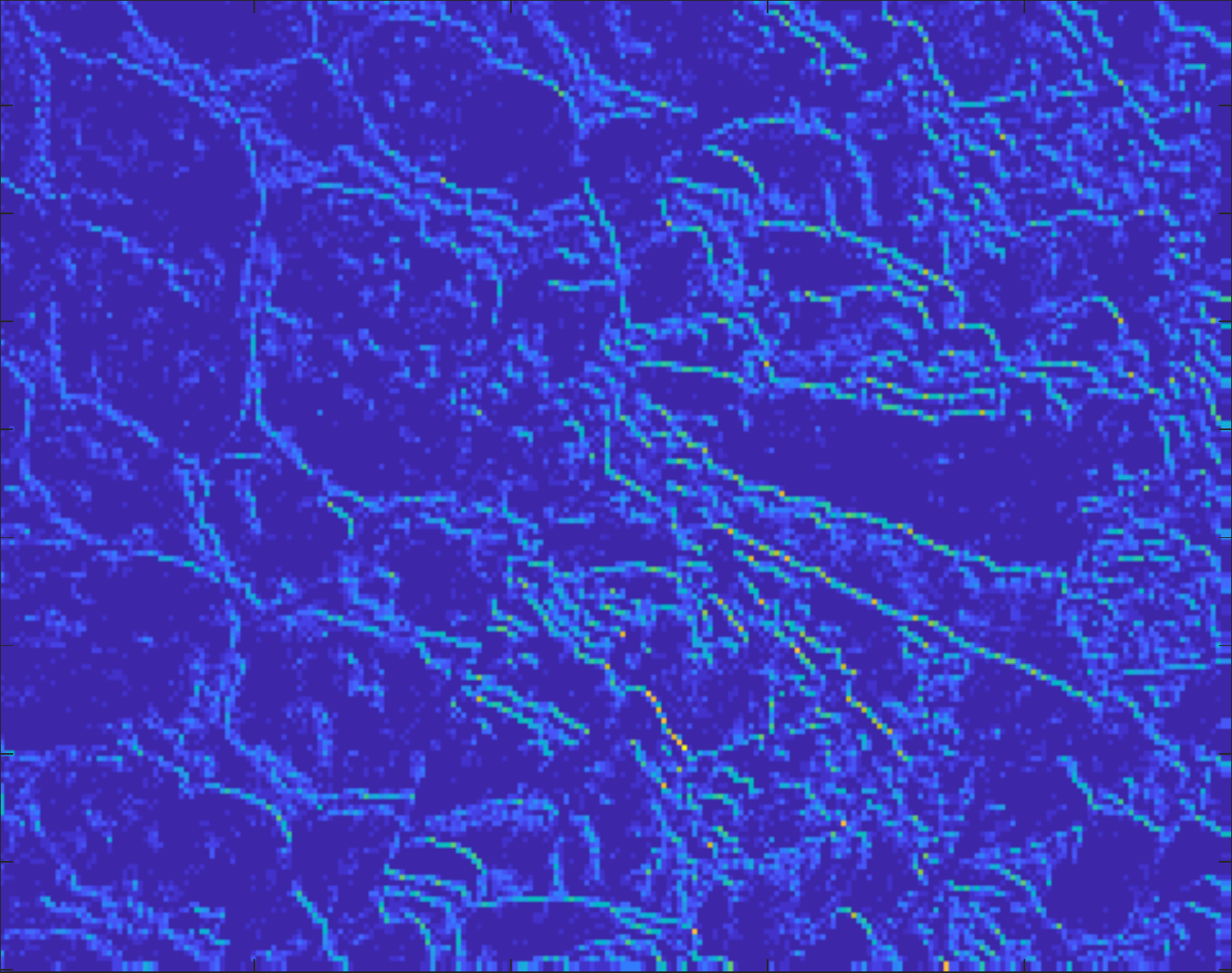}
			& \includegraphics[width=\linewidth]{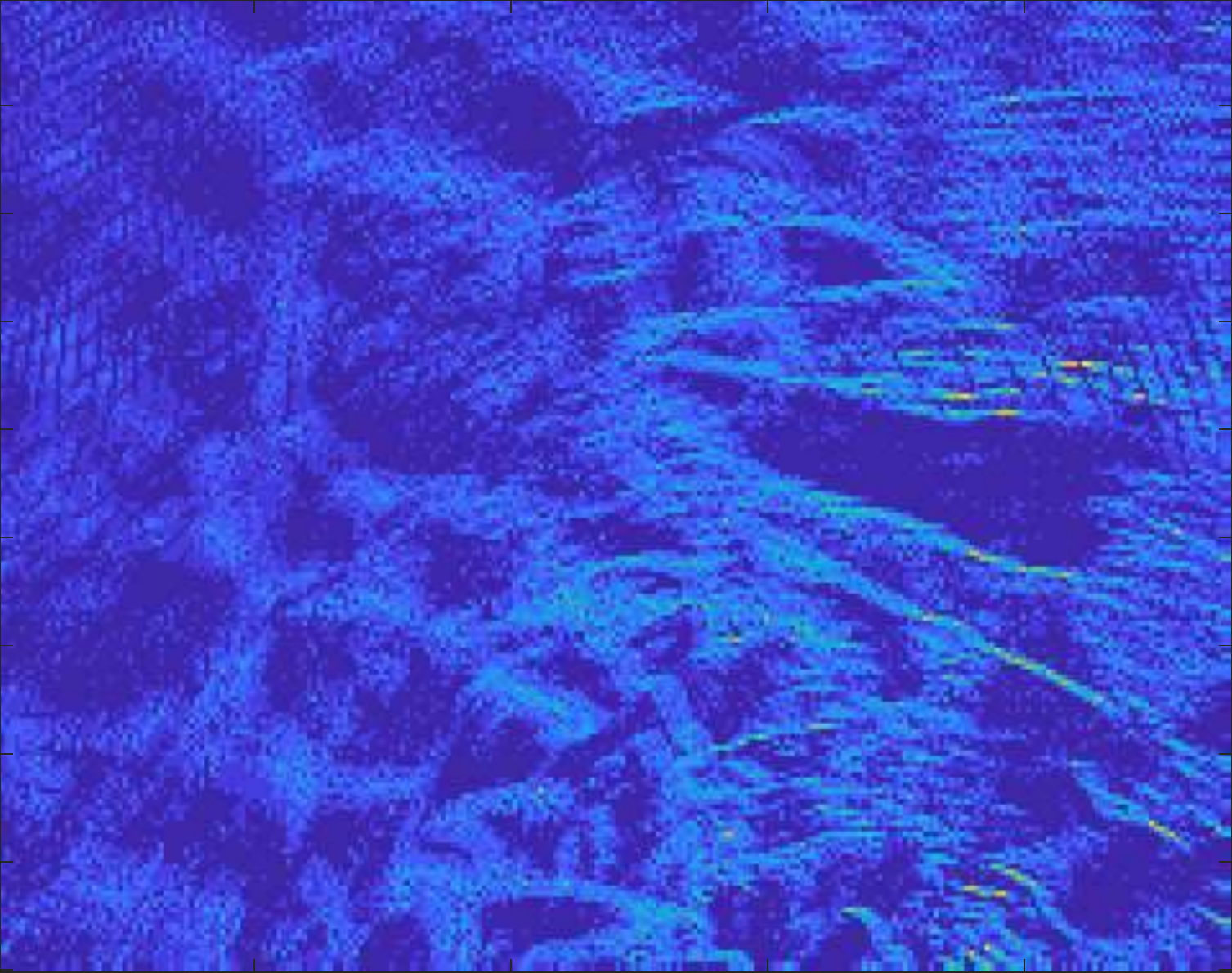}
			& \includegraphics[width=\linewidth]{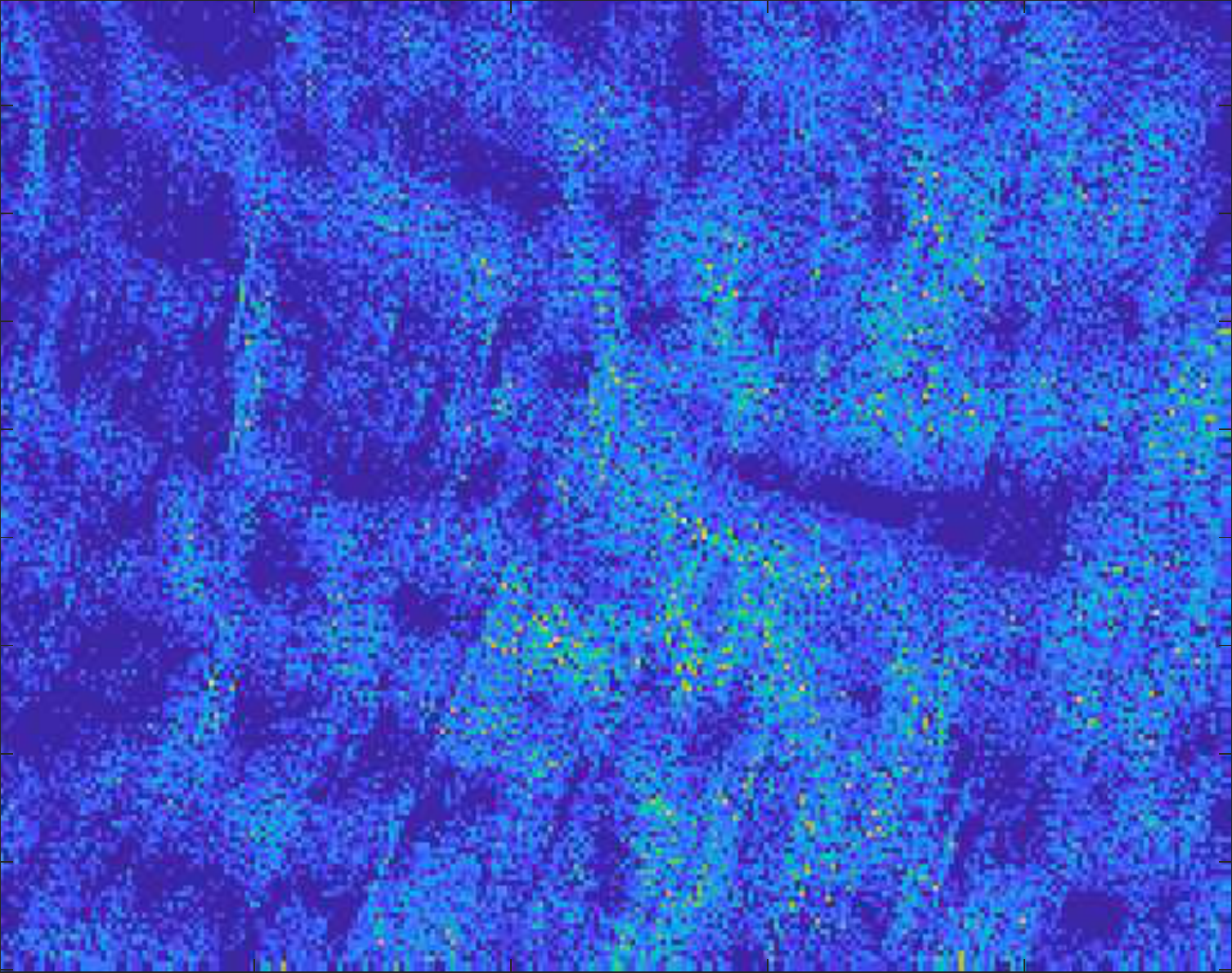}\\
			\hline
			&&&&\\[-.15cm]
			\RotText{~~ Subseq 3}
			& \includegraphics[width=\linewidth]{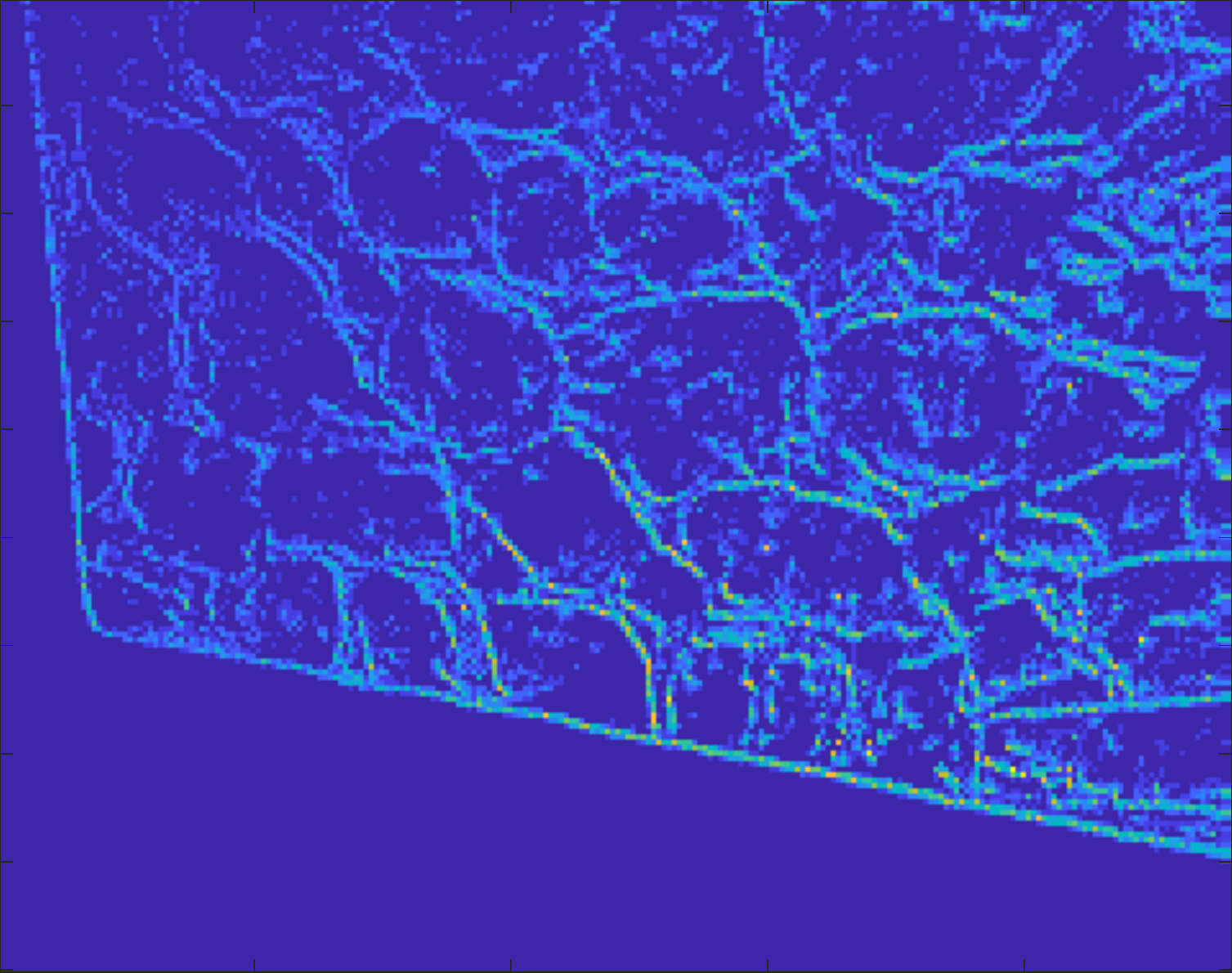}  
			& \includegraphics[width=\linewidth]{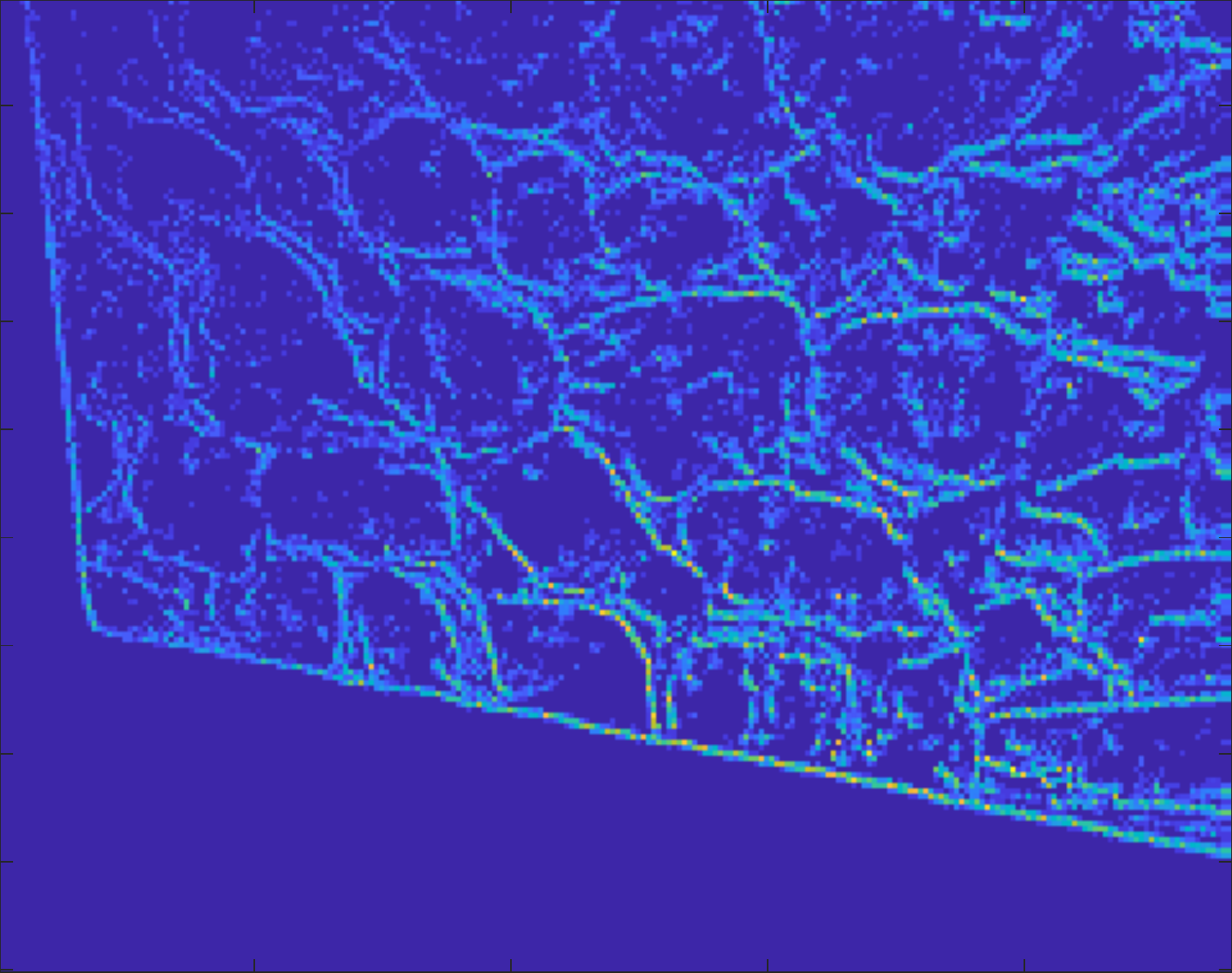}
			& \includegraphics[width=\linewidth]{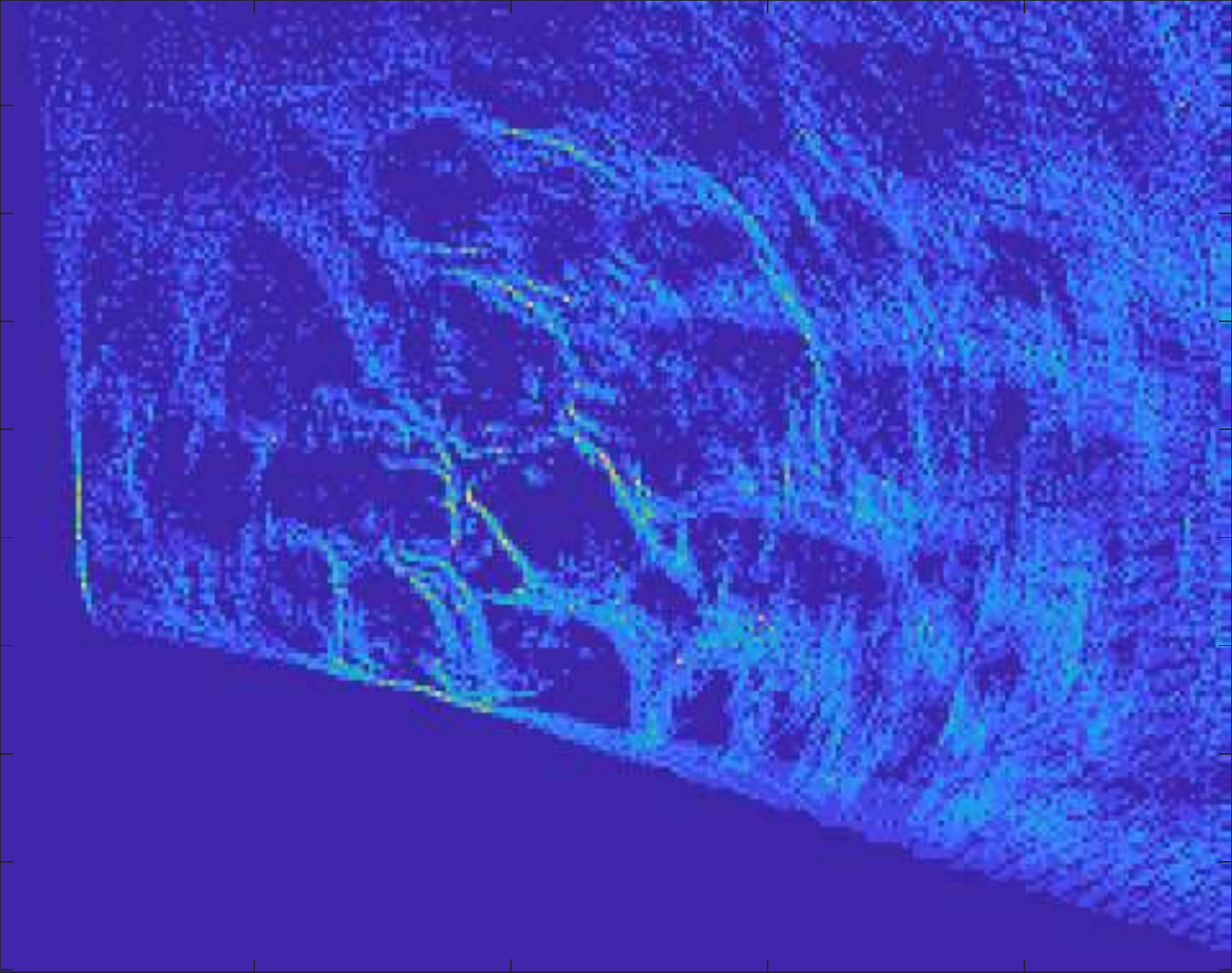}
			& \includegraphics[width=\linewidth]{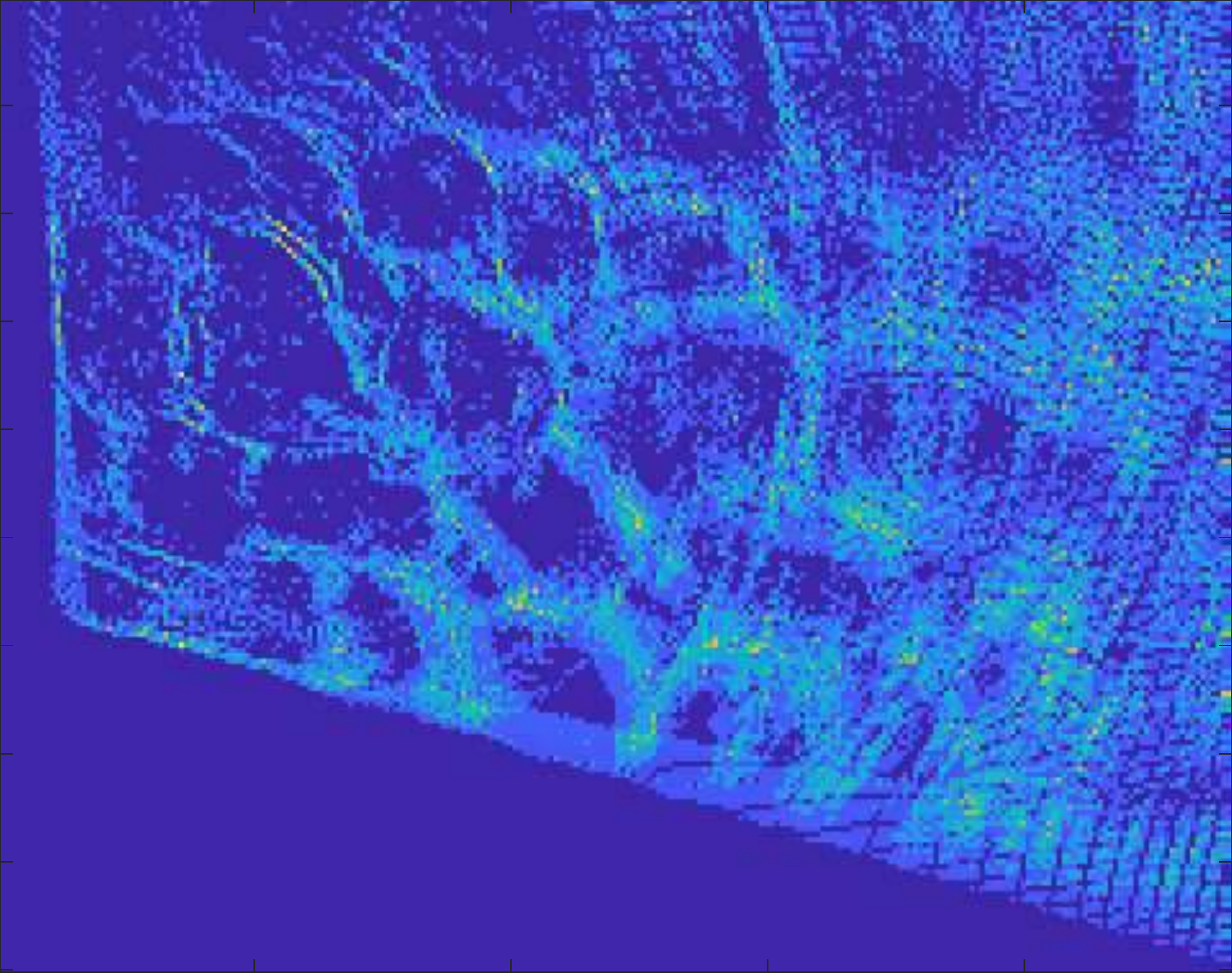}\\
			\hline
			&&&&\\[-.15cm]
			\RotText{~~ Subseq 4}
			& \includegraphics[width=\linewidth]{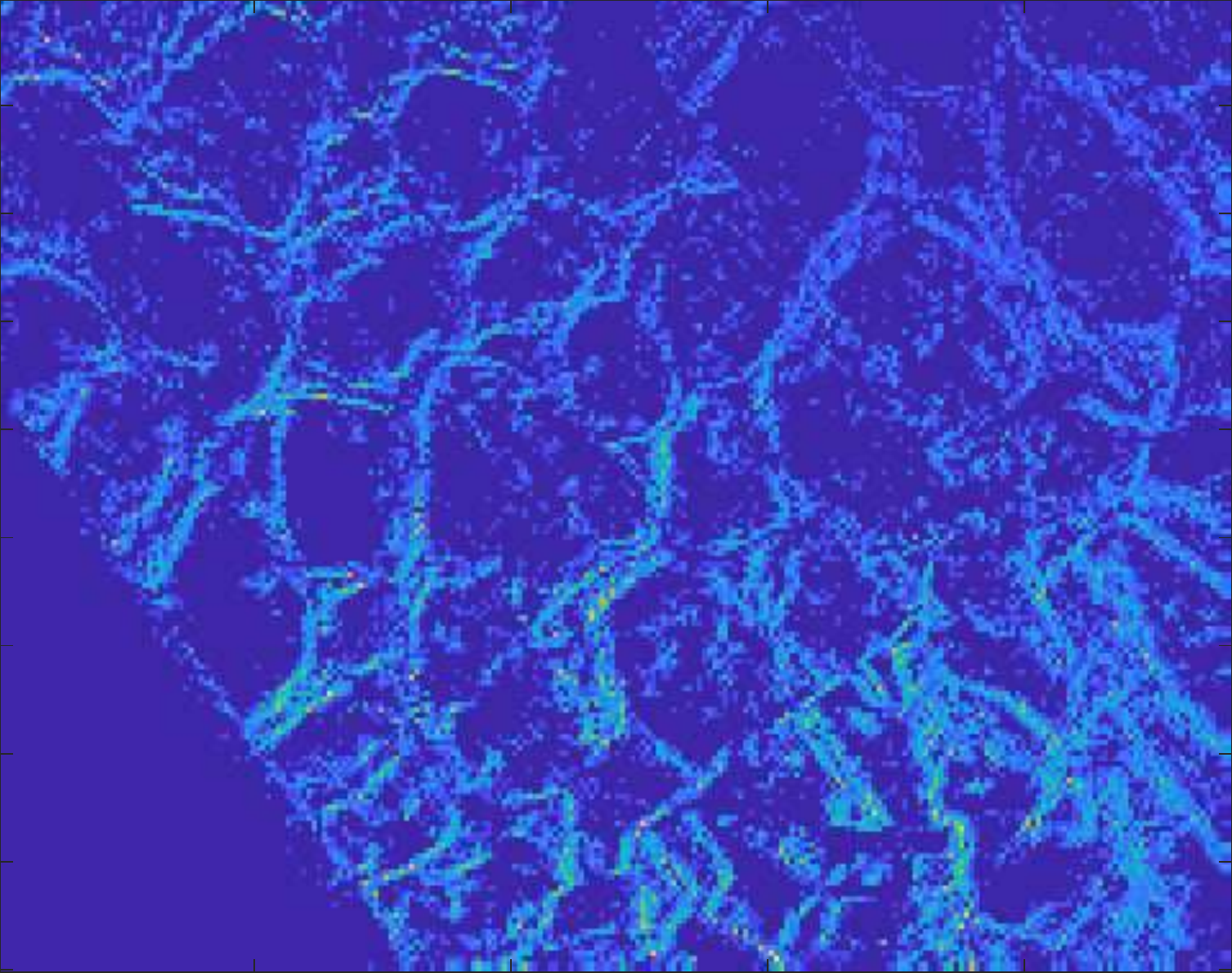}  
			& \includegraphics[width=\linewidth]{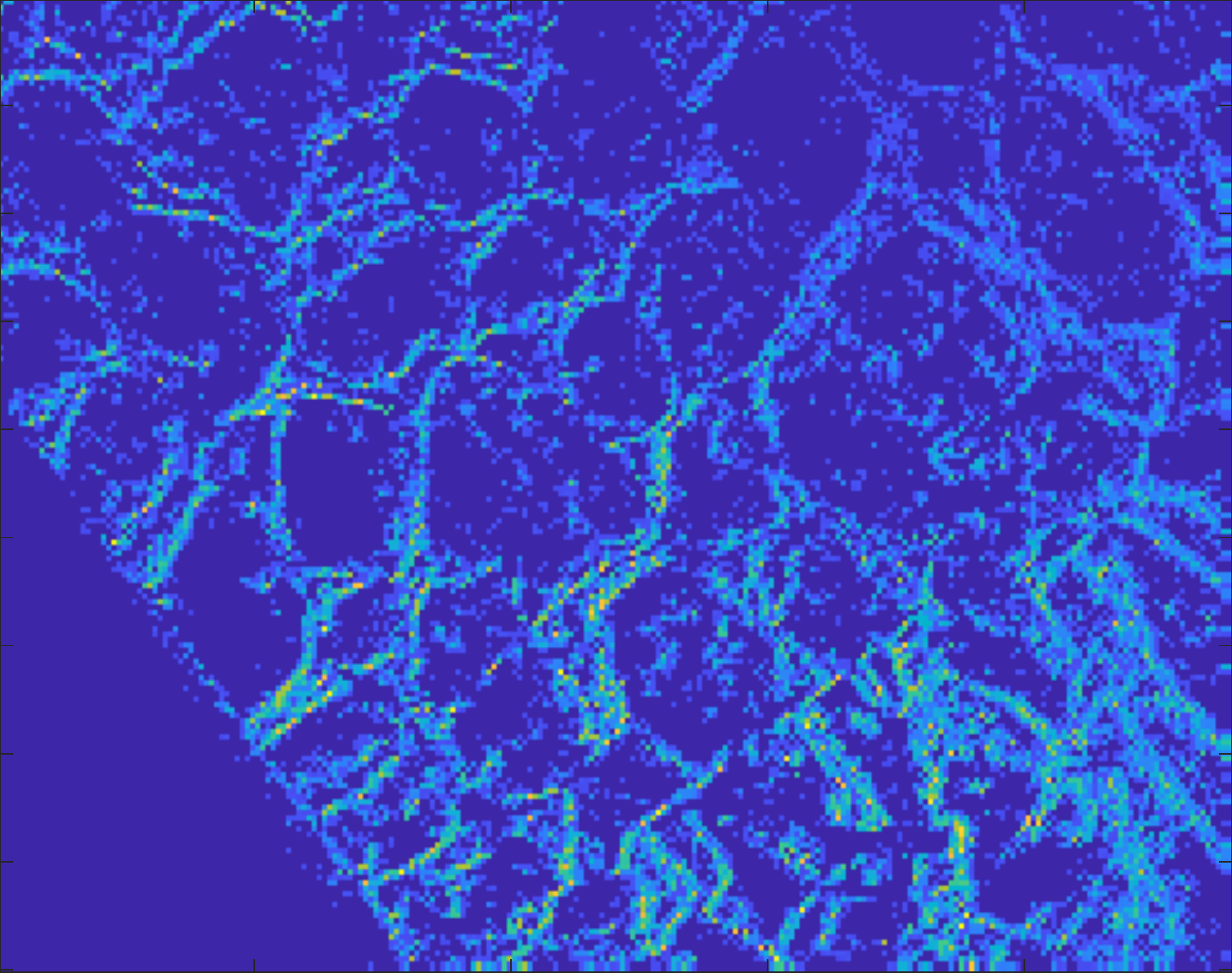}
			& \includegraphics[width=\linewidth]{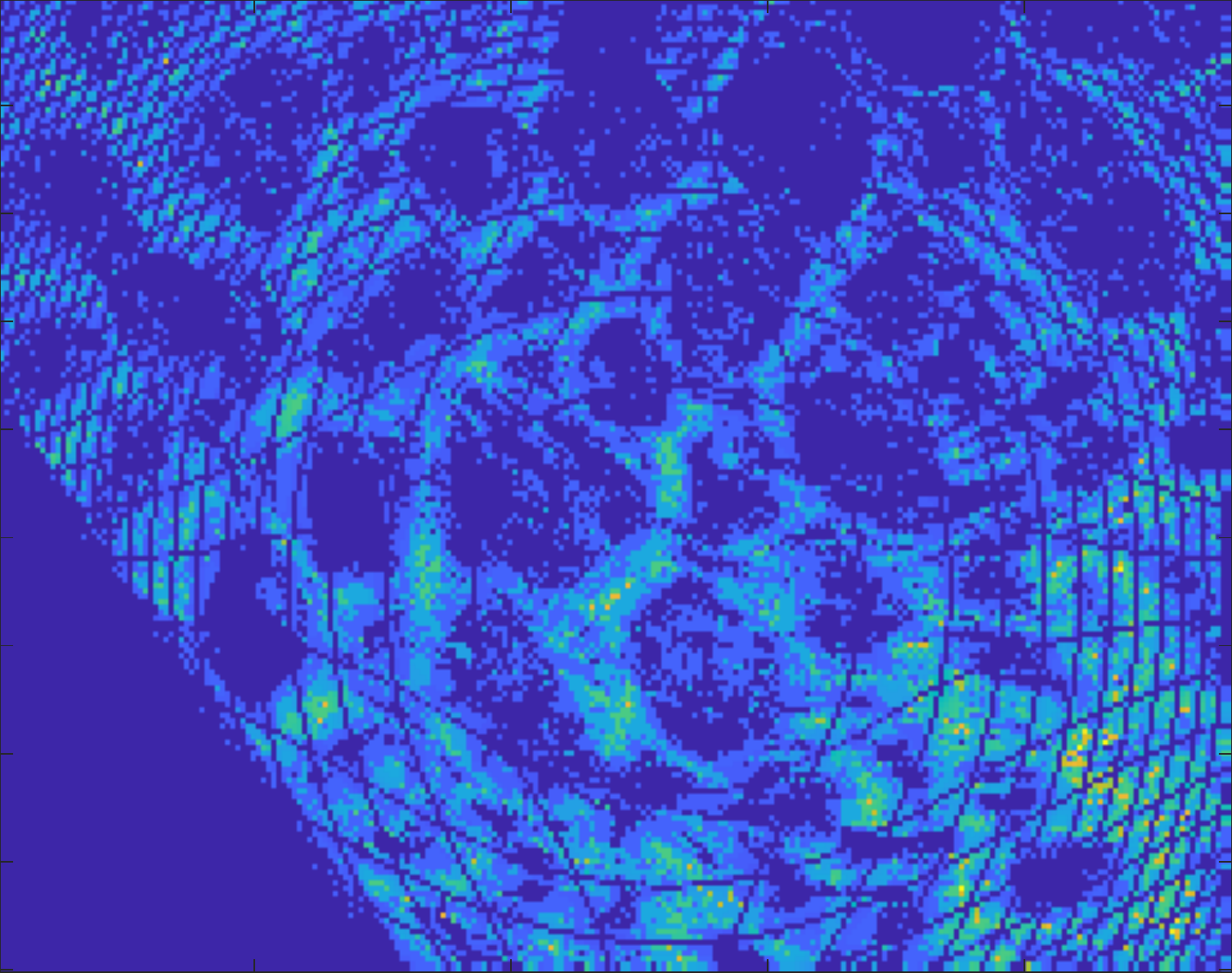}
			& \includegraphics[width=\linewidth]{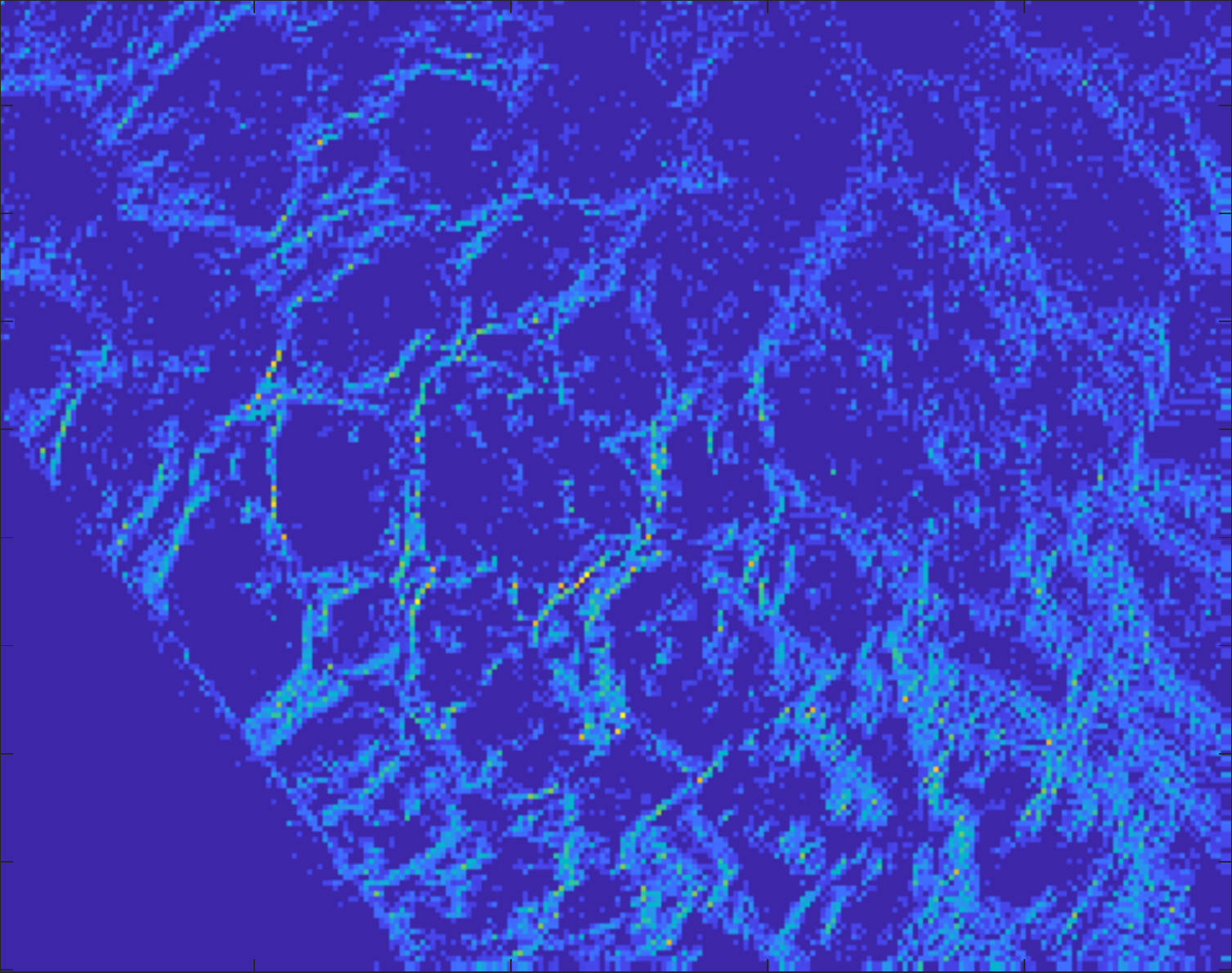}\\
			\hline
			&&&&\\[-.15cm]
			\RotText{~~ Subseq 5}
			& \includegraphics[width=\linewidth]{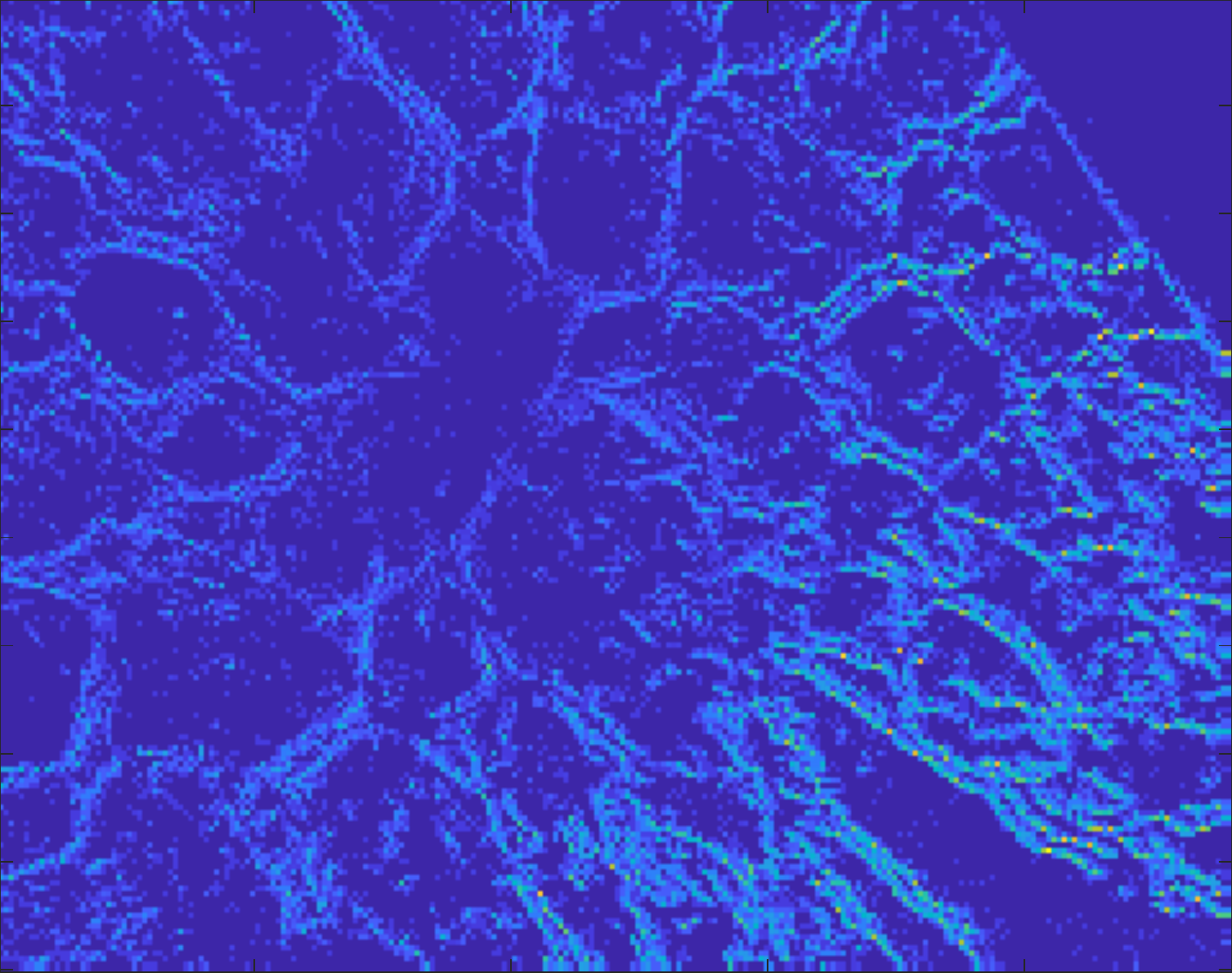}  
			& \includegraphics[width=\linewidth]{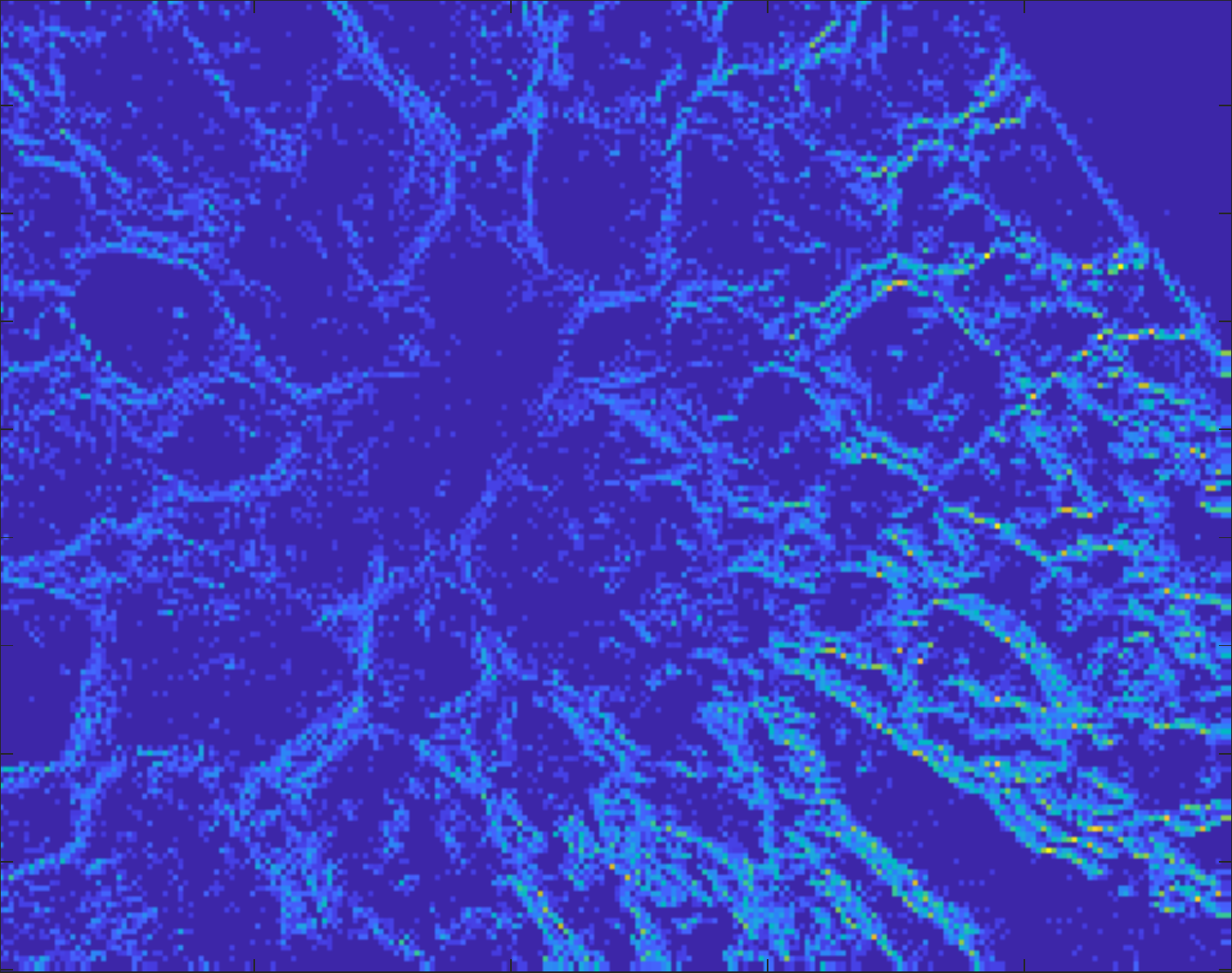}
			& \includegraphics[width=\linewidth]{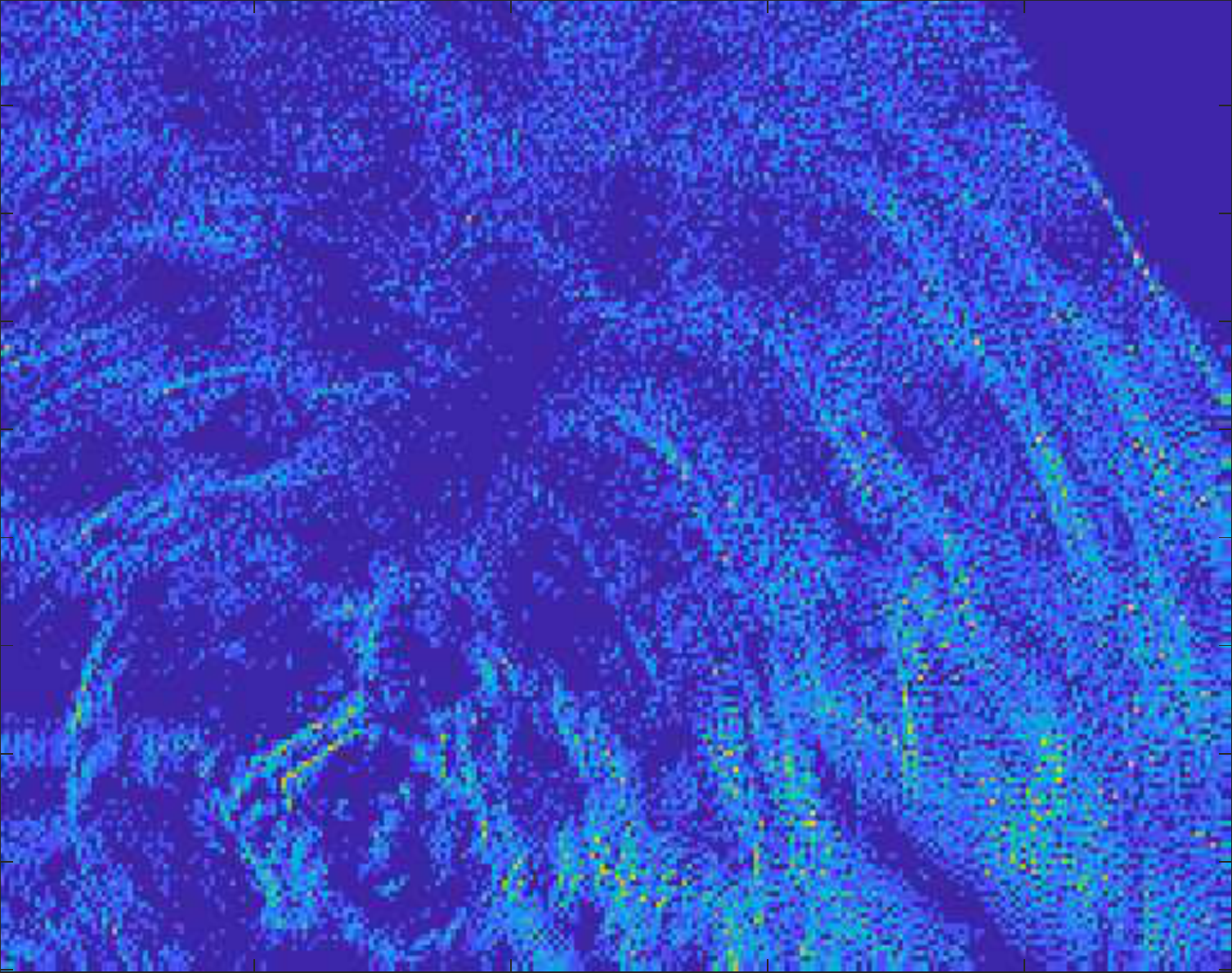}
			& \includegraphics[width=\linewidth]{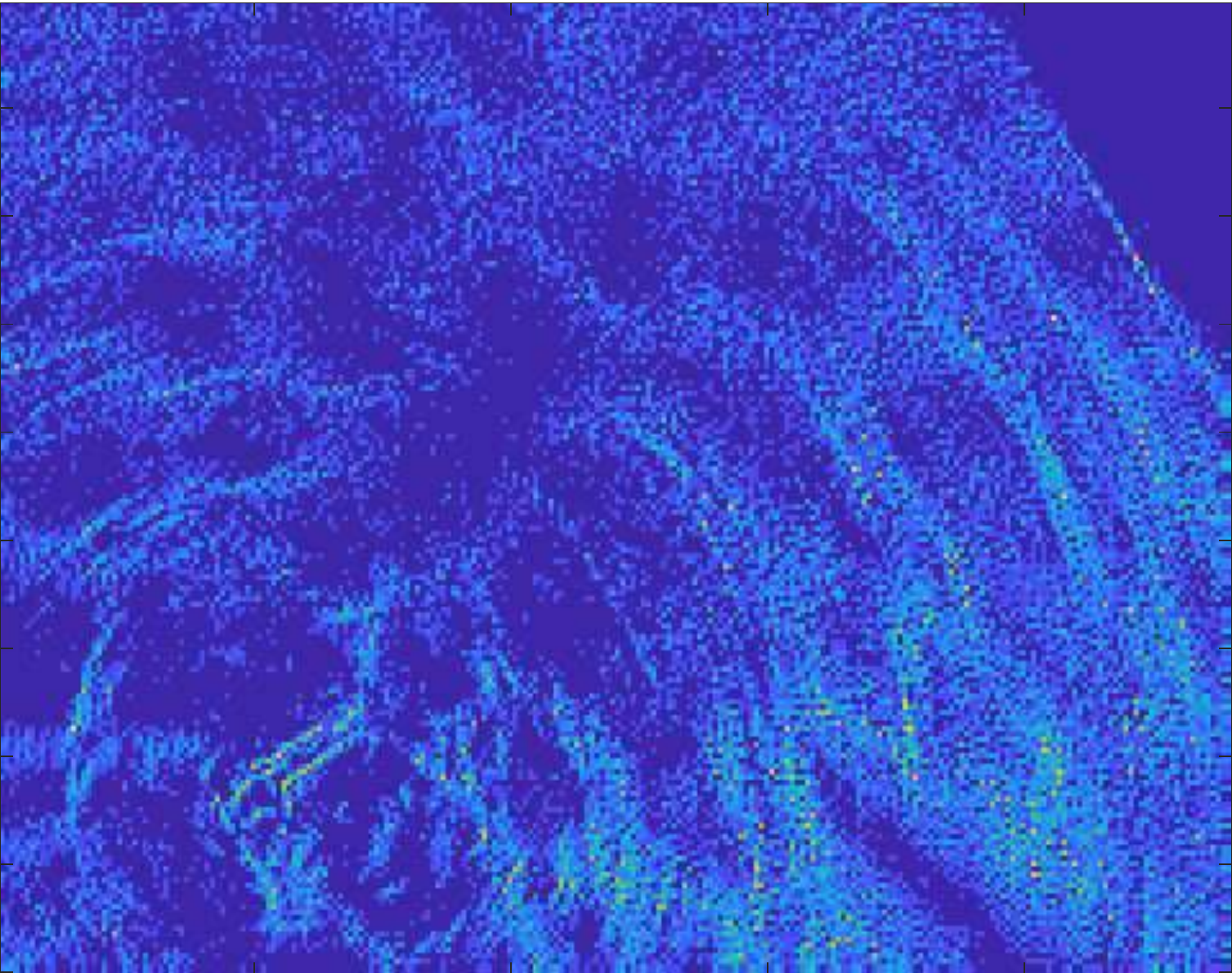}\\
		\end{tabularx}
		\caption{Qualitative results (motion compensated event images) for \emph{poster}.}
		\label{fig:poster}
	\end{figure}
	\clearpage
	\begin{multicols}{2}
		
		{\small
			\bibliographystyle{ieee_fullname}
			\bibliography{references}
		}
	\end{multicols} %

\end{document}